\newtheorem{prop}{Proposition}
\newtheorem{lemma}{Lemma}
\newtheorem{theorem}{Theorem}
\title{Stepsize anything: A unified learning rate schedule for budgeted-iteration training}
\author{
	Anda Tang\textsuperscript{1}
	\quad
	\setcounter{footnote}{0} 
	Yiming Dong\textsuperscript{1}\thanks{Work was done during an internship at ByteDance Seed.}
	\quad
	Yutao Zeng\textsuperscript{2}
	\quad
	Xun Zhou\textsuperscript{2}
	\quad
	Zhouchen Lin\textsuperscript{1,3,4}\thanks{Corresponding author.}
	\\
	$^1$State Key Lab of General AI, School of Intelligence Science and Technology, Peking University
	\\
	$^2$ByteDance Seed\\
	$^3$Institute for Artificial Intelligence, Peking University
	\\
	$^4$Pazhou Laboratory (Huangpu), Guangzhou, Guangdong, China\\
	\texttt{tanganda@pku.edu.cn, yimingdong\underline{\phantom{n}}ml@outlook.com,} 
	\\
	\texttt{yutao.zeng@outlook.com, zhouxun@bytedance.com, zlin@pku.edu.cn}}
\begin{document}

\maketitle

\begin{abstract}
  The expanding computational costs and limited resources underscore the critical need for budgeted-iteration training, which aims to achieve optimal learning within predetermined iteration budgets. 
  While learning rate schedules fundamentally govern the performance of different networks and tasks, particularly in budgeted-iteration scenarios, their design remains largely heuristic, lacking theoretical foundations. 
  In addition, the optimal learning rate schedule requires extensive trial-and-error selection, making the training process inefficient.    
  In this work, we propose the Unified Budget-Aware (UBA) schedule, a theoretically grounded learning rate schedule that consistently outperforms commonly-used schedules among diverse architectures and tasks under different constrained training budgets.
  First, we bridge the gap by constructing a novel training budget-aware optimization framework, which explicitly accounts for the robustness to landscape curvature variations. % induced by data sampling. 
  From this framework, we derive the UBA schedule, controlled by a single hyper-parameter $\varphi$ that provides a trade-off between flexibility and simplicity, eliminating the need for per-network numerical optimization. 
  Moreover, we establish a theoretical connection between $\varphi$ and the condition number, adding interpretation and justification to our approach. 
  Besides, we prove the convergence for different values of $\varphi$. 
  We offer practical guidelines for its selection via theoretical analysis and empirical results. 
  Extensive experimental results show that UBA \textit{consistently surpasses} the commonly-used schedules across diverse vision and language tasks, spanning network architectures (e.g., ResNet, OLMo) and scales, under different training-iteration budgets.
\end{abstract}

\section{Introduction}

Deep learning has achieved remarkable success across a wide range of domains, including computer vision and natural language processing. 
However, despite continual advancements in hardware technologies \cite{sze2017hardware,zaman2021custom}, the training cost of neural networks has increased dramatically due to the growing scale of models and datasets \cite{brown2020language,touvron2021training,chowdhery2023palm,touvron2023llama}. 
As a result, resource constraints, including computational power, memory, energy consumption, and time budgets, are emerging as significant bottlenecks in the training process \cite{shen2023efficient,zhu2024survey}. 
These challenges highlight the pressing need for budgeted training, which aims to achieve optimal model performance under fixed hardware and limited time. 

While existing budgeted training studies broadly address resource efficiency, a critical yet under-explored direction within budgeted training is achieving the best possible model performance under strictly fixed iteration constraints. 
This scenario is common and practically significant where practitioners work under limited computational or time budgets \cite{li2019budgeted}, and in extreme cases, models have to be completed within a few training iterations due to resource exhaustion. To formalize this specific research problem, we introduce the term `budgeted-iteration training' , distinguishing it from the broader scope of budgeted training. 

Budgeted-iteration training has received growing attention in research, given its significant real-world applicability. 
Several studies have developed relevant techniques that align with its goals. 
Smith et al. \cite{smith2017cyclical}  propose the cyclical learning rate schedule (CLR),  improving accuracy in fewer iterations without tuning \cite{smith2019super}. 
Li et al. \cite{li2019budgeted} introduce budget-aware adaptations for existing learning rate schedules. 
Chen et al. \cite{chen2022rex} propose a novel learning rate schedule called Reflected Exponential (REX). 
These approaches are primarily based on learning rate designing.
Learning rate scheduling highlights key advantages: (i) It plays a critical role in general training of diverse neural architectures across tasks. (ii) It has demonstrated suitable and competitive in fixed training iteration budgets \cite{hagele2024scaling}. 
(iii) It is plug-and-play, requiring minimal adjustments to the underlying model architecture, which makes them easily adaptable to various deep learning frameworks. 
Leveraging these advantages, we adopt the learning rate design approach for budgeted-iteration training. 

Despite their advantages, most learning rate schedules, whether tailored for budgeted-iteration training or standard training performance, are still heuristic and lack rigorous theoretical grounding. 
In addition, existing schedules typically rely on manually designed rules or empirical tuning. Consequently, selecting an optimal schedule often involves extensive trial-and-error, incurring substantial cost in terms of time and computation \cite{luo2025multi}.  \\\par
{\centering  \textit{A natural question arises: Does there exist a theoretically grounded, unified schedule that eliminates heuristic selection while maintaining robust performance across tasks, networks, scales and training budgets?} \par
} 
In this paper, we provide an affirmative answer by proposing a theoretically grounded schedule. 
The proposed learning rate schedule should consistently outperforms existing schedules among diverse architectures and tasks under different constrained training budgets. 
By doing so, it avoids choosing suitable learning rate schedule after multiple trials for network training. 

To achieve this, we first bridge the gap by constructing a unified budget-aware training optimization framework, which incorporates the robustness to landscape curvature variations induced by data distribution, sampling, network architectures and optimization. 
The framework employs a min-max optimization approach to guarantee performance under worst-case conditions, since the minimization of learning rate is operated on an upper bound.
Then we obtain numerical solutions by gradient projection methods. 
To eliminate the need for repeated numerical optimization when applying our method to different networks, we propose a universal parametric function that approximates numerical solutions. We nominate the resulting schedule Unified Budget-Aware (UBA) schedule.  It requires tuning only a single hyper-parameter $\varphi$, reducing the overhead of per-network numerical optimization. 
Moreover, we establish a theoretical connection between $\varphi$ and the condition number, adding interpretation and optimization difficulty-aware theoretical grounding. 
Besides, we prove the convergence for different values of $\varphi$. These theoretical analysis along with empirical results offer  practical guidelines for $\varphi$ selection. 
We evaluate UBA through comprehensive experiments across vision and language tasks, spanning diverse architectures and iteration budgets. Specifically, for vision tasks, the UBA schedule demonstrates \textit{consistent superiority} over baselines across all evaluated datasets and model scales under varying training iterations.
For language tasks, we validate the effectiveness of UBA through extensive benchmarks with OLMo model(36M, 73M, 150M and 300M parameters). 
Results show that UBA achieves state-of-the-art performance across on approximately half of the benchmarks, and consistently outperforms baselines on the average scores. Moreover, our performance improvements come with negligible computational overhead, which constitutes one of its key advantages.

\textbf{Main contributions}:
\begin{enumerate}
	\item We construct a unified budget-aware training optimization framework that inherently adapts to landscape curvature variations, enhancing training robustness of leaning rates. 
	\item We propose the Unified Budget-Aware (UBA) schedule from our constructed optimization problem, controlled by a single hyper-parameter $\varphi$ that provides a trade-off between flexibility and simplicity, i.e. adaptive curvature adjustment and minimal tuning cost. 
	\item We prove the convergence under $\varphi$ and derive practical guidelines for its selection through analysis and experiments. Besides, theoretical analysis and empirical results show that $\varphi$ is related to optimization difficulty such as condition number.
	\item We perform experiments and demonstrate that UBA surpasses the commonly-used schedules across diverse vision and language tasks, spanning network architectures (e.g., ResNet, OLMo) and scales, under different training-iteration budgets. Moreover, our performance improvements come with negligible computational overhead, which constitutes one of its key advantages.
\end{enumerate}

\section{Related work} \label{section2}

\paragraph{Budgeted training}

Researchers face significant challenges in achieving optimal model performance under fixed hardware and limited time.  
To address these challenges, the concept of budgeted training has gained increasing attention,  exploring techniques including computation efficiency, model compression, training stability and convergence improvement \cite{shen2023efficient}. 
It focuses on: (i) emphasizing the allocation of resources, such as the balance between the model size and the amount of data \cite{arazo2021important,killamsetty2021grad,chen2021data,hoffmann2022training}.
(ii) finding optimal configurations or improving performance within the given compute or time budget, such as memory efficiency and computation reduction \cite{li2020train,izsak2021train,pan2022budgeted,geiping2023cramming}, optimization learning rate schedules \cite{smith2017cyclical,smith2019super,li2019budgeted,chen2022rex}, batch size \cite{geiping2023cramming} and other weight averaging method \cite{izsak2021train,kaddour2022stop}.

Within the context of budgeted training, a key area that remains under-explored is achieving the best possible model performance within a fixed number of iterations, i.e. `budgeted-iteration training'. 
In this domain, learning rate scheduling based methods are particularly aligned with the objectives of budgeted-iteration training. 
Smith et al. \cite{smith2017cyclical}  propose a new learning rate schedule, named cyclical learning rates (CLR). It improves accuracy in fewer iterations without tuning and is relevant to super-convergence phenomenon \cite{smith2019super}. 
Li et al. \cite{li2019budgeted} introduce an alternative setting of existing learning rate schedules for budgeted training. 
Chen et al. \cite{chen2022rex} propose the reflected exponential schedule (REX) via a profile and sampling fashion. 
Zhang et al. \cite{zhang2024finite} introduce the finite
horizon optimization framework and apply it to linear programming, which optimizes the algorithm performance under a strict iteration budget.
Learning rate based approaches achieve robust and high-performing results under various lengths of training iterations, which corresponds to our purpose in this work, i.e. budgeted-iteration training \cite{luo2025multi}. Besides, this approach is plug-and-play, requiring no substantial alterations to the underlying model structure, making it readily adaptable to various deep network frameworks. 
Therefore, we explore the proper learning rate schedule to achieve budgeted-iteration training.

\paragraph{Learning rate schedule} 
The learning rate plays a pivotal role in controlling the optimization process during network training. 
The common scheme is the step decay schedule. A typical instance decreases the learning rate by a decaying scalar 0.1 after $50\%$ epochs and by a decaying scalar 0.01 after $75\%$ epochs \cite{he2016deep}.
Then Loshchilov et al. \cite{loshchilov2016sgdr} observe that sharp decreases may prevent models from escaping local minima and propose the cosine schedule function, which is the most popular schedule for language model pretraining.

Pan et al. \cite{pan2021eigencurve} propose Eigencurve, and demonstrate that minimax-optimal convergence can be achieved for quadratic objectives when the Hessian's eigenvalue distribution exhibits substantial skewness. Defazio et al. \cite{defazio2023optimal} developed the theoretically-grounded framework for learning rate scheduling, establishing optimal linear decay schedules and refinements through rigorous mathematical derivation. Their work additionally provides the most extensive empirical evaluation of scheduling approaches to date. 
Although some schedules include the CLR \cite{smith2017cyclical}, REX \cite{chen2022rex}, Warmup-Stable-Decay (WSD) \cite{hu2024minicpm} and schedule from multi-power law \cite{luo2025multi}, there is no consensus on the optimal choice.
In addition, the detail can be found in Appendix \ref{related-work-appendix}, including some works focus on adaptive learning rate methods.

Learning rate design is not only significant in general training, but also critical in budgeted-iteration training which still remains a topic of debate. 
Some analyses advocate for small, constant learning rates to ensure stability and convergence \cite{du2018gradient}. 
On the contrast, one prevailing hypothesis suggests that large learning rates may facilitate crossing over sharp local minima in the optimization landscape \cite{zhang2024exploring}. 
Despite the lack of comprehensive theoretical explanations, a range of learning rate schedules inspired by the above analyses as heuristic guidelines has been widely adopted in practice, using variable learning rates to budgeted-iteration training \cite{li2019budgeted,chen2022rex}. 
In this work, we explore learning rate schedule from optimization problem tailored to budgeted-iteration training, aiming to balance iteration budget constraints and generalization.

\section{Budgeted-iteration training} \label{section3}

\subsection{Finite optimization under limited training iterations}\label{section31}

To design a one-size-fits-all learning rate schedule, we construct a robust optimization model of learning rates  across varying training conditions (see more conceptual illustration in Appendix \ref{conceptual-illustration-paragraph}). 
Specifically, we aim to guarantee minimal loss within constrained training iterations under the worst-case conditions, proposing a budget-iteration-aware framework for learning rate optimization.

\textbf{Definition (Finite optimization)}: Let $f(W,D)$ denote the function parameterized by a given neural network with parameters $W\in \mathbb{R}^N$ on the dataset $D$. Let $\xi$ denote the data sampling on the dataset $D$,  and let $\mathcal{F}$ be a function class.% that includes initialization or network architectures. 
Let $\mathcal{L}$ be the loss function. 
Let $\eta_t $ be the learning rate at the $t$-th iteration,  $T$ be a maximum number of training iterations, and $t$ be the current learning step. 
A finite optimization is 
\begin{equation}\label{eq1}
	\begin{aligned}
		\min_{\eta_1  , \eta_2  ,..., \eta_{T-1}} & \max_{f\in \mathcal{F} }  \quad \mathcal{L}(f(W_T,\xi))\\
		s.t. \quad &W_{t+1}=W_{t}-\eta_t\nabla \mathcal{L}(f(W_{t},\xi))  \\
		& \quad t=0,1,2,...,T-1
	\end{aligned}
\end{equation} 
In the optimization model \ref{eq1}, the constraint represents the stochastic gradient descent process. 
The maximizing of $\mathcal{L}(f(W_T,\xi))$ represents the the worst-case among the training process on the network $f$. 
Then it minimizes the worst-case loss within given iterations, embodying its budget-aware property. 
By formulating the problem as a min-max optimization, we identify learning rates $\eta_t (t=1,2,\cdots,T-1)$ that are resilient to the uncertainties introduced by different training configuration, uniformly throughout the optimization trajectory. 

The challenge lies in characterizing the $f$ within the optimization process.
$f$ is primarily determined by variations in parameter configuration,  datasets characteristics, batch ordering and network architecture. 
From an optimization standpoint, we assume that the characteristics of $f$ shaped by these factors can be captured by the loss landscape of $f$. 
Therefore, we can analyze $f$ by approximating its loss using a quadratic expansion around nearby strict local optima. 
Specifically, during the optimization process, the loss surface in the vicinity of the optimization trajectory can be approximated by sequence of strict local optima (or at least by one optimum), denoted as $\bar{W}^{(k)} \quad (k=1,2,\dots K) \quad K\in \mathbb{Z}^{+}$, with the final optimum represented as $\bar{W}^{K}$.  
This approach enables us to capture the key features of the loss surface near these points. 
Therefore, the trajectory of optimization is impacted by the characteristics of the nearby strict local optima, since the optimization process is inherently  shaped by the loss landscape and the key features of the surface are captured by these optima. 
Consequently, we can derive the learning rate within the optimization model by the information of these nearby strict local optima.

According to the second-order necessary condition for the strict minimum $\bar{W}^{(k)}$, this approximation is achieved through the positive semi-definite Hessian matrix $H_f^{(k)} (\xi) \in \mathbb{R}^{N\times N} $. 
In addition, the optimization problem \eqref{eq1} will be programmed sequentially. 
Then objective function of the optimization problem \eqref{eq1} can be reformulated as $\mathcal{L}(f(\bar{W}^{(k)},\xi))+\frac{1}{2}(W_{T_{k+1}} -\bar{W}^{(k)})^{\top} H_f^{(k)} (\xi) (W_{T_{k+1}} -\bar{W}^{(k)}) \quad (k=1,2,\dots K)$. Given that the networks under consideration possess sufficient capacity to fit the data, the loss at the optimal point for different  $f$ can be made small enough. Thus the term $\mathcal{L}(f(\bar{W}^{(k)},\xi))$ can be reasonably neglected in our analysis. We obtain the following sequential optimization problem.

\begin{equation}\label{eq2}
	\begin{aligned}
		\min_{\eta_{1+T_{k}}  , \eta_{2+T_{k}}   ,..., \eta_{T_{k+1}}   } &\max_{f(\xi) }  \quad \frac{1}{2}(W_{T_{k+1}} -\bar{W}^{(k)})^{\top} H_f^{(k)}(\xi) (W_{T_{k+1}} -\bar{W}^{(k)})\\
		s.t. \quad & W_{t+1}=W_{t}-\eta_t H_f^{(k)}(\xi)  (W_t -\bar{W}^{(k)}) \\ %\quad t=0,1,2,...,T-1\\
		%		& \quad \quad \left\| W_{t} \right\|_2^2 \leq c\\
		%        & g(t,T_{k+1})\in [0,1]  \\
		& t=T_{k}+1,T_{k}+2,...,T_{k+1} \quad (k=1,2,\dots K-1)
	\end{aligned}
\end{equation}
where $T_{1}=0$ and $T_{K}=T-1$.

By the first constraint $W_{t+1}=W_{t}-\eta_t H_f^{(k)}(\xi)  (W_t -\bar{W}^{(k)})$, we can obtain (see Appendix \ref{Optimization-problem-derivation} for derivation)
\begin{equation}\label{eq7}
	\begin{aligned}
		\left\| W_{T_{k+1}} -\bar{W}^{(k)} \right\|_2^2 \leq&\max_{\lambda_{l}^{(k)} \leq \lambda_i^{(k)} \leq \lambda_{u} ^{(k)}} \left[ \prod\limits_{t=1+T_{k} }^{T_{k+1}} (1-\eta_{t}\lambda_i^{(k)} )  \right]^2  \left\| W_{T_{k}} -\bar{W}^{(k)} \right\|_2^2
	\end{aligned}
\end{equation}
where $\lambda_i^{(k)} \quad (i=1,2,\cdots,N)$ are the eigenvalues of $H_f^{(k)} (\xi) $ around the $k$-th optimum, which satisfy $0 < \lambda_{l}^{(k)}  \leq \lambda_i^{(k)} (f(\xi)) \leq  \lambda_{u}^{(k)}$ for all $i$, $u_i^{(k)} \quad (i=1,2,\cdots, N ) $ denote $N$ linearly independent eigenvectors and $s_i^{(k)}$ are the coefficients corresponding to the eigenvector components. Since the term  $\left\| W_{T_{k}} -\bar{W}^{(k)} \right\|_2^2$ is a certain constant, the optimization process of weights $W_{T_{k+1}}$  is equivalent to the least upper bound of the $\prod\limits_{t=1+T_{k} }^{T_{k+1}} (1-\eta_{t}\lambda_i^{(k)} )$.
Thus, the optimization of learning rate schedule can be formulated as follow, 
\begin{equation}\label{eq9}
	\begin{aligned}
		\min_{ \eta_{1+T_{k}}  , \eta_{2+T_{k}}   ,..., \eta_{T_{k+1}}   } &\max_{\lambda_{l}^{(k)} \leq \lambda_i^{(k)} \leq \lambda_{u} ^{(k)}} \prod\limits_{t=1+T_{k} }^{T_{k+1}} \left[(1-\eta_{t}\lambda_i^{(k)} )  \right]^2\\
		s.t. \quad & \eta_{1+T_{k}}  , \eta_{2+T_{k}}   ,..., \eta_{T_{k+1}}  \in [\eta_{\min},\eta_{\max}]   \\
		& k=1,2,\dots K
	\end{aligned}
\end{equation}

To solve the constrained min-max problem \eqref{eq9}, we adopt an iterative projected gradient method that alternates between minimizing over the variables $\eta_{t}$ and maximizing over the parameters $\lambda_i^{(k)}$. To avoid repeated numerical optimization, we fit the solutions with a parametric function. Details regarding the numerical solution and curve fitting process are provided in Appendix \ref{Numerical-solution-and-curve-fitting}.
Then, we obtain the $\eta_{t}$ within the interval $t\in [1+T_k, T_{k+1}]$ as follows
\begin{equation}\label{fitted-function}
	\eta_{t}= (\eta_{\max}-\eta_{\min})\frac{2(1+\cos (\frac{(2(t-T_{k})-1)\pi}{2(T_{k+1}-T_{k})   }   +(k-1)\pi ) )}{ 2\varphi+ (2-\varphi)  (1+\cos (\frac{(2(t-T_{k})-1)\pi}{2(T_{k+1}-T_{k})   }   +(k-1)\pi  )  } + \eta_{\min},
\end{equation}
where $\varphi$ is the hyper-parameter controlling the variation speed of the learning rate $\eta_{t}$.

When setting $(K > 2)$, UBA extends to a multi-phase formulation, which offers three potential advantages. (i) Hierarchical optimization: by partitioning the budget-aware optimization into multiple phases, each near a different local minima, it improves approximation accuracy and captures the dynamic features of the loss surface. (ii) It helps escape saddle points, (iii) It generalizes the single-phase method, allowing for future extensions. Notably, a single-phase approach $(K = 2)$ remains effective, as the robust budget-aware model inherently derives the optimal schedule function, as shown in Figure \ref{single-phase-schedule}. 
For consistency with common practice and to better isolate scheduling effects, this paper focuses primarily on the single-phase implementation. 
The multi-phase approach is also compared in Appendix \ref{cross-K-appendix}, with its schedule shown in Figure \ref{multi-phase-schedule}.

We name the proposed schedule Unified Budget-Aware (UBA) schedule for the following reasons. The robust budget-aware model minimizes the loss function \textbf{uniformly} across the optimization trajectory, resulting in stable performance. UBA provides a reliable, \textbf{unified} choice for practitioners, eliminating the need for case-by-case baseline comparisons and delivering consistent superiority across datasets, architectures, and training budgets. Lastly, UBA can \textbf{uniformly} approximate the behavior of existing schedules through simple parameter adjustments.

\begin{figure}[!ht]
	\centering
	\subfigure[]{
		\includegraphics[width=0.3\linewidth]{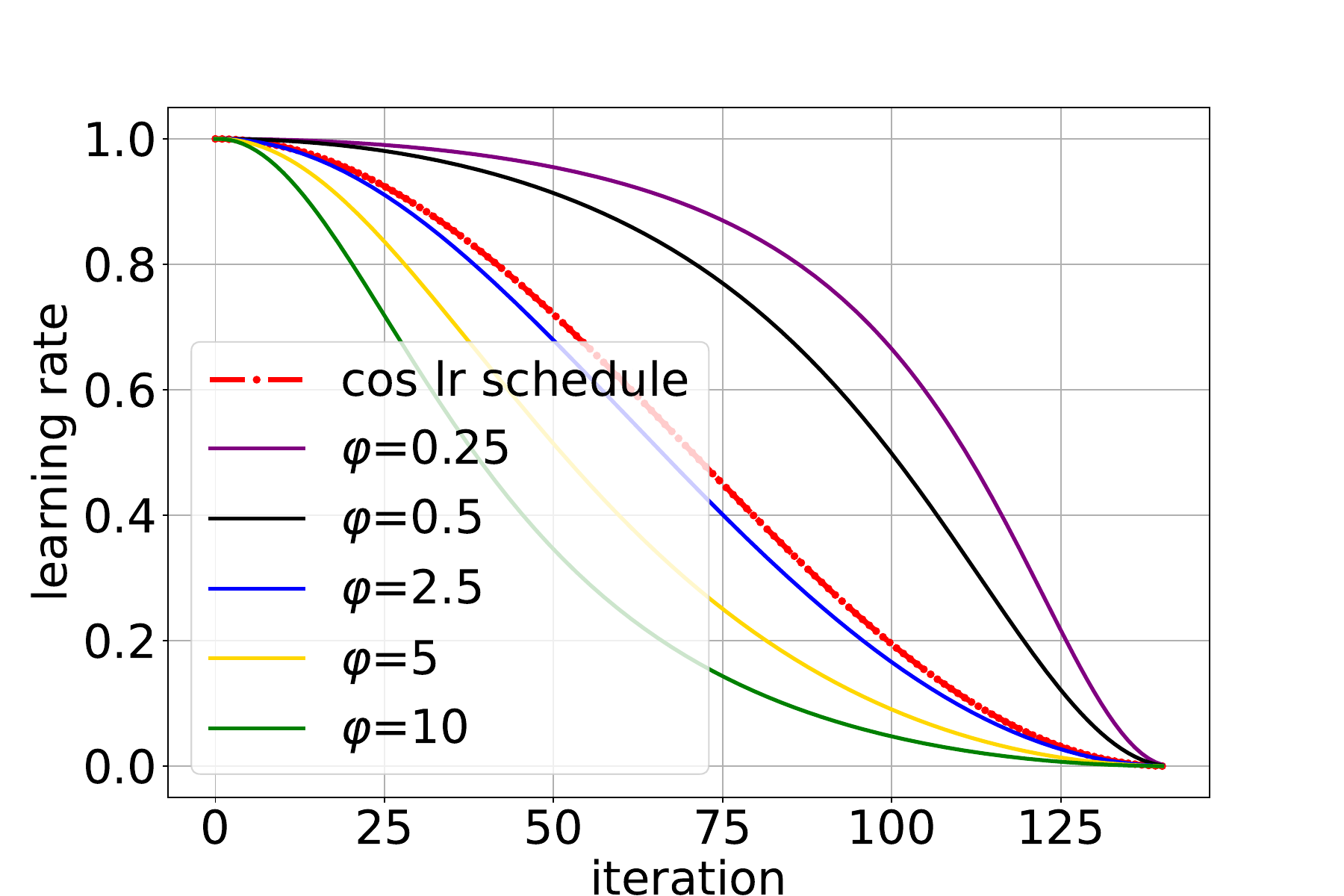}
		\label{single-phase-schedule}
	}
	%	\hfill
	\subfigure[]{
		\includegraphics[width=0.3\linewidth]{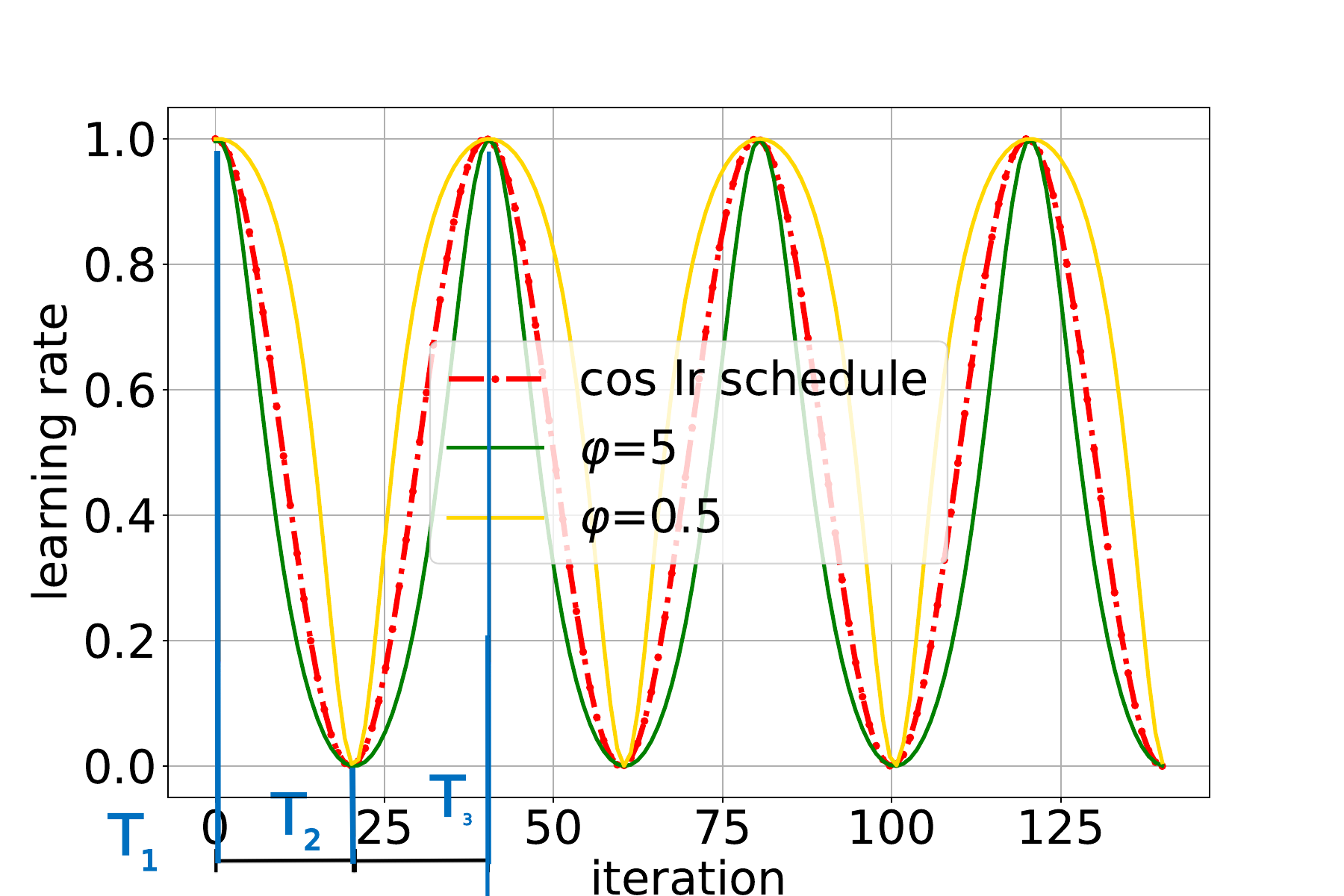}
		\label{multi-phase-schedule}
	}
	\caption{Evolution of the learning rate in UBA schedule across training iterations.}
	\label{learning-rate-figure} 
\end{figure}

\subsection{Theoretical analysis}\label{section32}

\begin{prop}\label{Proposition-model}
	The fit function \eqref{fitted-function} is the exact closed-form solution to the min-max optimization problem:
	\begin{equation}\label{Proposition-min-max-model}
		\begin{aligned}
			\min_{ \eta_{1+T_{k}}  , \eta_{2+T_{k}}   ,..., \eta_{T_{k+1}}   } &\max_{\lambda_{l}^{(k)} \leq \lambda_i^{(k)} \leq \lambda_{u} ^{(k)}} \prod\limits_{t=1+T_{k} }^{T_{k+1}} \left[\left(1- \left(   \left(\frac{1}{\lambda_{l} ^{(k)}}-\frac{1}{\lambda_{u} ^{(k)}} \right) \eta_{t} +\frac{1}{\lambda_{u} ^{(k)}} \right)  \lambda_i^{(k)} \right)  \right]^2\\
			%			s.t. \quad 		& \eta_{1+T_{k}}  , \eta_{2+T_{k}}   ,..., \eta_{T_{k+1}}  \in [0,1]\\
			%			& k=1,2,\dots K
		\end{aligned}
	\end{equation}
	when the hyper-parameter $\varphi$ are determined by $\lambda_{l}^{(k)}$ and $ \lambda_{u} ^{(k)}$ through the relation $\varphi=2\frac{\lambda_{u}^{(k)}}{\lambda_{l}^{(k)}}$ and $\eta_{\max}=1, \quad \eta_{\min}=0$.
\end{prop}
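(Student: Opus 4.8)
The plan is to exploit the fact that the affine reparametrization built into \eqref{Proposition-min-max-model} is exactly what turns the robust budget-aware problem into the classical Chebyshev polynomial minimax problem. First I would substitute $\tilde{\eta}_t := \bigl(\tfrac{1}{\lambda_{l}^{(k)}}-\tfrac{1}{\lambda_{u}^{(k)}}\bigr)\eta_t + \tfrac{1}{\lambda_{u}^{(k)}}$, which is an increasing affine bijection carrying $\eta_t\in[\eta_{\min},\eta_{\max}]=[0,1]$ onto $\tilde{\eta}_t\in[1/\lambda_{u}^{(k)},\,1/\lambda_{l}^{(k)}]$. Writing $n:=T_{k+1}-T_k$ and noting that the common value of the $\lambda_i^{(k)}$ appearing in all factors makes the inner maximum a maximum over a single scalar $\lambda\in[\lambda_{l}^{(k)},\lambda_{u}^{(k)}]$, the problem becomes $\min_{\tilde{\eta}}\max_{\lambda}\bigl(\prod_{t=1+T_k}^{T_{k+1}}(1-\tilde{\eta}_t\lambda)\bigr)^2$. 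Since $x\mapsto x^2$ is increasing on $[0,\infty)$, this equals the square of $\min_{\tilde{\eta}}\,\|q_{\tilde\eta}\|_{\infty,[\lambda_{l}^{(k)},\lambda_{u}^{(k)}]}$, where $q_{\tilde\eta}(\lambda)=\prod_t(1-\tilde{\eta}_t\lambda)$ runs over all degree-$n$ real polynomials (every $\tilde{\eta}_t>0$) normalized by $q_{\tilde\eta}(0)=1$ whose roots $1/\tilde{\eta}_t$ all lie in $[\lambda_{l}^{(k)},\lambda_{u}^{(k)}]$.

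Next I would invoke the classical extremal property of Chebyshev polynomials: among \emph{all} real polynomials $p$ with $\deg p\le n$ and $p(0)=1$, the sup-norm on $[\lambda_{l}^{(k)},\lambda_{u}^{(k)}]$ (an interval not containing $0$) is uniquely minimized by the shifted, normalized Chebyshev polynomial $p^{\star}(\lambda)=\mathcal{T}_n(u(\lambda))/\mathcal{T}_n(u(0))$, where $\mathcal{T}_n$ is the degree-$n$ Chebyshev polynomial of the first kind and $u(\lambda)=\tfrac{\lambda_{u}^{(k)}+\lambda_{l}^{(k)}-2\lambda}{\lambda_{u}^{(k)}-\lambda_{l}^{(k)}}$ maps $[\lambda_{l}^{(k)},\lambda_{u}^{(k)}]$ affinely onto $[-1,1]$. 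The $n$ zeros of $p^{\star}$ are $r_j=\tfrac{\lambda_{u}^{(k)}+\lambda_{l}^{(k)}}{2}-\tfrac{\lambda_{u}^{(k)}-\lambda_{l}^{(k)}}{2}\cos\tfrac{(2j-1)\pi}{2n}$, $j=1,\dots,n$, and all of them lie strictly inside $(\lambda_{l}^{(k)},\lambda_{u}^{(k)})$. Hence $p^{\star}=\prod_j(1-\lambda/r_j)$ is itself of the admissible form, with $\tilde{\eta}_j^{\star}=1/r_j\in(1/\lambda_{u}^{(k)},1/\lambda_{l}^{(k)})$; since the box-constrained minimum can only exceed the unconstrained one but is attained here, the two are equal and $p^{\star}$ (equivalently $\tilde{\eta}^{\star}$) solves \eqref{Proposition-min-max-model} after reparametrization.

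Finally I would push $\tilde{\eta}^{\star}$ back through the reparametrization. Inverting the affine map gives $\eta_t=\tfrac{\lambda_{l}^{(k)}\lambda_{u}^{(k)}}{\lambda_{u}^{(k)}-\lambda_{l}^{(k)}}\bigl(\tilde{\eta}_t^{\star}-1/\lambda_{u}^{(k)}\bigr)$. Substituting $\tilde{\eta}_t^{\star}=\bigl(\tfrac{\lambda_{u}^{(k)}+\lambda_{l}^{(k)}}{2}-\tfrac{\lambda_{u}^{(k)}-\lambda_{l}^{(k)}}{2}\cos\theta_t\bigr)^{-1}$ with $\theta_t=\tfrac{(2(t-T_k)-1)\pi}{2n}+(k-1)\pi$ --- the $(k-1)\pi$ only permutes the node set $\{\cos\tfrac{(2j-1)\pi}{2n}\}_{j=1}^{n}$, which is symmetric about $0$, so it does not change $p^{\star}$ and is retained merely to make the schedule continuous across phase boundaries --- and then clearing denominators, writing $c=1+\cos\theta_t$, and using $\varphi=2\lambda_{u}^{(k)}/\lambda_{l}^{(k)}$, the expression collapses to $\eta_t=\tfrac{2(1+\cos\theta_t)}{2\varphi+(2-\varphi)(1+\cos\theta_t)}$, which is precisely \eqref{fitted-function} with $\eta_{\max}=1$ and $\eta_{\min}=0$, as claimed.

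The main obstacle I anticipate is the Chebyshev step --- stating the extremal theorem in the ``polynomial prescribed at a point outside the interval, minimize the sup-norm'' form (rather than the usual monic/leading-coefficient normalization) and, crucially, verifying that the extremizer has all its roots inside $[\lambda_{l}^{(k)},\lambda_{u}^{(k)}]$, so that the box constraint $\eta_t\in[0,1]$ (equivalently $\tilde{\eta}_t\in[1/\lambda_{u}^{(k)},1/\lambda_{l}^{(k)}]$) is never active. Everything after that --- the affine inversion and the trigonometric simplification to \eqref{fitted-function} --- is routine and would go in an appendix.
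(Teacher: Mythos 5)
Your proof is correct and follows essentially the same route as the paper's: both reduce the problem, via the affine reparametrization of the stepsize and the normalization $Q(0)=1$, to the classical Chebyshev minimax problem on $[\lambda_{l}^{(k)},\lambda_{u}^{(k)}]$ and recover $\eta_t$ by matching the Chebyshev zeros. You are in fact slightly more careful than the paper on two points it glosses over --- verifying that the Chebyshev roots lie strictly inside the interval so the box constraint $\eta_t\in[0,1]$ is never active, and noting that the $(k-1)\pi$ phase term merely permutes the symmetric node set.
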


The min-max model in Proposition \ref{Proposition-model} represents a special case of our generalized optimization framework. 
We show that UBA is the exact solution to this special case optimization problem when the learning rate is scaled by $\left(   \left(\frac{1}{\lambda_{l} ^{(k)}}-\frac{1}{\lambda_{u} ^{(k)}} \right) \eta_{t} +\frac{1}{\lambda_{u} ^{(k)}} \right)$. 
It provides a theoretical foundation for our choice of learning rate instead of choosing  it heuristically or empirically, adding rigor and justification to our approach.
Moreover, in this case,  $\varphi$ is linked condition number. 
It indicates how the learning rate  is shaped by the local curvature of the model, which could guide the optimization process more effectively. The relationship between $\varphi$ and condition number suggests an adaptive nature for the learning rate. In regions with sharp curvatures (large condition number), $\varphi$  reduces the learning rate more rapidly to avoid overshooting. Conversely, in flatter regions, $\varphi$ allows the learning rate to remain large for several iterations, facilitating faster convergence. The transformation of the learning rate trend is shown in Figure  \ref{learning-rate-figure}. 
\textit{This is a step toward establishing a principled connection between learning rate and local loss landscape geometry along with optimization difficulty.}
%This is particularly important in ensuring that our method is grounded in solid mathematical principles and is not merely a trial-and-error approach.
By this special case, we generalize that $\varphi$ is related to optimization difficulty. 
The empirical results support these conclusions, with further details in Section \ref{cross-optimizer} and  Appendix  \ref{cross-optimizer-appendix}.

\begin{prop}\label{Proposition-cos-extension}
	Consider the training process within the interval $t\in [1+T_k, T_{k+1}]$. 
	When the hyper-parameter $\varphi$ is set sufficiently close to $2$, the proposed learning rate scheduling formula reduces to the cosine learning rate schedule.
\end{prop}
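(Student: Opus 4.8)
The plan is to treat the claim as a continuity-and-limit statement in the scalar $\varphi$ and to read the cosine schedule off directly from the closed form \eqref{fitted-function}. Abbreviate, for $t\in[1+T_k,T_{k+1}]$,
\[
\theta_t \;=\; \frac{(2(t-T_k)-1)\pi}{2(T_{k+1}-T_k)} \,+\, (k-1)\pi ,
\qquad
\eta_t(\varphi) \;=\; (\eta_{\max}-\eta_{\min})\,\frac{2\bigl(1+\cos\theta_t\bigr)}{2\varphi+(2-\varphi)\bigl(1+\cos\theta_t\bigr)} \,+\, \eta_{\min}.
\]
First I would record that, since $1+\cos\theta_t\in[0,2]$, the denominator is a linear function of $1+\cos\theta_t$ taking values between $2\varphi$ and $4$, hence $\ge\min\{2\varphi,4\}>0$ for every $\varphi$ in a neighbourhood of $2$ and every $t$. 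Thus $\eta_t(\varphi)$ is well defined and jointly continuous in $(\varphi,t)$, which both legitimises passing to the limit $\varphi\to 2$ and yields a uniform-in-$t$ first-order estimate $|\eta_t(\varphi)-\eta_t(2)|=O(2-\varphi)$ — this is the precise sense in which "sufficiently close to $2$" is meant.

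Next I would evaluate the limit. Setting $\varphi=2$ annihilates the $(2-\varphi)$ term, so the denominator collapses to the constant $4$, giving
\[
\eta_t(2) \;=\; (\eta_{\max}-\eta_{\min})\,\frac{2(1+\cos\theta_t)}{4} \,+\, \eta_{\min}
\;=\; \eta_{\min} + \tfrac12(\eta_{\max}-\eta_{\min})\bigl(1+\cos\theta_t\bigr).
\]
It then remains to identify the right-hand side with cosine annealing. In the single-phase case $k=1$ one has $T_k=0$ and $(k-1)\pi=0$, so $\theta_t=\frac{(2t-1)\pi}{2T}=\frac{t-\frac12}{T}\pi$ and $\eta_t(2)=\eta_{\min}+\tfrac12(\eta_{\max}-\eta_{\min})\bigl(1+\cos\frac{(2t-1)\pi}{2T}\bigr)$, i.e. exactly the standard cosine schedule $\eta_{\min}+\tfrac12(\eta_{\max}-\eta_{\min})(1+\cos\frac{t\pi}{T})$ of \cite{loshchilov2016sgdr} sampled at the midpoint $t-\tfrac12$ of each step. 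For $k>1$ the additive phase contributes $\cos(x+(k-1)\pi)=(-1)^{k-1}\cos x$, so each interval $[T_k,T_{k+1}]$ carries a (sign-reflected) cosine segment, reproducing the oscillatory profile of Figure \ref{multi-phase-schedule}.

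I do not anticipate a genuine obstacle: the core of the argument is a one-line limit plus a continuity estimate. The only points needing care are (i) verifying the denominator is bounded away from $0$ near $\varphi=2$ so the limit and the $O(2-\varphi)$ bound hold uniformly over all $t$ in the interval, and (ii) stating explicitly that what is recovered is the midpoint discretisation of the continuous cosine profile (the numerator carries $2(t-T_k)-1$ rather than $2(t-T_k)$), so that "reduces to the cosine learning rate schedule" is understood up to this standard half-step convention rather than as a literal term-by-term identity.
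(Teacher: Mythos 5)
Your proposal is correct and follows essentially the same route as the paper: set $\varphi=2$ so the $(2-\varphi)$ term vanishes, the denominator collapses to $4$, and the schedule becomes $\eta_{\min}+\tfrac12(\eta_{\max}-\eta_{\min})\bigl(1+\cos\theta_t\bigr)$. The only difference is how the residual half-step offset in the argument $\frac{(2(t-T_k)-1)\pi}{2(T_{k+1}-T_k)}$ is dispatched — the paper expands it via the angle-addition formula and argues $\sin\frac{\pi}{2(T_{k+1}-T_k)}\approx 0$, $\cos\frac{\pi}{2(T_{k+1}-T_k)}\approx 1$ for long phases, while you identify it as a midpoint-sampling convention; your added checks (denominator bounded away from zero near $\varphi=2$, the $O(2-\varphi)$ uniform estimate, and the $(-1)^{k-1}$ sign reflection for $k>1$) are minor rigor improvements over the paper's version rather than a different argument.
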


%By Proposition \ref{Proposition-model}, the hyper-parameter $\varphi$ is related to the condition number of the Hessian through the relation $\varphi = 2 \kappa$, where $\kappa = \frac{\lambda_u^{(k)}}{\lambda_l^{(k)}}$.
By Proposition \ref{Proposition-model}, the hyper-parameter $\varphi$ is related to the optimization difficulty. 
In that case, the optimization difficulty is explicitly quantified as the condition number and $\varphi = 2 \kappa$, where $\kappa = \frac{\lambda_u^{(k)}}{\lambda_l^{(k)}}$.
Furthermore, Proposition \ref{Proposition-cos-extension} shows that when $\varphi$ approaches 2, the proposed learning rate schedule converges to the standard cosine schedule. 
It means that, in regions where the optimization difficulty is manageable (i.e., the curvature is relatively flat), setting $\varphi \approx 2$ allows the learning rate schedule to naturally reduce to the cosine form. 
Besides, our schedule can approximate the behavior of existing schedules (e.g.step decay, cosine annealing, cyclic schedule or Rex schedule) through simple parameter adjustments, shown in Table  \ref{approximate-the-behavior-of-existing-schedules}. 
The detail can be found in Appendix \ref{Adaptive-Simulation-of-Existing-Schedules}.
More importantly, it outperforms these schedules by training convergence and final accuracy, supporting results can be found in experiment sections \ref{Experiment-results-Experiments-Vision},\ref{Experiment-results-Experiments-Language} and Appendix \ref{Experiments-language-models}. 

\begin{table}[!ht]
	\centering
	% \vspace{8pt}
	%	\scriptsize
	\footnotesize
	% \linespread{1.2}
	%\renewcommand\arraystretch{1.38}
	%	\centering
	\caption{The adaptive simulation of existing schedules.}\label{approximate-the-behavior-of-existing-schedules}
	%\resizebox{120mm}{16mm}{
		\begin{tabular}{llll}
			\toprule
			Schedule&  Parameter adjustments& Schedule&  Parameter adjustments\\
			\cmidrule(r){1-2}\cmidrule(r){3-4}
			Cosine& $\varphi =2$& Step & $\varphi=0, \eta_{\max}= 0.5 ^k, T_{k+1}-T_k=$decaying step,$k =1,2,...$\\
			\cmidrule(r){1-2}\cmidrule(r){3-4}
			Exponential& $\varphi =30$&Cyclic& $\varphi=2, k \leftarrow k+1 , T_{k+1}-T_k=$cyclic step ,$k =1,2,...$\\ 
			\cmidrule(r){1-2}\cmidrule(r){3-4}
			Rex& $\varphi =0.8$&OneCycle& $\varphi=2, k \leftarrow k+1, T_{2}-T_1=$ pct$\_$start step\\
			\bottomrule
		\end{tabular}
	\end{table}

\begin{theorem}\label{Proposition3}
	Let $n_t = H_f^{(k)}  (W_t -\bar{W}^{(k)})  -H_f^{(k)}(\xi)  (W_t -\bar{W}^{(k)})$ be the stochastic curvature noise introduced by sampling at iteration $t$. Assume that the sampling Hessian satisfies: $ \mathbb{E}_{\xi} \left[ n_t n_t^{\top} \right] \preceq \sigma^2 H_f^{(k)} $ for some constant $\sigma$. 
	Denote $\tau:=\frac{4 \lambda_l^{(k)} (\eta_{\max} - \eta_{\min}) 
		\left(1 + \cos \left( \frac{(2(t - T_{k}) - 1)\pi}{2(T_{k+1} - T_{k})} \right) \right) (T_{k+1} - T_{k})}{(\varphi - 2)\pi}$.
	If we set the learning rate as the proposed form \eqref{fitted-function}, the loss uncertainty introduced by stochastic gradient method within the interval $t\in [1+T_k, T_{k+1}]$ can be bounded by two terms,

		 \textbf{For $\varphi>2$:}
		\begin{equation}\label{}
			\begin{split}
				&  \mathbb{E}  \left[ \mathcal{L}(f(W_{t},\xi))-\mathcal{L}(f(\bar{W}^{(k)},\xi)) \right]\\
				\leq   &   \Bigl(\frac{4(T_{k+1} - T_{k}) + (\varphi - 2)\pi}{4(T_{k+1} - T_{k}) + (\varphi - 2)\pi (t - T_{k})} 
				\Bigr) ^{\tau  }   \cdot \exp \Bigl( -2 \lambda_i^{(k)}\eta_{\min} (t - T_{k}) \Bigr) \lambda_{u}^{(k)}    \|W_{1+T_k} -\bar{W}^{(k)}\|_2^2\\
				+&\sigma^2 \sum\limits_{i=_{1+T_{k}}}^{t}  \eta_i^2   \sum\limits_{j=1 }^{N} (\lambda_{j}^{(k)})^2 
				\exp \Bigl( -2 \lambda_j^{(k)}\eta_{\min} (t - i) \Bigr)  \cdot \left( 
				\frac{4(T_{k+1} - T_{k}) + (\varphi - 2)\pi (i-T_{k})  }{4(T_{k+1} - T_{k}) + (\varphi - 2)\pi (t - T_{k})} 
				\right) ^{\tau  }  \\
			\end{split}
		\end{equation}
         \textbf{For $\varphi<2$:}
		\begin{equation}\label{}
			\begin{split}
				&  \mathbb{E}  \left[ \mathcal{L}(f(W_{t},\xi))-\mathcal{L}(f(\bar{W}^{(k)},\xi)) \right]\\
				\leq      &\Bigl(   
				\frac{ \left(  2\varphi+2\pi -\varphi\pi \right)- \frac{(2-\varphi)\pi}{(T_{k+1} - T_{k})     }   (t-T_{k}-0.5)  }{ \left(  2\varphi+2\pi -\varphi\pi \right)+ \frac{(2-\varphi)\pi}{2(T_{k+1} - T_{k})     }     }     
				\Bigr) ^{-\tau }  \cdot \exp \Bigl( -2 \lambda_i^{(k)}\eta_{\min} (t - T_{k}) \Bigr) \lambda_{u}^{(k)}    \|W_{1+T_k} -\bar{W}^{(k)}\|_2^2\\
				+  & \sigma^2 \sum\limits_{i=_{1+T_{k}}}^{t}  \eta_i^2   \sum\limits_{j=1 }^{N} (\lambda_{j}^{(k)})^2 
				\exp \Bigl( -2 \lambda_j^{(k)}\eta_{\min} (t - i) \Bigr) \cdot 			\left(   
				\frac{ \left(  2\varphi+2\pi -\varphi\pi \right)- \frac{(2-\varphi)\pi}{(T_{k+1} - T_{k})     }   (t-T_{k}-0.5)  }{ \left(  2\varphi+2\pi -\varphi\pi \right)- \frac{(2-\varphi)\pi}{(T_{k+1} - T_{k})     }     (i-T_{k}-0.5)   }     
				\right) ^{-\tau }  
			\end{split}
		\end{equation}

\end{theorem}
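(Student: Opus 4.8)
The plan is to track the deviation $W_t-\bar W^{(k)}$ under the noisy recursion $W_{t+1}=W_t-\eta_t\big(H_f^{(k)}(W_t-\bar W^{(k)})+n_t\big)$, decompose the resulting error into a deterministic contraction term and a stochastic noise-accumulation term, and then control each contraction factor $\prod_{i}(1-\eta_i\lambda)$ along the UBA schedule by a continuous (integral) comparison. First I would diagonalize $H_f^{(k)}$ in its eigenbasis, write the per-eigendirection update $a_{t+1}^{(j)}=(1-\eta_t\lambda_j^{(k)})a_t^{(j)}-\eta_t n_t^{(j)}$, unroll it to obtain $a_t^{(j)}=\big[\prod_{i=1+T_k}^{t-1}(1-\eta_i\lambda_j^{(k)})\big]a_{1+T_k}^{(j)}-\sum_{i}\eta_i n_i^{(j)}\prod_{m>i}(1-\eta_m\lambda_j^{(k)})$, take expectations using $\mathbb{E}_\xi[n_t n_t^\top]\preceq\sigma^2 H_f^{(k)}$ so the cross terms vanish and the noise term becomes $\sigma^2\sum_i\eta_i^2\sum_j(\lambda_j^{(k)})^2\prod_{m>i}(1-\eta_m\lambda_j^{(k)})^2$, and finally translate $\|W_t-\bar W^{(k)}\|^2$ into the loss gap via the quadratic expansion $\mathcal{L}(f(W_t,\xi))-\mathcal{L}(f(\bar W^{(k)},\xi))\le \tfrac12\lambda_u^{(k)}\|W_t-\bar W^{(k)}\|^2$ established in Section~\ref{section31}.

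The crux is bounding the product $P_{i\to t}:=\prod_{m=i+1}^{t}(1-\eta_m\lambda)^2$ for $\lambda\ge\lambda_l^{(k)}$ when $\eta_m$ follows \eqref{fitted-function}. I would split $\eta_m=\eta_{\min}+(\eta_{\max}-\eta_{\min})g_m$ where $g_m$ is the rational-cosine profile; the $\eta_{\min}$ part contributes the clean exponential factor $\exp(-2\lambda_i^{(k)}\eta_{\min}(t-i))$ by the elementary bound $1-x\le e^{-x}$, while the $g_m$ part is handled by writing $\log P \le -2\lambda_l^{(k)}\sum_m (\eta_{\max}-\eta_{\min})g_m$ and then \emph{comparing the sum to an integral} of $g$. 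The key calculation is that for the UBA profile the antiderivative of $(\eta_{\max}-\eta_{\min})g(s)$ is (up to constants) $\log\big(2\varphi+(2-\varphi)(1+\cos(\cdot))\big)$ — i.e., the denominator of \eqref{fitted-function} is engineered so that the cumulative learning rate telescopes to a logarithm, which is exactly why the bound comes out as a \emph{power law} $\big(\text{affine in }t\big)^{\tau}$ rather than an exponential. Carrying this out for $\varphi>2$ gives the decreasing-affine base $4(T_{k+1}-T_k)+(\varphi-2)\pi(t-T_k)$ in the numerator/denominator positions shown; for $\varphi<2$ the same antiderivative computation, with the sign of $(2-\varphi)$ flipped and the $(k-1)\pi$ phase absorbed, produces the alternative affine expression $(2\varphi+2\pi-\varphi\pi)-\tfrac{(2-\varphi)\pi}{T_{k+1}-T_k}(t-T_k-0.5)$ and a negative exponent $-\tau$. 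The exponent $\tau$ itself arises as $2\lambda_l^{(k)}(\eta_{\max}-\eta_{\min})$ times the slope factor linking the integral variable to the affine argument, matching the stated definition with its $(1+\cos(\cdot))$ and $(T_{k+1}-T_k)$ factors.

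After the product bound, assembling the theorem is bookkeeping: apply $P_{1+T_k\to t}$ to the first (initialization) term together with $\tfrac12\lambda_u^{(k)}\|W_{1+T_k}-\bar W^{(k)}\|^2$, and apply $P_{i\to t}$ inside the noise sum $\sigma^2\sum_{i}\eta_i^2\sum_j(\lambda_j^{(k)})^2(\cdots)$, keeping the per-$i$ ratio of affine quantities (numerator with argument $i$, denominator with argument $t$) and the residual exponential $\exp(-2\lambda_j^{(k)}\eta_{\min}(t-i))$. One must be slightly careful that in the noise sum the surviving variance only sees the \emph{quadratic} of the contraction (hence $P$, not $\sqrt{P}$) and that $\lambda_i^{(k)}$ in the first term is the eigenvalue whose product is being bounded from above; I would state the bound for the worst such direction, consistent with the min-max origin of the schedule.

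The main obstacle I anticipate is making the sum-to-integral comparison fully rigorous with the correct constants: the discrete sum $\sum_{m=i+1}^{t}g_m$ must be sandwiched between integrals $\int_{i}^{t}g$ and $\int_{i-1}^{t-1}g$ (or the Euler–Maclaurin correction handled), and one has to verify monotonicity/sign of $g$ on $[1+T_k,T_{k+1}]$ — which depends on the $(k-1)\pi$ phase and on whether $\varphi\gtrless2$ — to know which direction the inequality goes. A secondary subtlety is that the bound $(1-x)^2\le e^{-2x}$ requires $x=\eta_m\lambda\le 1$, so I would either invoke the stability regime $\eta_{\max}\lambda_u^{(k)}\le 1$ implicitly assumed by the framework or absorb a harmless constant; and I would confirm that the $\varphi<2$ branch's base stays positive over the interval (equivalently that the schedule never sends $\eta$ through a zero of $1-\eta\lambda$), which is what forces the specific normalization $2\varphi+2\pi-\varphi\pi$ appearing there. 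Everything else — expectation of cross terms, the $\sigma^2 H_f^{(k)}$ assumption, the quadratic loss expansion — is routine given the earlier parts of the paper.
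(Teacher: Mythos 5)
Your overall architecture matches the paper's proof exactly: unroll the noisy recursion $W_{t+1}-\bar W^{(k)}=(I-\eta_t H_f^{(k)})(W_t-\bar W^{(k)})+\eta_t n_t$, kill the cross terms with $\mathbb{E}[n_t]=0$ and inter-iteration independence to get a bias term plus a variance term $\sigma^2\sum_i\eta_i^2\sum_j(\lambda_j^{(k)})^2\prod_{q>i}(1-\eta_q\lambda_j^{(k)})^2$ (the paper does this via the cyclic trace property and $\mathbb{E}[n_in_i^\top]\preceq\sigma^2H_f^{(k)}$), then bound each contraction product by $1-x\le e^{-x}$, split $\eta_j=\eta_{\min}+(\eta_{\max}-\eta_{\min})g_j$, and compare the remaining sum to an integral (this is the paper's Lemma~\ref{lemma1}). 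So the skeleton is the same.

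The genuine gap is in your ``key calculation.'' You claim the antiderivative of the UBA profile $g(s)=\frac{2(1+\cos\theta(s))}{2\varphi+(2-\varphi)(1+\cos\theta(s))}$ is, up to constants, $\log\bigl(2\varphi+(2-\varphi)(1+\cos\theta(s))\bigr)$. That identity is false: differentiating the log of the denominator produces $\frac{-(2-\varphi)\,\theta'(s)\sin\theta(s)}{2\varphi+(2-\varphi)(1+\cos\theta(s))}$, whose numerator is $\sin\theta$, not $1+\cos\theta$, so the cumulative learning rate does not telescope to that logarithm and your route to the power-law base breaks down. The paper gets the affine bases by a different two-step maneuver that your sketch omits: (i) freeze the numerator $1+\cos\theta(j)$ at its smallest value over $j\le t$, namely $1+\cos\theta(t)$, and pull it outside the sum --- this is precisely why the exponent $\tau$ carries the factor $\bigl(1+\cos\bigl(\tfrac{(2(t-T_k)-1)\pi}{2(T_{k+1}-T_k)}\bigr)\bigr)$ evaluated at the \emph{endpoint} $t$, which your ``slope factor'' explanation does not account for; and (ii) linearize the cosine in the \emph{denominator} ($\cos\theta\ge 1-\theta$ for $\varphi>2$, $1+\cos\theta\le\pi-\theta$ for $\varphi<2$, with the direction dictated by the sign of $2-\varphi$) so that the remaining sum is of $1/(\text{affine in }j)$, which then integrates to a logarithm of an affine function and exponentiates to the stated power law. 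Without step (i) the integrand is not $1/D_j$ and without step (ii) there is no affine base at all. The rest of your bookkeeping (the $\eta_{\min}$ exponential factor, the quadratic loss expansion with $\lambda_u^{(k)}$, the per-$i$ ratio inside the variance sum) is consistent with the paper, but the central product bound as you describe it would not yield the theorem and needs to be replaced by the freeze-and-linearize argument above.
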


\section{Experiment results}

We conduct comprehensive evaluations along three dimensions: (i) Modality diversity, including vision and language tasks; (ii) Training budget, including model scales (small to large) and training iterations (short to long); (iii) Ablation studies, such as parameter sensitivity and cross-optimizer performance. This evaluation strategy ensures the robustness and generalization of our findings. 

Our implementation strictly follows the PyTorch official API, i.e. it is inherited from the base class
\verb|torch.optim.lr_scheduler.LRScheduler| (Appendix \ref{appendix-implement-details}). It maintains full compatibility with all \verb|torch.optim|
optimizers. The complete production-ready code is publicly available (see Appendix \ref{appendix-implement-details} for link).
 
A key advantage of UBA lies in its critical parameter $\varphi$. While an optimal $\varphi$ can further improve model performance, we intentionally fix its value across all tasks and architectures to ensure fair evaluation. 
We fix $\varphi$ for SGD and AdamW respectively (see learning rate variations in Figure \ref{single-phase-schedule}). 
The rationale for these choices is systematically analyzed in our ablation study \ref{Ablation-Study-d}. 
Since each schedule has a different optimal learning rate in practice, to thoroughly address this concern, we conduct additional experiments to evaluate the performance of all schedules with different learning rates in Appendix \ref{cross-lr-appendix}.

\paragraph{Baselines}
Research on budgeted-iteration training remains limited, with most existing approaches focusing on learning rate scheduling strategies \cite{smith2019super,li2019budgeted,chen2022rex}. 
To provide a fair comparison, we adopt several widely used and empirically effective learning rate schedules as baselines: \textbf{Step(SS)} \cite{ge2019step}, \textbf{Cosine(CS)} \cite{loshchilov2016sgdr}, \textbf{Cyclical (CLR) and OneCycle (1C)} \cite{smith2017cyclical,smith2019super}, \textbf{Budgeted training(BT, mathematically equivalent to linear decay)} \cite{li2019budgeted} and \textbf{Reflected Exponential (REX)} \cite{chen2022rex}. 
Details of these baselines are provided in Appendix \ref{baselines-appendix}.

\subsection{Experiments for vision classification tasks}\label{Experiment-results-Experiments-Vision}

We evaluate our proposed UBA schedule on vision benchmarks (e.g., CIFAR10/100 and ImageNet) using different architectures (ResNet18, ResNet34, ResNet50). 
In addition, we independently train models using fixed epoch budgets of 25\%, 50\%, and 100\% of the maximum training epochs, without reusing or interpolating results from longer training runs. This setup ensures that each budget setting is evaluated in isolation, thereby preserving the integrity of comparisons across low and high training budgets. 
Table \ref{vision-tasks-experiment} presents the validation accuracy across different training budgets, comparing UBA against six baseline schedules (see detailed results in Appendix \ref{Experiments-Vision-Tasks-appendix}). 

We find that: (i) UBA demonstrates the strongest performance across all training budgets, achieving superior results on both small-scale (CIFAR10/100 with ResNet18/34) and large-scale (ImageNet with ResNet50) benchmarks. This consistent improvement highlights the \textit{generalizability across model and dataset scales}.
(ii) UBA outperforms baselines not only at 100\% training budget but also at 25\% and 50\% iteration budgets, demonstrating its \textit{budget efficiency}, i.e. effectiveness in computation-constrained scenarios.
(iii)Notably, UBA shows robust performance even while the second-best schedule varies depending on both the datasets, architectures and training budgets.
For practitioners seeking reliable schedules without extensive method selection, UBA provides a default-strong choice. UBA eliminates the need for case-by-case baseline comparison and delivers \textit{stable superiority and reliability} regardless of datasets, architectures, training budgets and scales.

\begin{table}[!htbp]
		\centering
			\footnotesize
		\caption{Validation accuracy for vision classification tasks. We present validation accuracy on vision benchmarks (e.g., CIFAR10/100 and ImageNet) using different architectures (ResNet18, ResNet34, ResNet50) under fixed epoch budgets of 25\%, 50\%, and 100\% of the maximum training epochs.}\label{vision-tasks-experiment}
		\begin{tabular}{llllll}
			\toprule
			\multirow{2}{*}{Dataset}&\multirow{2}{*}{Network}&\multirow{2}{*}{Method}  & \multicolumn{3}{c}{training budget (epoch(\%)) }\\
			\cmidrule(r){4-6}
			& &  &75 (25\%)  & 150(50\%) & 300(100\%) \\
			\midrule
			\multirow{6}{*}{CIFAR10}&\multirow{6}{*}{ResNet18}&SS & 92.71$\pm$0.4243 &93.45$\pm$0.4822 &93.61$\pm$0.3035\\
			& &CS & 94.21$\pm$0.5116 &  94.24$\pm$0.2419 &95.52$\pm$0.2433\\
			& & CLR  & 92.13$\pm$0.2610 &  92.99$\pm$0.5459 &94.92$\pm$0.3811\\
			& & 1C &  94.23$\pm$0.1838   & 95.03$\pm$0.1353  &95.35$\pm$0.2984\\
			& &BT   &94.18$\pm$0.3546 &  94.79$\pm$0.4605 & 95.68$\pm$0.1168\\
			& & REX & 94.49$\pm$0.3225 &95.10$\pm$0.1929 & 95.45$\pm$0.1350\\
			\cmidrule(r){3-6}
			& &  UBA(ours)  & \textbf{94.57$\pm$0.3281} &  \textbf{95.26$\pm$0.2150} & \textbf{95.65$\pm$0.1589} \\
			\midrule
			& &  &75 (25\%)  & 150(50\%) & 300(100\%) \\
			\midrule
			\multirow{6}{*}{CIFAR100}&\multirow{6}{*}{ResNet34}&SS &71.09$\pm$0.2676 & 75.34$\pm$0.1802 &77.24$\pm$0.1808\\
			& &CS & 64.12$\pm$0.1808 & 69.84$\pm$0.7580  &74.29$\pm$0.5958\\
			& & CLR  &73.33$\pm$0.5635 & 73.41$\pm$0.2835 &74.74$\pm$0.2266\\
			& & 1C  & 73.38$\pm$0.3227   & 75.04$\pm$0.6382  &  77.86$\pm$0.2239   \\
			& &BT   & 72.75$\pm$0.2611 &75.35$\pm$0.2423 &78.58$\pm$0.2705\\ 
			& & REX& 73.64$\pm$0.5635 & 75.14$\pm$0.3110 &      78.04$\pm$0.1430     \\
			\cmidrule(r){3-6}
			& &  UBA(ours)   & \textbf{74.60$\pm$0.1508} & \textbf{76.63$\pm$0.2678} & \textbf{78.95$\pm$0.2818} \\
			\midrule
			& & &75 (25\%)  & 150(50\%) & 300(100\%) \\
			\midrule
			\multirow{6}{*}{ImageNet}&\multirow{6}{*}{ResNet50}&SS & 74.74$\pm$0.1386 &   77.24$\pm$0.3861 &   78.56$\pm$0.6435 \\
			& &CS & 75.84$\pm$0.2220 & 77.95$\pm$0.2164 & 79.07$\pm$0.0438   \\
			& & CLR & 73.90$\pm$1.4609 & 76.04$\pm$1.6419 &    77.84$\pm$1.3053   \\
			& & 1C    &  75.83$\pm$0.7792   &  77.74$\pm$0.5218  &  78.90$\pm$0.2956 \\
			& &BT  & 75.90$\pm$0.2786 & 77.81$\pm$0.4624  &   78.86$\pm$0.2814 \\
			& & REX & 75.85$\pm$0.8047 & 77.66$\pm$0.8853 &   78.66 $\pm$0.2828  \\
			\cmidrule(r){3-6}
			& &  UBA(ours)  & \textbf{76.29$\pm$0.4031} & \textbf{78.26$\pm$0.1640}  & \textbf{79.39$\pm$0.0170
			} \\
			\bottomrule
		\end{tabular}
	\end{table}

\subsection{Experiments for language models} \label{Experiment-results-Experiments-Language}

We evaluate UBA schedule on the OLMo\cite{groeneveld2024olmo}, a truly open language model based on decoder-only transformer architecture, across diverse benchmarks in language model evaluation. 
Since large language models commonly provide multiple scales to accommodate different compute constraints. We adjust both model size and training steps to explore budgets. 
We evaluate UBA on OLMo networks spanning four parameter scales: 36M, 73M, 151M, and 300M, covering normal to large-scale models. 
Due to space limitation, we defer  details  such as benchmarks introduction, experimental setting and overall results in Appendix \ref{Experiments-language-models}).

\begin{figure}[!ht]
	\centering
	{
		\includegraphics[width=0.9\textwidth]{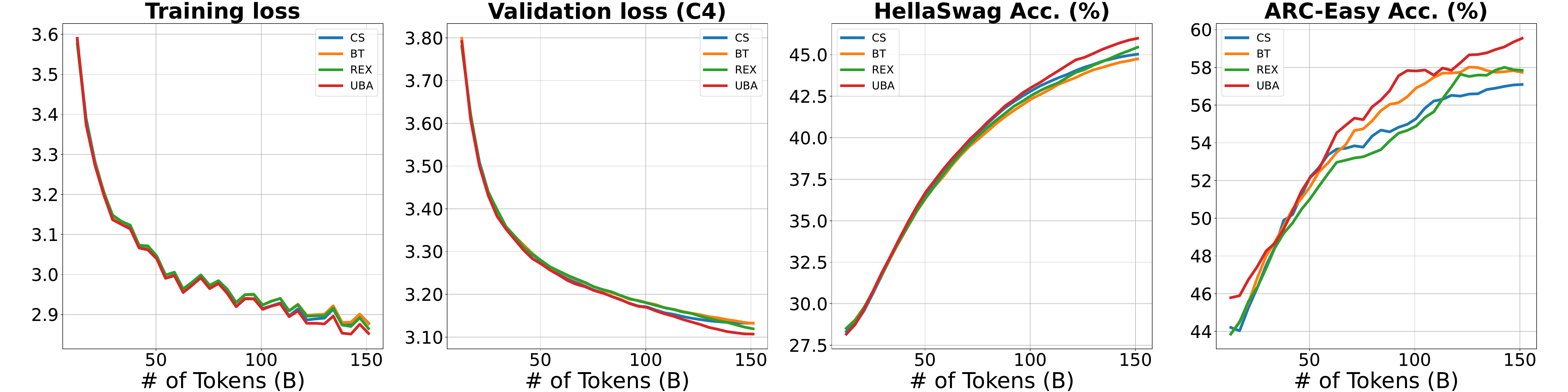}
		\label{}
	}
	\caption{Training dynamics and performance for language tasks under 150B tokens on 300M OLMo. We present the training loss, validation loss, and downstream performance on HSWAG and ARC-E, demonstrating that UBA schedule achieves superior performance.}
	\label{Training-dynamics-OLMo} 
\end{figure}

From Table \ref{OLMo-model}, UBA achieves state-of-the-art performance on approximately 50\% of the benchmarks across all scales while the second-best schedule varies. 
Furthermore, it achieves \textit{consistent superior average performance} among all baselines. 
It highlights the stable superiority and reliability of UBA, providing a \textit{default-strong choice}. 
Moreover, it demonstrates significant improvements in SciQ-73M($+1.7$) and ARC-E-300M($+2.63$), highlighting its ability to enhance generalization across diverse benchmarks. 
Besides, in Figure \ref{Training-dynamics-OLMo}, UBA consistently achieves lower training loss and validation loss throughout the training, indicating the \textit{efficient training ability} and downstream performance enhancement. 
Notably, while task-specific tuning of $\varphi$ can further improve model performance, we intentionally fix its value across all tasks and architectures to ensure fair evaluation. Remarkably, even with this universal $\varphi$ setting, UBA consistently outperforms baselines on diverse benchmarks, demonstrating  \textit{inherent robustness} to varying benchmarks and model scales.

\begin{table}[htbp]
	\centering
	\footnotesize
	\caption{Performance comparison between UBA and the best-performing baseline schedules on OLMo. We report the results of top-performing baseline schedule for the corresponding benchmark and model scale. The names of the top-performing baseline schedules are listed below the results. Bold values indicate UBA's superiority (See overall results of each schedule in Appendix \ref{Experiments-language-models}).}
	\label{OLMo-model}
	\begin{tabular}{p{0.4cm}p{0.8cm}p{0.5cm}p{0.9cm}p{0.6cm}p{0.5cm}p{0.95cm}p{1.1cm}p{0.55cm}p{0.55cm}p{0.55cm}p{0.55cm}p{0.55cm}}
		\toprule
		&  &    \multicolumn{11}{c}{Benchmark(accuracy \%) }     \\
		\cmidrule(r){3-13}
		Size&   Sched.     & PIQA & HSWAG & OBQA & SciQ & ARC-E&ARC-C & COPA& SIQA & SOC&  OTH&  Avg. \\
		\midrule
		\multirow{3}{*}{\rotatebox{90}{36M}} &Best	& 61.15&	27.91&	27.80&	68.10&45.26	&   23.41&  63.00& 41.45&  25.18&  29.86&  40.92\\
		&	baseline & (CLR) & (1C)&		(REX)& (REX)  &(SS) & (1C)& (SS) &		(CLR)& (REX) &(1C)&		(REX) \\
		\cmidrule(r){2-13}
		&UBA 	&60.39&	\textbf{27.98}&	27.40&	\textbf{68.20}&	\textbf{45.79}&	21.74&	\textbf{63.00}&	40.69&	24.33&	\textbf{30.14}&	\textbf{40.97}\\
		\midrule
		\multirow{3}{*}{\rotatebox{90}{73M}} &Best	& 62.46&  30.22& 28.80& 72.60&47.54&26.42&66.00&41.97&26.32&29.34&42.53\\
		&	baseline & (REX)& (REX)& (CS)&(CS)& (1C) &(1C)&(BT)&(BT)&(SS)&(REX)& (CS)\\
		\cmidrule(r){2-13}
		&UBA	&\textbf{63.17}&	30.09&	\textbf{28.80}&	\textbf{74.30}&	45.79&	22.74&	65.00&	41.45&	\textbf{27.13}&	28.85&	\textbf{42.73}\\
		\midrule
		\multirow{3}{*}{\rotatebox{90}{150M}} &Best	&66.00 & 35.72& 32.80&78.20&53.16&26.42&69.00&43.71&27.68&32.64&45.91\\
		&	baseline &(CS) &(REX)& (1C)&(BT)&(1C)&(BT)&(REX)&(CLR)&(REX)&(REX)&(REX)\\
		\cmidrule(r){2-13}
		&UBA	&65.23&	35.50&	29.80&	\textbf{78.30}&	50.35&	\textbf{27.42}&	67.00&	43.50&	\textbf{29.29}&	\textbf{32.72}&	\textbf{45.91}\\
		\midrule
		\multirow{3}{*}{\rotatebox{90}{300M}}&Best &70.18&46.30&33.40&84.40&57.72&28.43&72.00&44.58&29.50&36.74&49.70\\
		&	baseline& (REX)& (REX)&(CS) & (REX)& (REX)&(REX)&(BT)&(CS)&(REX)&(REX)&(REX)\\
		\cmidrule(r){2-13}
		&UBA	&69.48&	\textbf{46.44}&	\textbf{34.60}&	83.90&	\textbf{60.35}&	\textbf{29.10}&	\textbf{72.00}&	44.42&	28.53&	35.41&	\textbf{50.42}\\
		\bottomrule
	\end{tabular}
\end{table}

	\subsection{Ablation study}\label{Ablation-Study123}

	\paragraph{Performance across different optimizers}\label{cross-optimizer}
	
	UBA originates from the optimization problem under gradient descent dynamics. 
	While modern optimizers (e.g., AdamW \cite{loshchilov2017decoupled}) introduce momentum and adaptive mechanisms, they maintain the fundamental property of gradient-based updates. 
	To verify the cross-optimizer performance, we conduct ablation studies between SGD and AdamW optimizers(see detailed results in Appendix \ref{cross-optimizer-appendix}). 
	The results show that UBA achieves SOTAs on both SGD and AdamW optimizers, demonstrating the \textit{cross-optimizer robustness} of UBA. 
	This highlights its \textit{broad applicability} beyond standard gradient descent.

	\paragraph{Parameter analysis of $\varphi$}\label{Ablation-Study-d}
	We perform a sensitivity analysis of $\varphi \in \{ 0.25, 0.5, 1.0, 2.5,5,10 \}$ in equation \eqref{fitted-function} , which controls the variation speed of learning rate (see detailed setting, results and analyses in Appendix \ref{cross-d-appendix}). 
	 
	Figure \ref{Cross-varphi-figure} show AdamW prefers smaller $\varphi$ while SGD requires larger $\varphi$.
	We attribute this phenomenon to the preconditioning effect of AdamW. 
	As the relationship formalized in Proposition \ref{Proposition-model}), smaller $\varphi$ is favored for low optimization difficult, while larger $\varphi$ is beneficial for difficult optimization. 
	AdamW adapts the learning rate by scaling gradients with the second moment estimate $\sqrt{v_t}$, implicitly reducing the condition number of the optimization landscape.
	Thus the schedule with smaller $\varphi$ is preferred. 
	In contrast, SGD lacks such adaptive mechanisms, thus a larger 	$\varphi$ is more suitable for this situation. 
	This alignment between theory and experiment underscores the importance of tailoring  $\varphi$ to the optimizer's characteristics. \textit{Overall, $\varphi$ is related to a generalized optimization difficulty, where optimization precondition effect, datasets distribution and network architecture are all related to optimization difficulty.}

\paragraph{Performance across different periods}

Our schedule has a  periodic phase-based learning rate adjustment setting, where the learning rate at the $k$-th phase is dynamically determined by the $k$-th local minimum of the loss landscape. 
To validate our scheduling strategy, we conduct experiments by varying $K$ (see detailed results in Appendix  \ref{cross-K-appendix}).  The results suggests that multi-phase scheduling captures the dynamic features of the loss surface more finely, but it needs careful selection for $\varphi$, where $\varphi$ reflects optimization difficulty. However, selecting optimal $\varphi$ values per phase for multi-phase remains non-trivial, which motivates future work on automated landscape-aware $\varphi$ tuning.

\section{Conclusion} \label{conclusion}

In this paper, we  construct a unified budget-aware training optimization framework that inherently adapts to landscape curvature variations, enhancing training robustness. 
From this optimization framework, we propose the Unified Budget-Aware (UBA) schedule. Extensive experiments demonstrate UBA  \textit{consistently surpasses}  the commonly-used schedules across diverse vision and language tasks, spanning network architectures (e.g., ResNet, OLMo) and scales, under different training-iteration budgets. 
Moreover, our performance improvements come with negligible computational overhead, which constitutes one of its key advantages.

Theoretically and empirically,   we observe that the parameter $\varphi$ correlates with the optimization difficulty of the training process, influences the optimal choice of $\varphi$ and  UBA's performance. 
However, the explicit relationship between $\varphi$ and optimization difficulty remains unexplored and no established evaluation metric exists to quantify the optimization difficulty. These limitations motivate future work on optimization difficulty-aware $\varphi$ tuning.

Despite these open questions, the effectiveness, implementation simplicity and ease of tuning make the UBA a practical, must-try schedule for deep learning practitioners. We hope this work could motivate more studies on learning rate scheduling.

\section{Acknowledgments and disclosure of funding}

Z. Lin was supported by National Key R\&D Program of China (2022ZD0160300), the NSF China (No. 62276004) and the State Key Laboratory of General Artificial Intelligence.

\newpage

%\section*{References}

{
\small
\bibliography{mybibfile}
}

%%%%%%%%%%%%%%%%%%%%%%%%%%%%%%%%%%%%%%%%%%%%%%%%%%%%%%%%%%%%

\newpage
\section*{NeurIPS Paper Checklist}

\begin{enumerate}
	
	\item {\bf Claims}
	\item[] Question: Do the main claims made in the abstract and introduction accurately reflect the paper's contributions and scope?
	\item[] Answer: \answerYes{} % Replace by \answerYes{}, \answerNo{}, or \answerNA{}.
	\item[] Justification: The main claims in the abstract and introduction (e.g., UBA consistently surpasses the commonly-used schedules on vision and language tasks and sets new SoTAs for many networks and frameworks, e.g. ResNet and OLMo, under different training iterations.) are fully aligned with the paper’s contributions and scope. 
	We provide an in-depth experiments and analyses of the proposed learning schedule across vision and language tasks, networks, scales and training budgets. 
	\item[] Guidelines:
	\begin{itemize}
		\item The answer NA means that the abstract and introduction do not include the claims made in the paper.
		\item The abstract and/or introduction should clearly state the claims made, including the contributions made in the paper and important assumptions and limitations. A No or NA answer to this question will not be perceived well by the reviewers. 
		\item The claims made should match theoretical and experimental results, and reflect how much the results can be expected to generalize to other settings. 
		\item It is fine to include aspirational goals as motivation as long as it is clear that these goals are not attained by the paper. 
	\end{itemize}
	
	\item {\bf Limitations}
	\item[] Question: Does the paper discuss the limitations of the work performed by the authors?
	\item[] Answer: \answerYes{} % Replace by \answerYes{}, \answerNo{}, or \answerNA{}.
	\item[] Justification: We explicitly discuss limitations and future work in Section \ref{conclusion}. Besides, the assumptions can be found in Appendix \ref{Assumptions-appendix}. We test our method across vision and language tasks, various scales of datasets and networks, under different iteration budgets.
	\item[] Guidelines:
	\begin{itemize}
		\item The answer NA means that the paper has no limitation while the answer No means that the paper has limitations, but those are not discussed in the paper. 
		\item The authors are encouraged to create a separate "Limitations" section in their paper.
		\item The paper should point out any strong assumptions and how robust the results are to violations of these assumptions (e.g., independence assumptions, noiseless settings, model well-specification, asymptotic approximations only holding locally). The authors should reflect on how these assumptions might be violated in practice and what the implications would be.
		\item The authors should reflect on the scope of the claims made, e.g., if the approach was only tested on a few datasets or with a few runs. In general, empirical results often depend on implicit assumptions, which should be articulated.
		\item The authors should reflect on the factors that influence the performance of the approach. For example, a facial recognition algorithm may perform poorly when image resolution is low or images are taken in low lighting. Or a speech-to-text system might not be used reliably to provide closed captions for online lectures because it fails to handle technical jargon.
		\item The authors should discuss the computational efficiency of the proposed algorithms and how they scale with dataset size.
		\item If applicable, the authors should discuss possible limitations of their approach to address problems of privacy and fairness.
		\item While the authors might fear that complete honesty about limitations might be used by reviewers as grounds for rejection, a worse outcome might be that reviewers discover limitations that aren't acknowledged in the paper. The authors should use their best judgment and recognize that individual actions in favor of transparency play an important role in developing norms that preserve the integrity of the community. Reviewers will be specifically instructed to not penalize honesty concerning limitations.
	\end{itemize}
	
	\item {\bf Theory assumptions and proofs}
	\item[] Question: For each theoretical result, does the paper provide the full set of assumptions and a complete (and correct) proof?
	\item[] Answer: \answerYes{} % Replace by \answerYes{}, \answerNo{}, or \answerNA{}.
	\item[] Justification:  The full set of assumptions  are in Appendix \ref{Assumptions-appendix}. The complete derivations and proof are in Appendix \ref{Optimization-problem-derivation} and \ref{Propositions-lemmas}. 
	\item[] Guidelines:
	\begin{itemize}
		\item The answer NA means that the paper does not include theoretical results. 
		\item All the theorems, formulas, and proofs in the paper should be numbered and cross-referenced.
		\item All assumptions should be clearly stated or referenced in the statement of any theorems.
		\item The proofs can either appear in the main paper or the supplemental material, but if they appear in the supplemental material, the authors are encouraged to provide a short proof sketch to provide intuition. 
		\item Inversely, any informal proof provided in the core of the paper should be complemented by formal proofs provided in appendix or supplemental material.
		\item Theorems and Lemmas that the proof relies upon should be properly referenced. 
	\end{itemize}
	
	\item {\bf Experimental result reproducibility}
	\item[] Question: Does the paper fully disclose all the information needed to reproduce the main experimental results of the paper to the extent that it affects the main claims and/or conclusions of the paper (regardless of whether the code and data are provided or not)?
	\item[] Answer: \answerYes{} % Replace by \answerYes{}, \answerNo{}, or \answerNA{}.
	\item[] Justification: In Appendix \ref{Experimental-details}, we carefully describe provide complete information needed to reproduce the experimental results, including hyper-parameters(initial learning rates, batch sizes, configurations of optimizer and schedules) for all tasks, implementation versions(such as PyTorch 2.4.1+cu118 on CIFAR), hardware configurations(GPU models, memory), full results of ablation studies (see Appendix \ref{Experimental-details}).  Code and data are also available. 
	\item[] Guidelines:
	\begin{itemize}
		\item The answer NA means that the paper does not include experiments.
		\item If the paper includes experiments, a No answer to this question will not be perceived well by the reviewers: Making the paper reproducible is important, regardless of whether the code and data are provided or not.
		\item If the contribution is a dataset and/or model, the authors should describe the steps taken to make their results reproducible or verifiable. 
		\item Depending on the contribution, reproducibility can be accomplished in various ways. For example, if the contribution is a novel architecture, describing the architecture fully might suffice, or if the contribution is a specific model and empirical evaluation, it may be necessary to either make it possible for others to replicate the model with the same dataset, or provide access to the model. In general. releasing code and data is often one good way to accomplish this, but reproducibility can also be provided via detailed instructions for how to replicate the results, access to a hosted model (e.g., in the case of a large language model), releasing of a model checkpoint, or other means that are appropriate to the research performed.
		\item While NeurIPS does not require releasing code, the conference does require all submissions to provide some reasonable avenue for reproducibility, which may depend on the nature of the contribution. For example
		\begin{enumerate}
			\item If the contribution is primarily a new algorithm, the paper should make it clear how to reproduce that algorithm.
			\item If the contribution is primarily a new model architecture, the paper should describe the architecture clearly and fully.
			\item If the contribution is a new model (e.g., a large language model), then there should either be a way to access this model for reproducing the results or a way to reproduce the model (e.g., with an open-source dataset or instructions for how to construct the dataset).
			\item We recognize that reproducibility may be tricky in some cases, in which case authors are welcome to describe the particular way they provide for reproducibility. In the case of closed-source models, it may be that access to the model is limited in some way (e.g., to registered users), but it should be possible for other researchers to have some path to reproducing or verifying the results.
		\end{enumerate}
	\end{itemize}

	\item {\bf Open access to data and code}
	\item[] Question: Does the paper provide open access to the data and code, with sufficient instructions to faithfully reproduce the main experimental results, as described in supplemental material?
	\item[] Answer: \answerYes{} % Replace by \answerYes{}, \answerNo{}, or \answerNA{}.
	\item[] Justification: We provide open access to the code in the Appendix. Moreover, we provide sufficient instructions to faithfully reproduce the main experimental results in Appendix \ref{Experimental-details}.
	\item[] Guidelines:
	\begin{itemize}
		\item The answer NA means that paper does not include experiments requiring code.
		\item Please see the NeurIPS code and data submission guidelines (\url{https://nips.cc/public/guides/CodeSubmissionPolicy}) for more details.
		\item While we encourage the release of code and data, we understand that this might not be possible, so “No” is an acceptable answer. Papers cannot be rejected simply for not including code, unless this is central to the contribution (e.g., for a new open-source benchmark).
		\item The instructions should contain the exact command and environment needed to run to reproduce the results. See the NeurIPS code and data submission guidelines (\url{https://nips.cc/public/guides/CodeSubmissionPolicy}) for more details.
		\item The authors should provide instructions on data access and preparation, including how to access the raw data, preprocessed data, intermediate data, and generated data, etc.
		\item The authors should provide scripts to reproduce all experimental results for the new proposed method and baselines. If only a subset of experiments are reproducible, they should state which ones are omitted from the script and why.
		\item At submission time, to preserve anonymity, the authors should release anonymized versions (if applicable).
		\item Providing as much information as possible in supplemental material (appended to the paper) is recommended, but including URLs to data and code is permitted.
	\end{itemize}

	\item {\bf Experimental setting/details}
	\item[] Question: Does the paper specify all the training and test details (e.g., data splits, hyper-parameters, how they were chosen, type of optimizer, etc.) necessary to understand the results?
	\item[] Answer: \answerYes{} % Replace by \answerYes{}, \answerNo{}, or \answerNA{}.
	\item[] Justification: These details are described in Appendix \ref{Experimental-details}.
	\item[] Guidelines:
	\begin{itemize}
		\item The answer NA means that the paper does not include experiments.
		\item The experimental setting should be presented in the core of the paper to a level of detail that is necessary to appreciate the results and make sense of them.
		\item The full details can be provided either with the code, in appendix, or as supplemental material.
	\end{itemize}
	
	\item {\bf Experiment statistical significance}
	\item[] Question: Does the paper report error bars suitably and correctly defined or other appropriate information about the statistical significance of the experiments?
	\item[] Answer: \answerNo{} % Replace by \answerYes{}, \answerNo{}, or \answerNA{}.
	\item[] Justification: Due to the high computational cost of training large-scale models (e.g., Transformers on ImageNet), we report results from a single run with fixed random seeds. To ensure reliability, we list experimental setting and provide code for deterministic reproduction. 
	To maintain uniform evaluation protocols across all experiments (large/small-scale), we prioritized identical statistical reporting standards. This avoids selective rigor that could mislead comparisons. Besides, we conduct parameter analyses to improve reliability.
	Moreover, small-scale results align with the expectations and results on large-scale experiments, reducing the need for empirical variance quantification. 
	However, we provide error bars  of curve fitting on numerical solution.
	\item[] Guidelines:
	\begin{itemize}
		\item The answer NA means that the paper does not include experiments.
		\item The authors should answer "Yes" if the results are accompanied by error bars, confidence intervals, or statistical significance tests, at least for the experiments that support the main claims of the paper.
		\item The factors of variability that the error bars are capturing should be clearly stated (for example, train/test split, initialization, random drawing of some parameter, or overall run with given experimental conditions).
		\item The method for calculating the error bars should be explained (closed form formula, call to a library function, bootstrap, etc.)
		\item The assumptions made should be given (e.g., Normally distributed errors).
		\item It should be clear whether the error bar is the standard deviation or the standard error of the mean.
		\item It is OK to report 1-sigma error bars, but one should state it. The authors should preferably report a 2-sigma error bar than state that they have a 96\% CI, if the hypothesis of Normality of errors is not verified.
		\item For asymmetric distributions, the authors should be careful not to show in tables or figures symmetric error bars that would yield results that are out of range (e.g. negative error rates).
		\item If error bars are reported in tables or plots, The authors should explain in the text how they were calculated and reference the corresponding figures or tables in the text.
	\end{itemize}
	
	\item {\bf Experiments compute resources}
	\item[] Question: For each experiment, does the paper provide sufficient information on the computer resources (type of compute workers, memory, time of execution) needed to reproduce the experiments?
	\item[] Answer: \answerYes{} % Replace by \answerYes{}, \answerNo{}, or \answerNA{}.
	\item[] Justification: The computation resource is described in Appendix \ref{Experimental-details}. we carefully describe provide complete information  including type of compute workers and memory.
	\item[] Guidelines:
	\begin{itemize}
		\item The answer NA means that the paper does not include experiments.
		\item The paper should indicate the type of compute workers CPU or GPU, internal cluster, or cloud provider, including relevant memory and storage.
		\item The paper should provide the amount of compute required for each of the individual experimental runs as well as estimate the total compute. 
		\item The paper should disclose whether the full research project required more compute than the experiments reported in the paper (e.g., preliminary or failed experiments that didn't make it into the paper). 
	\end{itemize}
	
	\item {\bf Code of ethics}
	\item[] Question: Does the research conducted in the paper conform, in every respect, with the NeurIPS Code of Ethics \url{https://neurips.cc/public/EthicsGuidelines}?
	\item[] Answer: \answerYes{} % Replace by \answerYes{}, \answerNo{}, or \answerNA{}.
	\item[] Justification: We conform with the NeurIPS code of Ethics.
	\item[] Guidelines:
	\begin{itemize}
		\item The answer NA means that the authors have not reviewed the NeurIPS Code of Ethics.
		\item If the authors answer No, they should explain the special circumstances that require a deviation from the Code of Ethics.
		\item The authors should make sure to preserve anonymity (e.g., if there is a special consideration due to laws or regulations in their jurisdiction).
	\end{itemize}

	\item {\bf Broader impacts}
	\item[] Question: Does the paper discuss both potential positive societal impacts and negative societal impacts of the work performed?
	\item[] Answer: \answerYes{} % Replace by \answerYes{}, \answerNo{}, or \answerNA{}.
	\item[] Justification: The paper discusses societal impacts in Section \ref{conclusion}. It highlights potential benefits, such as improving model performance and offering a practical scheduling strategy for practitioners, which may inspire further research on learning rate scheduling.
	Since we note that the proposed method is a generic optimization algorithm with no explicit negative societal impacts, indirect risks such as computational costs from sub-optimal hyper-parameter selection are not thoroughly examined. 
	We provide the hyper-parameter selection  guideline through analyses to mitigate this indirect risk.
	\item[] Guidelines:
	\begin{itemize}
		\item The answer NA means that there is no societal impact of the work performed.
		\item If the authors answer NA or No, they should explain why their work has no societal impact or why the paper does not address societal impact.
		\item Examples of negative societal impacts include potential malicious or unintended uses (e.g., disinformation, generating fake profiles, surveillance), fairness considerations (e.g., deployment of technologies that could make decisions that unfairly impact specific groups), privacy considerations, and security considerations.
		\item The conference expects that many papers will be foundational research and not tied to particular applications, let alone deployments. However, if there is a direct path to any negative applications, the authors should point it out. For example, it is legitimate to point out that an improvement in the quality of generative models could be used to generate deepfakes for disinformation. On the other hand, it is not needed to point out that a generic algorithm for optimizing neural networks could enable people to train models that generate Deepfakes faster.
		\item The authors should consider possible harms that could arise when the technology is being used as intended and functioning correctly, harms that could arise when the technology is being used as intended but gives incorrect results, and harms following from (intentional or unintentional) misuse of the technology.
		\item If there are negative societal impacts, the authors could also discuss possible mitigation strategies (e.g., gated release of models, providing defenses in addition to attacks, mechanisms for monitoring misuse, mechanisms to monitor how a system learns from feedback over time, improving the efficiency and accessibility of ML).
	\end{itemize}
	
	\item {\bf Safeguards}
	\item[] Question: Does the paper describe safeguards that have been put in place for responsible release of data or models that have a high risk for misuse (e.g., pretrained language models, image generators, or scraped datasets)?
	\item[] Answer: \answerNA{} % Replace by \answerYes{}, \answerNo{}, or \answerNA{}.
	\item[] Justification: Our paper poses no such risks.
	\item[] Guidelines:
	\begin{itemize}
		\item The answer NA means that the paper poses no such risks.
		\item Released models that have a high risk for misuse or dual-use should be released with necessary safeguards to allow for controlled use of the model, for example by requiring that users adhere to usage guidelines or restrictions to access the model or implementing safety filters. 
		\item Datasets that have been scraped from the Internet could pose safety risks. The authors should describe how they avoided releasing unsafe images.
		\item We recognize that providing effective safeguards is challenging, and many papers do not require this, but we encourage authors to take this into account and make a best faith effort.
	\end{itemize}
	
	\item {\bf Licenses for existing assets}
	\item[] Question: Are the creators or original owners of assets (e.g., code, data, models), used in the paper, properly credited and are the license and terms of use explicitly mentioned and properly respected?
	\item[] Answer: \answerYes{} % Replace by \answerYes{}, \answerNo{}, or \answerNA{}.
	\item[] Justification:  All assets are properly cited.
	\item[] Guidelines:
	\begin{itemize}
		\item The answer NA means that the paper does not use existing assets.
		\item The authors should cite the original paper that produced the code package or dataset.
		\item The authors should state which version of the asset is used and, if possible, include a URL.
		\item The name of the license (e.g., CC-BY 4.0) should be included for each asset.
		\item For scraped data from a particular source (e.g., website), the copyright and terms of service of that source should be provided.
		\item If assets are released, the license, copyright information, and terms of use in the package should be provided. For popular datasets, \url{paperswithcode.com/datasets} has curated licenses for some datasets. Their licensing guide can help determine the license of a dataset.
		\item For existing datasets that are re-packaged, both the original license and the license of the derived asset (if it has changed) should be provided.
		\item If this information is not available online, the authors are encouraged to reach out to the asset's creators.
	\end{itemize}
	
	\item {\bf New assets}
	\item[] Question: Are new assets introduced in the paper well documented and is the documentation provided alongside the assets?
	\item[] Answer: \answerNA{} % Replace by \answerYes{}, \answerNo{}, or \answerNA{}.
	\item[] Justification: The paper does not release new assets.
	\item[] Guidelines:
	\begin{itemize}
		\item The answer NA means that the paper does not release new assets.
		\item Researchers should communicate the details of the dataset/code/model as part of their submissions via structured templates. This includes details about training, license, limitations, etc. 
		\item The paper should discuss whether and how consent was obtained from people whose asset is used.
		\item At submission time, remember to anonymize your assets (if applicable). You can either create an anonymized URL or include an anonymized zip file.
	\end{itemize}
	
	\item {\bf Crowdsourcing and research with human subjects}
	\item[] Question: For crowdsourcing experiments and research with human subjects, does the paper include the full text of instructions given to participants and screenshots, if applicable, as well as details about compensation (if any)? 
	\item[] Answer:  \answerNA{} % Replace by \answerYes{}, \answerNo{}, or \answerNA{}.
	\item[] Justification: The paper does not involve crowdsourcing nor research with human subjects.
	\item[] Guidelines:
	\begin{itemize}
		\item The answer NA means that the paper does not involve crowdsourcing nor research with human subjects.
		\item Including this information in the supplemental material is fine, but if the main contribution of the paper involves human subjects, then as much detail as possible should be included in the main paper. 
		\item According to the NeurIPS Code of Ethics, workers involved in data collection, curation, or other labor should be paid at least the minimum wage in the country of the data collector. 
	\end{itemize}
	
	\item {\bf Institutional review board (IRB) approvals or equivalent for research with human subjects}
	\item[] Question: Does the paper describe potential risks incurred by study participants, whether such risks were disclosed to the subjects, and whether Institutional Review Board (IRB) approvals (or an equivalent approval/review based on the requirements of your country or institution) were obtained?
	\item[] Answer: \answerNA{} % Replace by \answerYes{}, \answerNo{}, or \answerNA{}.
	\item[] Justification: The paper does not involve crowdsourcing nor research with human subjects.
	\item[] Guidelines:
	\begin{itemize}
		\item The answer NA means that the paper does not involve crowdsourcing nor research with human subjects.
		\item Depending on the country in which research is conducted, IRB approval (or equivalent) may be required for any human subjects research. If you obtained IRB approval, you should clearly state this in the paper. 
		\item We recognize that the procedures for this may vary significantly between institutions and locations, and we expect authors to adhere to the NeurIPS Code of Ethics and the guidelines for their institution. 
		\item For initial submissions, do not include any information that would break anonymity (if applicable), such as the institution conducting the review.
	\end{itemize}
	
	\item {\bf Declaration of LLM usage}
	\item[] Question: Does the paper describe the usage of LLMs if it is an important, original, or non-standard component of the core methods in this research? Note that if the LLM is used only for writing, editing, or formatting purposes and does not impact the core methodology, scientific rigorousness, or originality of the research, declaration is not required.
	%this research? 
	\item[] Answer: \answerNA{} % Replace by \answerYes{}, \answerNo{}, or \answerNA{}.
	\item[] Justification: The core method development in this research does not involve LLMs as any important, original, or non-standard components.
	\item[] Guidelines:
	\begin{itemize}
		\item The answer NA means that the core method development in this research does not involve LLMs as any important, original, or non-standard components.
		\item Please refer to our LLM policy (\url{https://neurips.cc/Conferences/2025/LLM}) for what should or should not be described.
	\end{itemize}
	
\end{enumerate}

\newpage

\appendix

\section*{Appendix}

\section{Methodology}

\paragraph{Conceptual illustration} \label{conceptual-illustration-paragraph}

The optimization landscape of neural networks is highly dynamic due to data distribution and sampling, and sensitive to various factors, such as network architecture, optimization and hyper-parameters. 
Figure \ref{motivation-eq1}  illustrates the inherent uncertainties during the training process. 
The black-and-white surfaces reflect the loss landscape processing one batch, while the colored surface represents the landscape following the next batch.
The substantial variations in the landscape between consecutive batches significantly affect the training trajectory, making it challenging to design a one-size-fits-all learning rate strategy that can effectively adapt to such fluctuations.
Therefore, we aim to develop a strategy a robust approach to be scheduled leaning rate across diverse optimization landscapes. We construct a  budget-iteration-aware framework for learning rate optimization model that guarantees minimal loss within constrained training iterations under the worst-case conditions.

We analyze loss landscape by a quadratic approximation around nearby strict local optima. This approach enables us to capture the key features of the loss surface near these points. 
Therefore, the trajectory of optimization is impacted by the characteristics of the nearby strict local optima, since the optimization process is inherently  shaped by the loss landscape and the key features of the surface are captured by these optima, as illustrated in Figure \ref{motivation-eq2},  
Consequently, we can derive the learning rate within the optimization model by the information of these nearby strict local optima. 

\begin{figure}[!ht]
	\centering
	\subfigure[]{
		\includegraphics[width=0.425\linewidth]{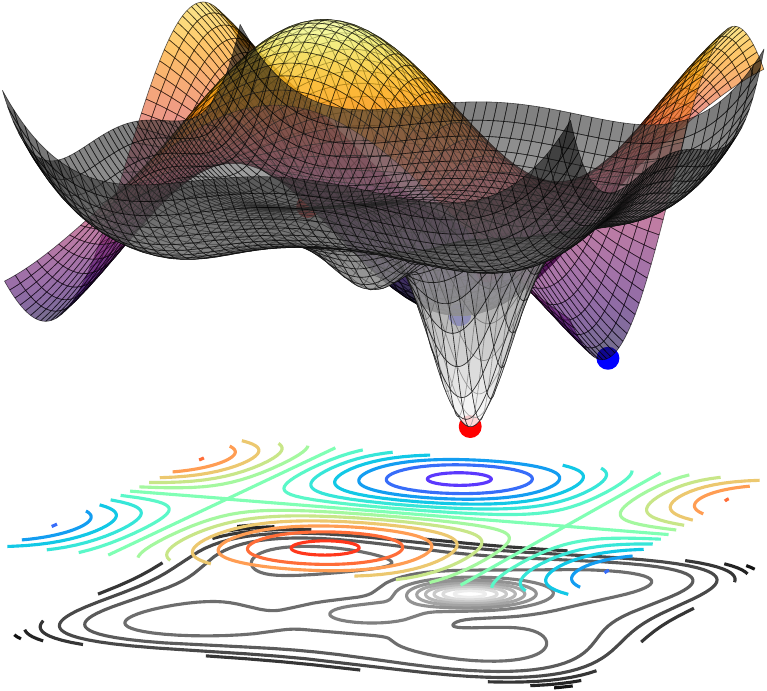}
		\label{motivation-eq1}
	}
	%	\hfill
	\subfigure[]{
		\includegraphics[width=0.425\linewidth]{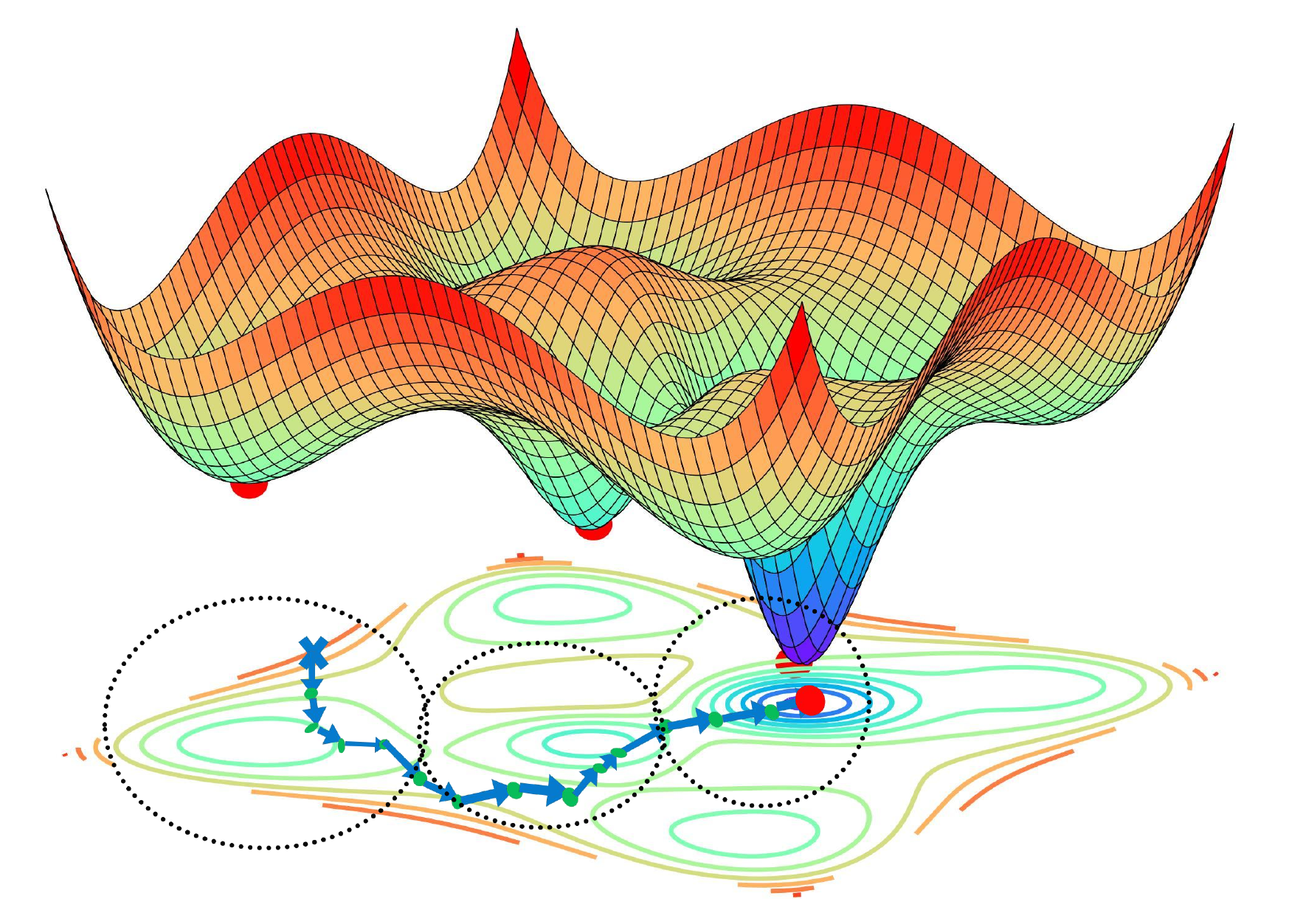}
		\label{motivation-eq2}
	}
	\caption{Conceptual illustration: visualization of the motivations behind the proposed modeling approaches. (a) shows the rationale for the modeling approach in \eqref{eq1}, while (b) explains the modeling approach for \eqref{eq2}. The solid line represents the optimization trajectory, while the dashed circles indicate the local approximation around corresponding minima.}
	\label{fig:learning-rate-schedule-Conceptual-illustration}
\end{figure}

\paragraph{Assumptions} \label{Assumptions-appendix}

The optimization behavior of a neural network is fundamentally governed by the geometry of its loss landscape, which emerges from the interaction of parameter configuration,  datasets characteristics, sampling, network architecture, optimization preconditioning effect. 
In the vicinity of any strict local minimum, the loss surface can be approximated by a quadratic method whose curvature is determined by the local Hessian matrix. 
Thus the optimization trajectory is consequently dominated by the geometric properties (e.g., Hessian features, curvature profiles) of the nearest strict local minima.

Many tasks train models using epochs, where one epoch represents a full pass through the datasets.  Under this circumstance, an iteration occurs when the model processes a single batch. While the number of iterations per epoch varies with batch size, training with a fixed epoch budget becomes equivalent to using a fixed iteration count when batch size remains constant. For consistency, this paper uses iterations as our primary unit of optimization model construction and networks training.

The sampling Hessian satisfies: $ \mathbb{E}_{\xi} \left[ n_t n_t^{\top} \right] \preceq \sigma^2 H_f^{(k)} $ for some constant $\sigma$. 
This assumption is followed by the work \cite{ge2019step,pan2021eigencurve}. Here, $\sigma^2$ measures the degree of Hessian inconsistency across data samples, i.e., the variance in the per-sample Hessian matrices. Besides, a smaller $\sigma^2$ indicates a more stable curvature landscape between each data sampling, leading to lower noise amplification during optimization.

\paragraph{Optimization problem derivation for Section \ref{section31}} \label{Optimization-problem-derivation}

Considering the first constraint $W_{t+1}=W_{t}-\eta_t H_f^{(k)}(\xi)  (W_t -\bar{W}^{(k)})$ in min-max optimization \eqref{eq2}, we can reformulate it as 
\begin{equation}\label{eq3}
	W_{t+1}-\bar{W}^{(k)}=(I-\eta_t H_f^{(k)} (\xi)  ) (W_t -\bar{W}^{(k)}), 
\end{equation}
where $I$ denotes the identity matrix. By iteratively applying this equation \eqref{eq3}, we obtain 
\begin{equation}\label{eq4}
	\begin{aligned}
		&(W_{T_{k+1}} -\bar{W}^{(k)})=\\
		&(I-\eta_{T_{k+1}-1} H_f^{(k)}(\xi))\cdots (I-\eta_{T_{k}+1} H_f^{(k)}(\xi))(W_{T_{k}+1 } -\bar{W}^{(k)}).
	\end{aligned}
\end{equation}
%where $W_0 $ is  the initial weights. 

Since the matrix $H_f^{(k)} (\xi) $ is positive semi-definite, it possesses $N$ eigenvalues $\lambda_1^{(k)}(f(\xi)), \lambda_2^{(k)} (f(\xi)), \cdots , \lambda_N^{(k)}(f(\xi)) $ abbreviated as $\lambda_i^{(k)}$, which satisfy $0 < \lambda_{l}^{(k)}  \leq \lambda_i^{(k)} (f(\xi)) \leq  \lambda_{u}^{(k)}$ for all $i$. 
Here, $\lambda_{l}^{(k)}$ and $\lambda_{u}^{(k)}$ denote the bounds on the non-zero eigenvalues of the Hessian around $k$-th optimum.  
It also satisfies $\lambda_{u}^{(k)}\le \frac{2}{\eta}$ where $\eta$ is the learning rate  to  guarantee the convergence \cite{zhang2024exploring}.
Moreover, there exist $N$ linearly independent eigenvectors $u_1^{(k)}(f(\xi)), u_2^{(k)}(f(\xi)), \cdots , u_N^{(k)} (f(\xi)) $ abbreviated as $u_i^{(k)}$, which can form the basis for the $N$ dimension vector space. Then we have 
$H_f^{(k)} (\xi) u_i^{(k)}= \lambda_i^{(k)}  u_i^{(k)} \quad (i=1,2,\cdots, N )$ 
and the initial deviation from the optimal solution can be expressed as $W_{T_{k}} -\bar{W}^{(k)} = \sum\limits_{i=1}^{N}   s_i^{(k)} u_i^{(k)}$ 
where $s_i^{(k)}$ are the coefficients corresponding to the eigenvector components. 
Thus, we obtain 
\begin{equation}\label{eq7}
	\begin{aligned}
		\left\| W_{T_{k+1}} -\bar{W}^{(k)} \right\|_2^2 =& \sum\limits_{i=1}^{N}   (s_i^{(k)}  )^2  \|u_i^{(k)}\|_2^2  \left[ \prod\limits_{t=1+T_{k} }^{T_{k+1}} (1-\eta_{t}\lambda_i^{(k)} )  \right]^2 \\
		\leq & \sum\limits_{i=1}^{N}   (s_i^{(k)}  )^2  \|u_i^{(k)}\|_2^2  \max_{\lambda_{l}^{(k)} \leq \lambda_i^{(k)} \leq \lambda_{u} ^{(k)}} \left[ \prod\limits_{t=1+T_{k} }^{T_{k+1}} (1-\eta_{t}\lambda_i^{(k)} )  \right]^2\\
		=&   \max_{\lambda_{l}^{(k)} \leq \lambda_i^{(k)} \leq \lambda_{u} ^{(k)}} \left[ \prod\limits_{t=1+T_{k} }^{T_{k+1}} (1-\eta_{t}\lambda_i^{(k)} )  \right]^2  \left\| W_{T_{k}} -\bar{W}^{(k)} \right\|_2^2
	\end{aligned}
\end{equation}

\newpage
\section{Numerical solution and curve fitting}\label{Numerical-solution-and-curve-fitting}

To solve the constrained min-max problem \eqref{eq9} as follow, 
\begin{equation*}\label{}
	\begin{aligned}
		\min_{ \eta_{1+T_{k}}  , \eta_{2+T_{k}}   ,..., \eta_{T_{k+1}}   } &\max_{\lambda_{l}^{(k)} \leq \lambda_i^{(k)} \leq \lambda_{u} ^{(k)}} \prod\limits_{t=1+T_{k} }^{T_{k+1}} \left[(1-\eta_{t}\lambda_i^{(k)} )  \right]^2\\
		s.t. \quad & \eta_{1+T_{k}}  , \eta_{2+T_{k}}   ,..., \eta_{T_{k+1}}  \in [\eta_{\min},\eta_{\max}]   \\
		& k=1,2,\dots K
	\end{aligned}
\end{equation*}
we adopt an iterative projected gradient method that alternates between minimizing over the variables $\eta_{t}$ and maximizing over the parameters $\lambda_i^{(k)}$. 
Specifically, at each iteration,  we update $\eta_{t}$ with fixed $\lambda_i^{(k)}$ as follows
\begin{equation*}
	\eta_t \gets \mathcal{P}_{[\eta_{\min},\eta_{\max}]} \left( \eta_t - \alpha \nabla_{\eta_j} \mathcal{L}(\eta_t, \lambda_i^{(k)}) \right).  
\end{equation*}
For fixed $\eta_t$, we update $\lambda_i^{(k)}$ via
\begin{equation*}
	\lambda_i^{(k)} \gets \mathcal{P}_{[\lambda_l, \lambda_u]} \left( \lambda_i^{(k)} + \beta \nabla_{\lambda_i^{(k)}} \mathcal{L}(\eta_t, \lambda_i^{(k)} ) \right),  
\end{equation*}
where $\mathcal{P}$ is the projection over box constraints, $\alpha$ and $\beta$ are step sizes.

This approach requires per-network numerical optimization to determine optimal learning rates case-by-case. 
To circumvent repeated numerical optimization for learning rate scheduling whenever a new network is trained, we propose to fit these solutions with a parametric function universally across networks. 

We note that the solution of optimization model is an unordered set $\{\eta_{1+T_{k}}  , \eta_{2+T_{k}}   ,..., \eta_{T_{k+1}} \}$, meaning that any permutation of these values yields the same objective value. To facilitate function approximation and improve the training stability of the network, we sort the solution set in both descending and ascending orders before fitting them with a smooth function. This pre-processing step preserves the optimality of the solution while enhancing the stability and interpretability of the fitted function.

Due to the symmetry between the descending and ascending orders, and without loss of generality, we first fit a function to the descending order solution. The corresponding function for the ascending order can then be easily obtained via a simple transformation.

Our key observation is that the numerical solutions under varying $\lambda_l, \lambda_u$ configurations exhibit two characteristic decay modes. As shown in Figure \ref{trajectories-of-solutions}, these transformations manifest through two distinct mechanisms: (i) gradual-to-accelerated decay: gentle initial decrease transitioning to rapid drop (as shown in Figures. \ref{trajectories-of-solutions-d},\ref{trajectories-of-solutions-e} and \ref{trajectories-of-solutions-f}); (ii) rapid-to-gradual decay: Initial steep decline followed by slow decay (as shown in Figures. \ref{trajectories-of-solutions-g},\ref{trajectories-of-solutions-h} and \ref{trajectories-of-solutions-i}). 
These pattern can be formulated as transformations of a base function, which resembles the curvature variation of the transformed $\ell_1$ norm modulated by a parameter \cite{shi2024sparse}.

We first isolate the base function by examining the limiting case where $\lambda_l\approx \lambda_u$. In this regime, the numerical solution within each interval $[1+T_k, T_{k+1}]$ manifests a cosine-like profile (Figures. \ref{trajectories-of-solutions-a},\ref{trajectories-of-solutions-b} and \ref{trajectories-of-solutions-c}), suggesting the base form:
\begin{equation*}
	1+\cos (\frac{(2(t-T_{k})-1)\pi}{2(T_{k+1}-T_{k})   }) 
\end{equation*}
Then we construct the fitting function as:
\begin{equation}\label{}
	(\eta_{\max}-\eta_{\min})\frac{a (1+\cos (\frac{(2(t-T_{k})-1)\pi}{2(T_{k+1}-T_{k})   }) )}{ b+ c  (1+\cos (\frac{(2(t-T_{k})-1)\pi}{2(T_{k+1}-T_{k})   })  } + \eta_{\min}.
\end{equation}
where $\{a,b,c\} $ control learning rate decay modes.

In addition, through systematic fitting across nine configurations (Table \ref{a-b-c-d}), we discover a universal scaling relationship among the parameters:
\begin{equation}\label{relation-equation-abc}
	\frac{b + 2c}{a} \approx 2
\end{equation}
Table \ref{a-b-c-d}  lists the fitted parameters $ \{ a,b,c\}$,  where the mean absolute error is 0.04. 
Figure \ref{abc-relation} visualizes the variability of the $\frac{b + 2c}{a} $ by shading the region within $\pm 1$ standard deviation from the theoretical value in Figure \ref{abc-relation}. 
The empirical results falls within $\pm 1 \sigma$ of the predicted value (2.0 $\pm$ 0.04) in most of cases, demonstrating strong agreement with the assumption relation \eqref{relation-equation-abc}. 

This empirical relationship \eqref{relation-equation-abc} allows us to reduce the original 3-parameter system to 2 degrees of freedom as follows, 
\begin{equation}\label{fitted-function-1}
	(\eta_{\max}-\eta_{\min})\frac{2 a (1+\cos (\frac{(2(t-T_{k})-1)\pi}{2(T_{k+1}-T_{k})   }) )}{ 2b+ (2a-b)  (1+\cos (\frac{(2(t-T_{k})-1)\pi}{2(T_{k+1}-T_{k})   })  } + \eta_{\min}.
\end{equation}
Actually, it is a 1-parameter system 
\begin{equation}\label{fitted-function-2}
	(\eta_{\max}-\eta_{\min})\frac{2(1+\cos (\frac{(2(t-T_{k})-1)\pi}{2(T_{k+1}-T_{k})   }) )}{ 2d+ (2-d)  (1+\cos (\frac{(2(t-T_{k})-1)\pi}{2(T_{k+1}-T_{k})   })  } + \eta_{\min},
\end{equation}
where $\varphi:=b/a$. We list the value of fitted parameter $\varphi$ in Table \ref{a-b-c-d}. 
Since the equation \eqref{fitted-function-1} and the equation \eqref{fitted-function-2} are mathematically equivalent, we adopt the equation \eqref{fitted-function-2} as the canonical representation throughout this work.

Then the corresponding function for the ascending order can then be easily obtained via a phase shift of $\pi$ as follow,
\begin{equation}\label{fitted-function-3}
	(\eta_{\max}-\eta_{\min})\frac{2(1+\cos (\frac{(2(t-T_{k})-1)\pi}{2(T_{k+1}-T_{k})   } +\pi   ) )}{ 2\varphi+ (2-\varphi)  (1+\cos (\frac{(2(t-T_{k})-1)\pi}{2(T_{k+1}-T_{k})   }  +\pi       )  } + \eta_{\min},
\end{equation}
In summary, we obtain the $\eta_{t}$ within the interval $t\in [1+T_k, T_{k+1}]$ as follows
\begin{equation}\label{}
	\eta_{t}= (\eta_{\max}-\eta_{\min})\frac{2(1+\cos (\frac{(2(t-T_{k})-1)\pi}{2(T_{k+1}-T_{k})   }   +(k-1)\pi ) )}{ 2\varphi+ (2-\varphi)  (1+\cos (\frac{(2(t-T_{k})-1)\pi}{2(T_{k+1}-T_{k})   }   +(k-1)\pi  )  } + \eta_{\min},
\end{equation}
where $d$ is the hyper-parameter controlling the variation speed of $\eta_{t}$.

\begin{figure}[!ht]
	\centering
	
	\subfigure[]{
		\includegraphics[width=0.3\linewidth]{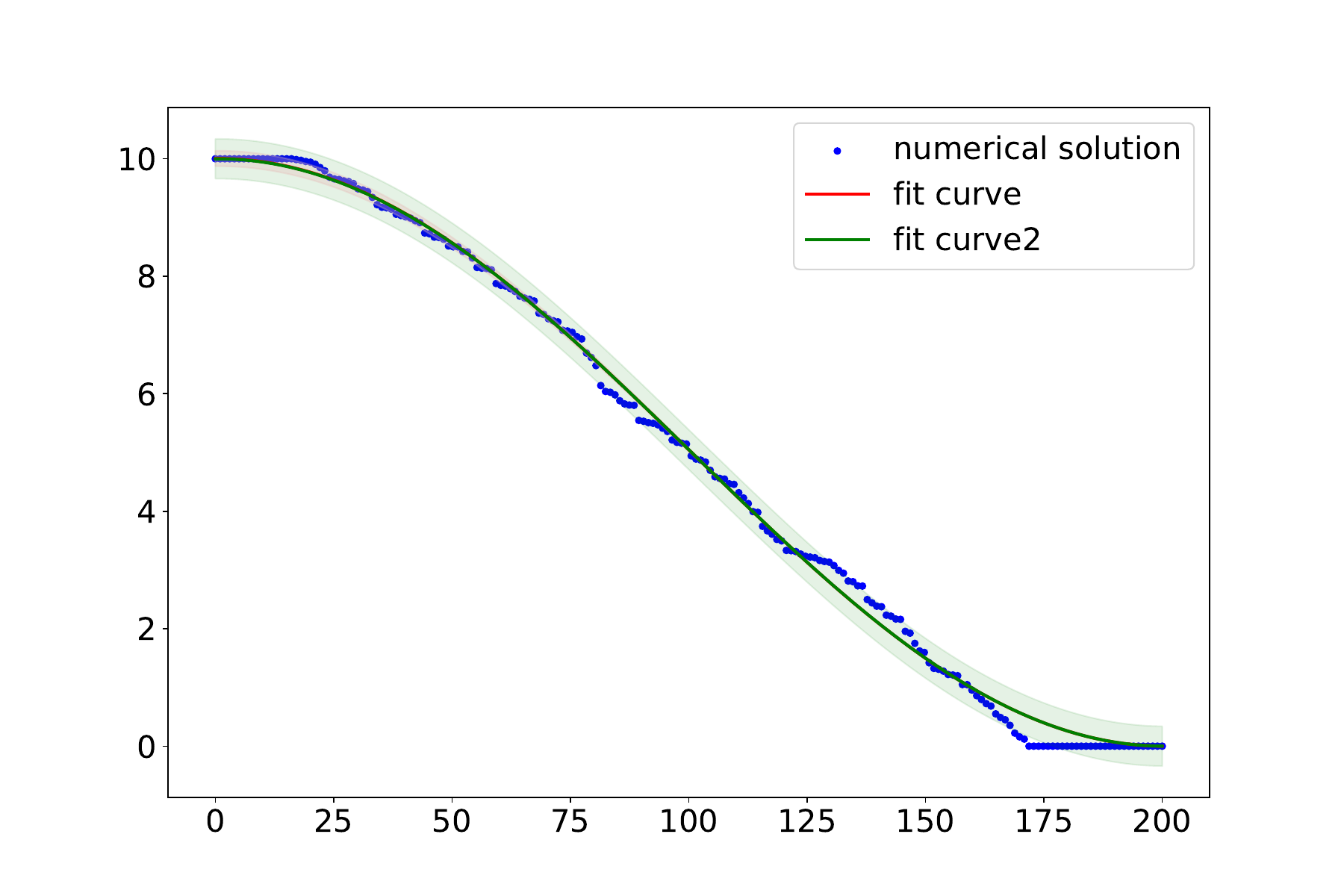}
		\label{trajectories-of-solutions-a}
	}
	\hfill
	\subfigure[]{
		\includegraphics[width=0.3\linewidth]{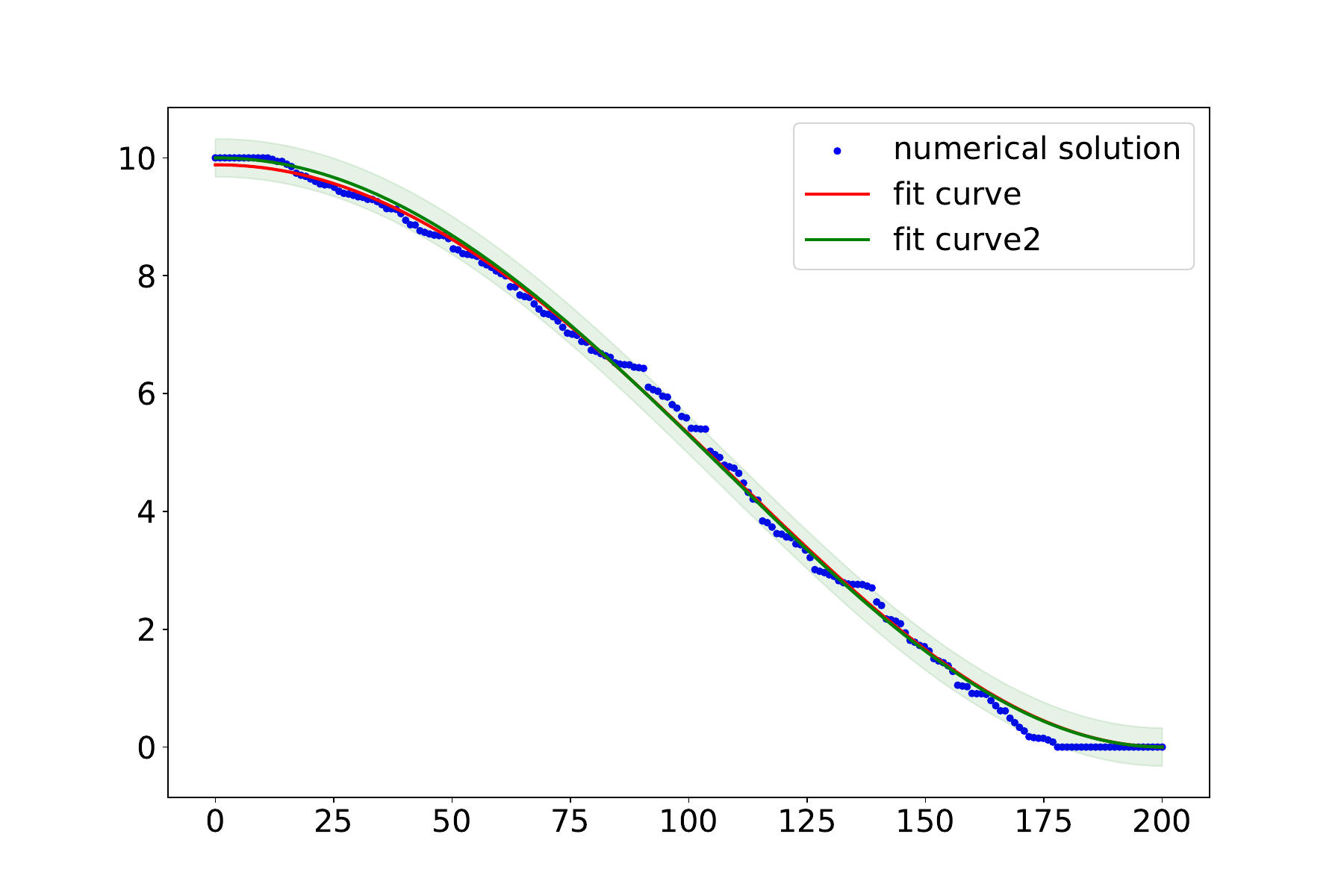}
		\label{trajectories-of-solutions-b}
	}
	\hfill
	\subfigure[]{
		\includegraphics[width=0.3\linewidth]{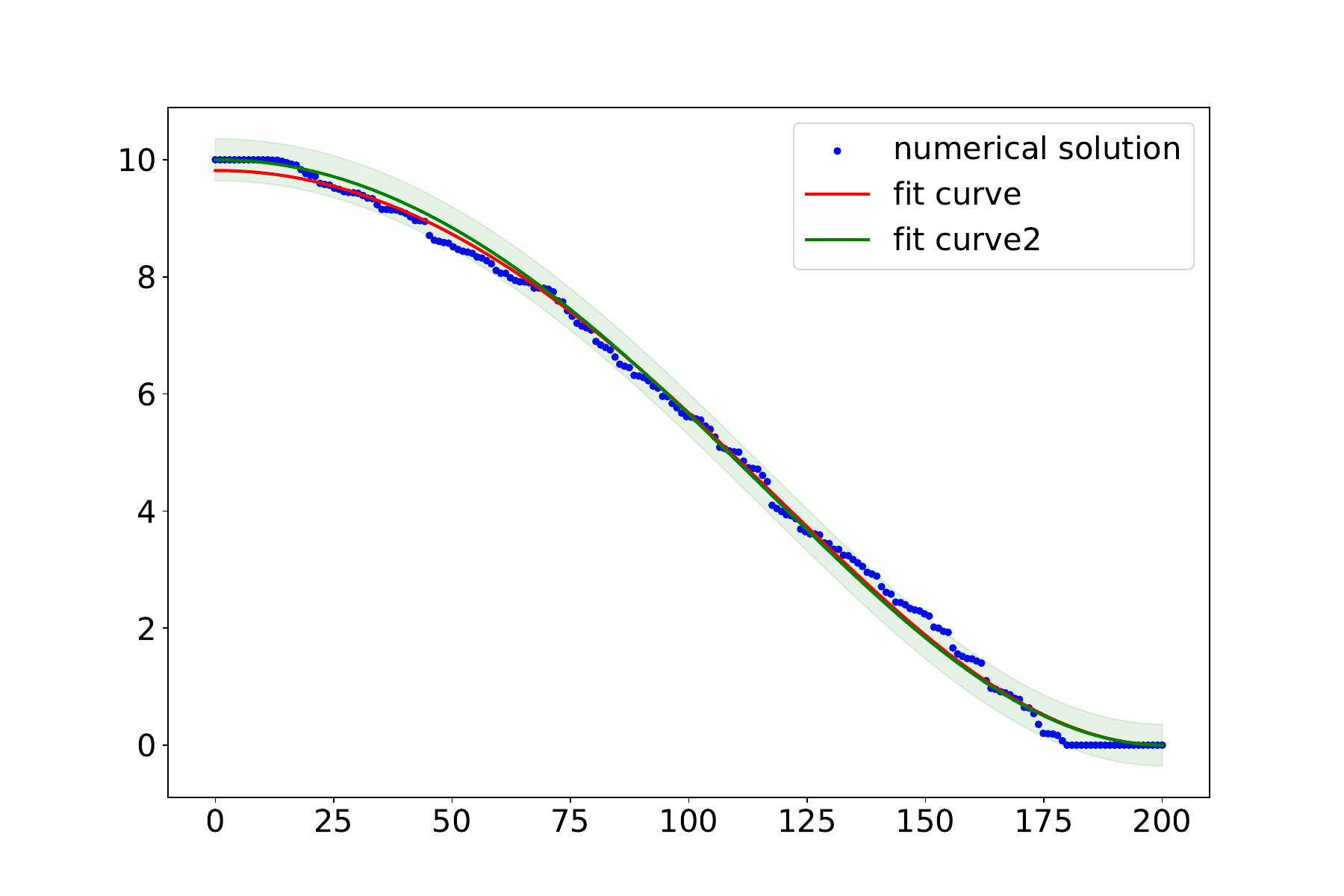}
		\label{trajectories-of-solutions-c}
	}
	
	\subfigure[]{
		\includegraphics[width=0.3\linewidth]{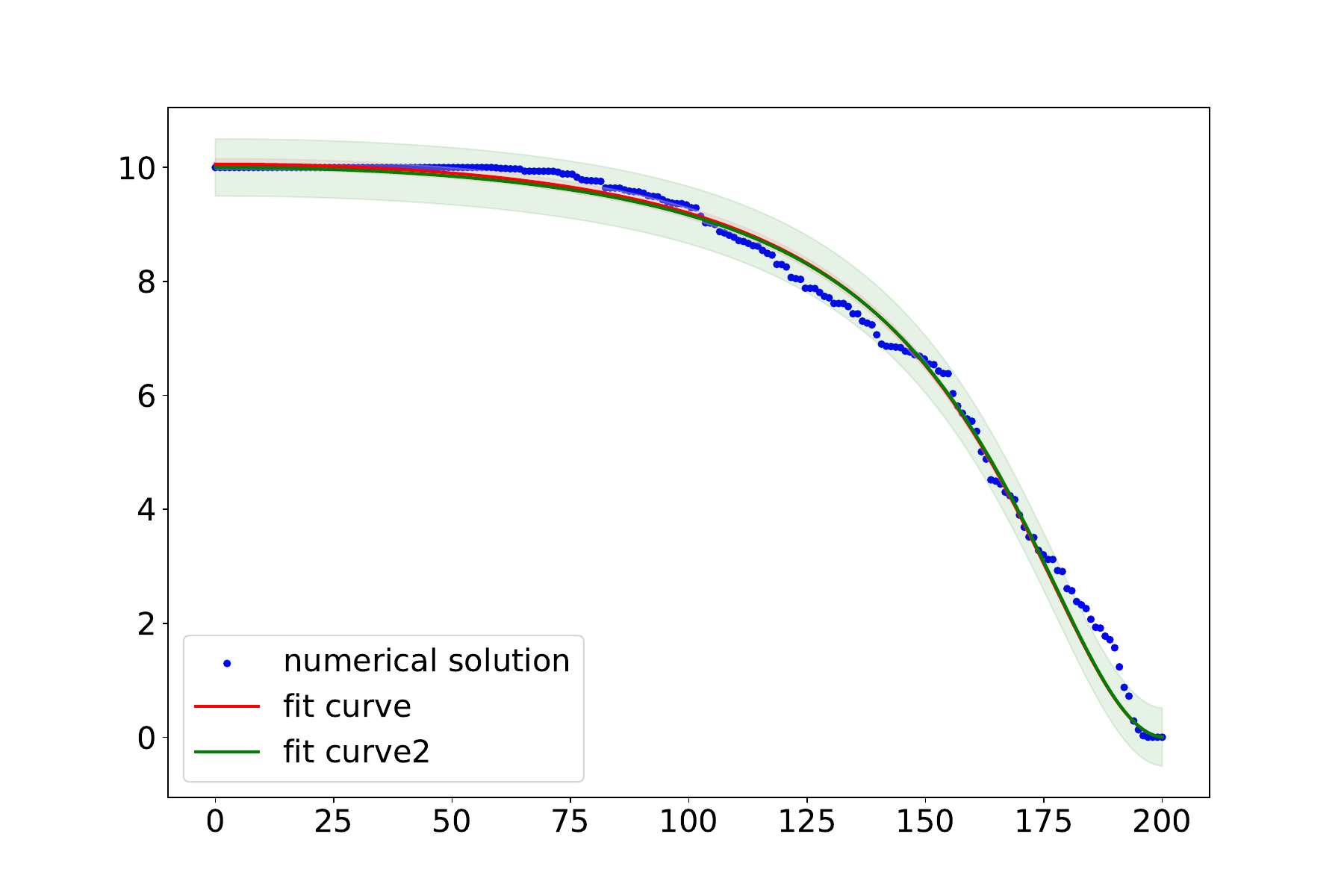}
		\label{trajectories-of-solutions-d}
	}
	\hfill
	\subfigure[]{
		\includegraphics[width=0.3\linewidth]{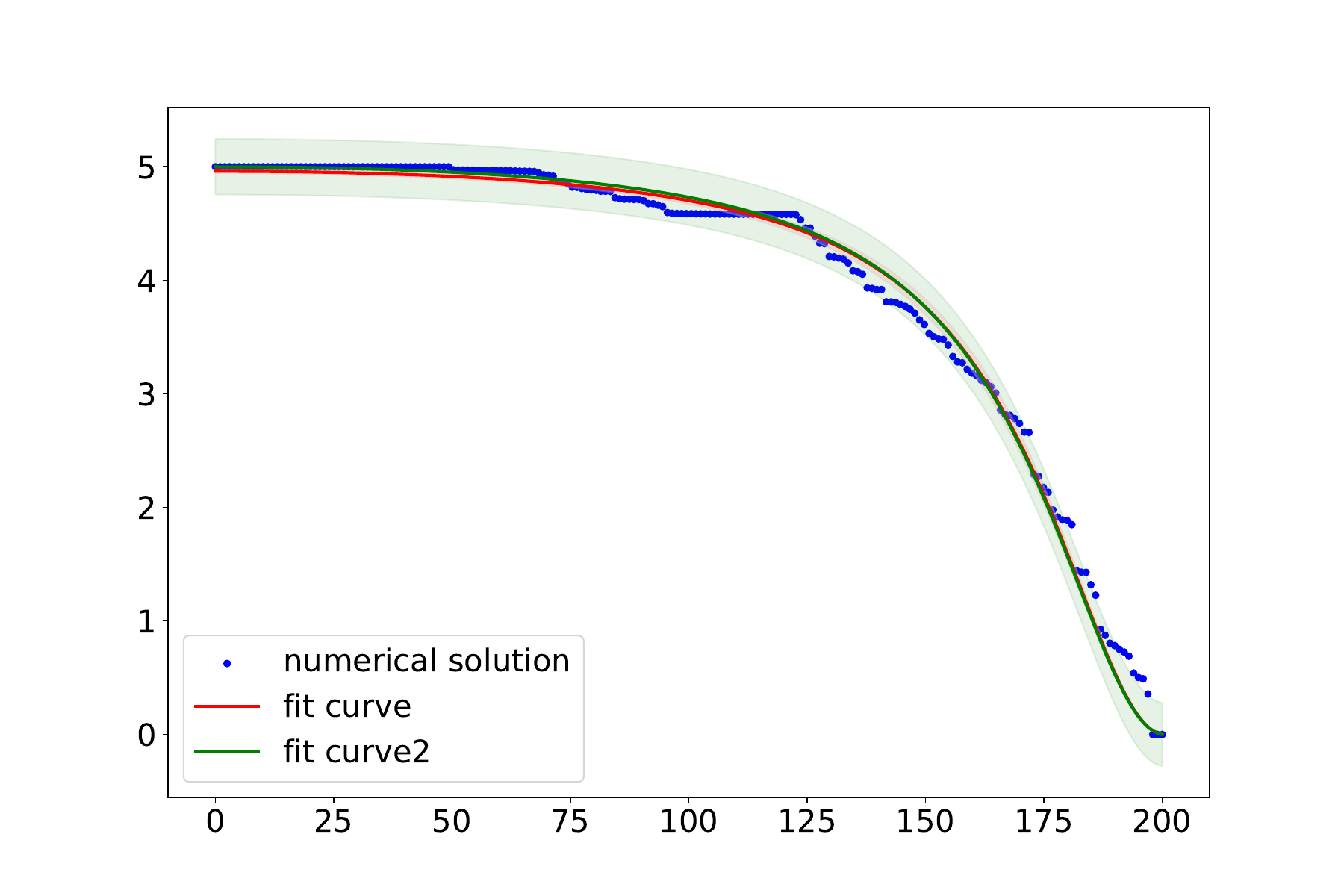}
		\label{trajectories-of-solutions-e}
	}
	\hfill
	\subfigure[]{
		\includegraphics[width=0.3\linewidth]{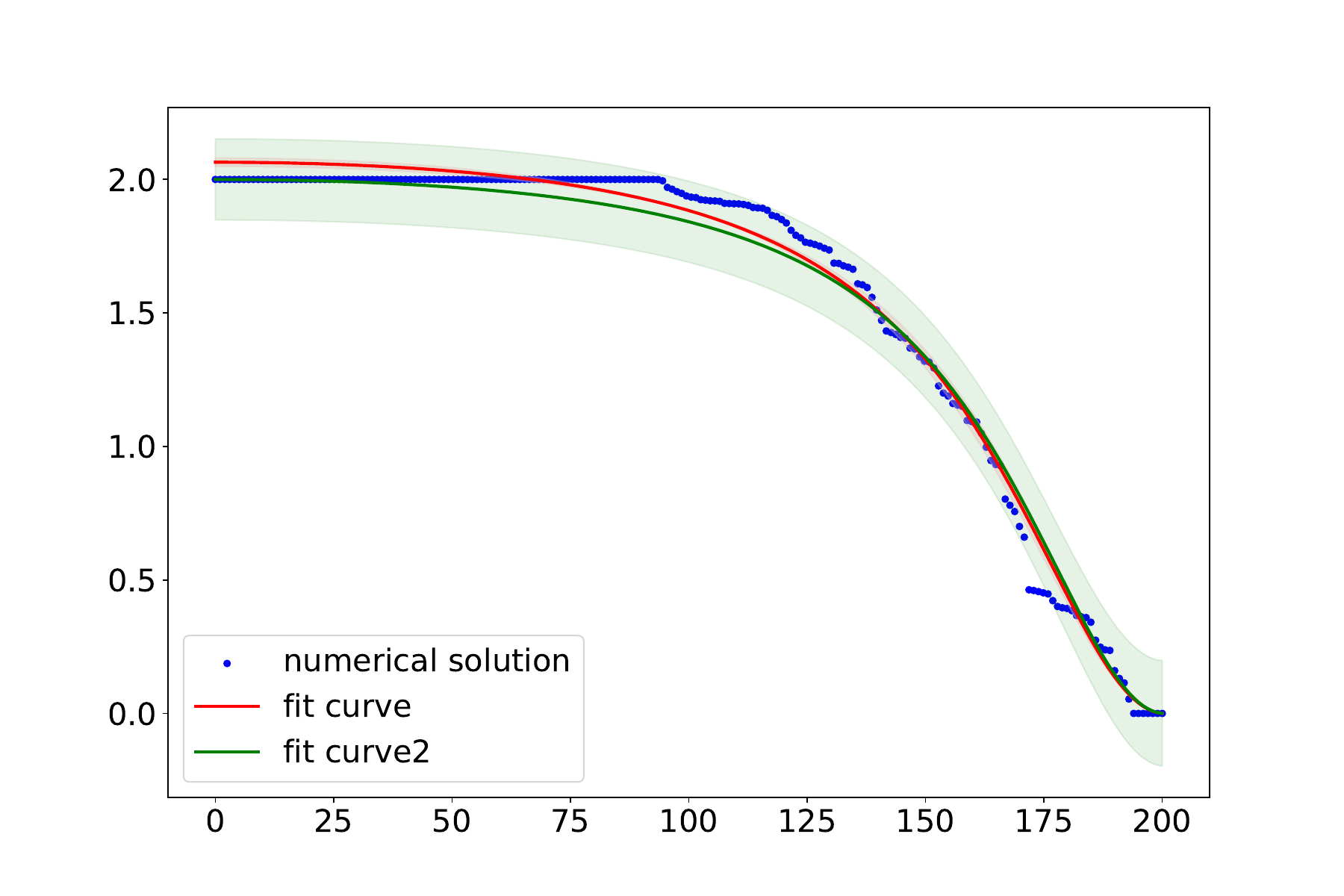}
		\label{trajectories-of-solutions-f}
	}
	
	\subfigure[]{
		\includegraphics[width=0.3\linewidth]{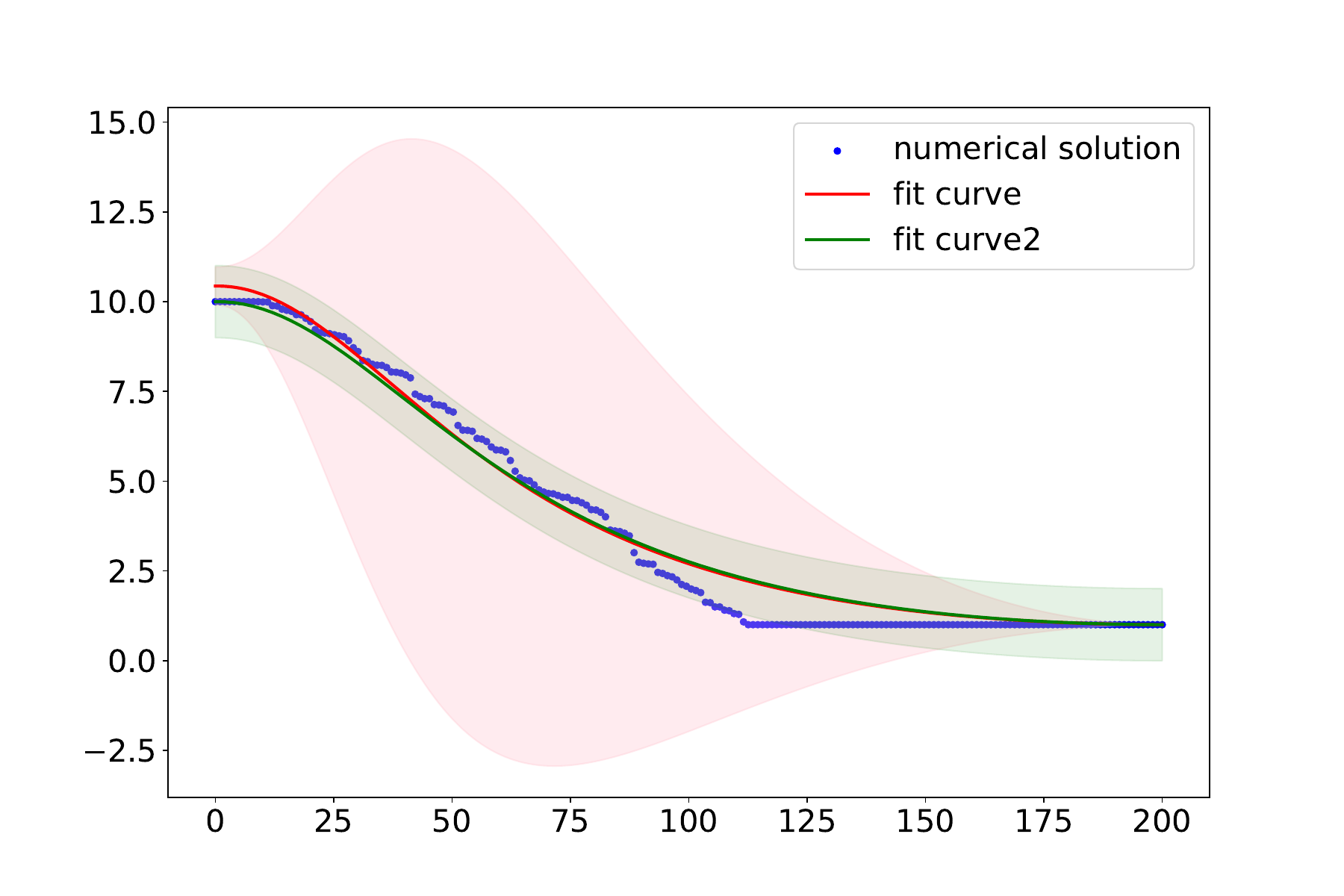}
		\label{trajectories-of-solutions-g}
	}
	\hfill
	\subfigure[]{
		\includegraphics[width=0.3\linewidth]{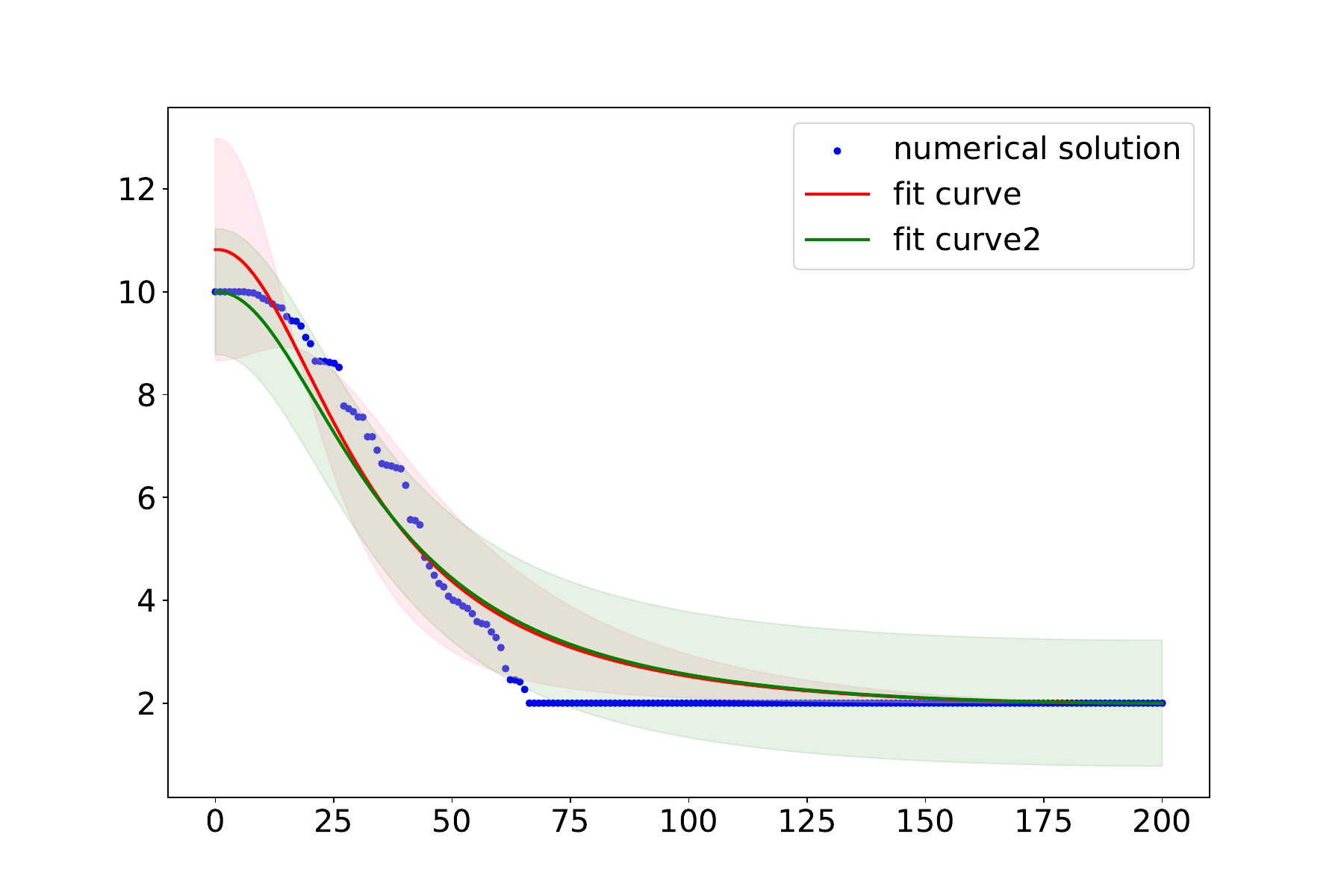}
		\label{trajectories-of-solutions-h}
	}
	\hfill
	\subfigure[]{
		\includegraphics[width=0.3\linewidth]{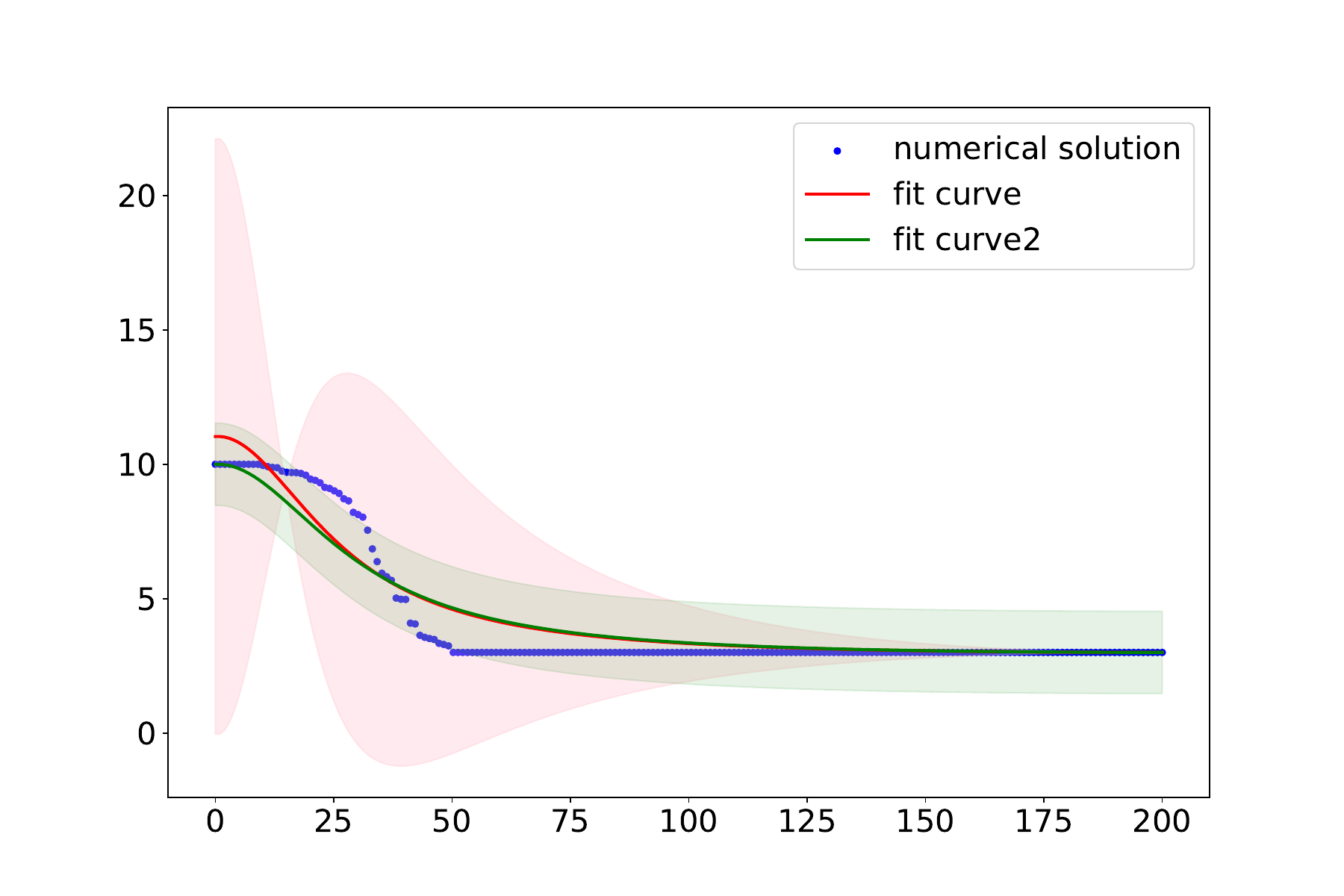}
		\label{trajectories-of-solutions-i}
	}
	\caption{The curve fitting of numerical solutions.}
	\label{trajectories-of-solutions}
\end{figure}

\begin{table}%[!ht]
	\centering
	\footnotesize
	\caption{The parameter details of curve fitting.}\label{a-b-c-d}
	\begin{tabular}{llllllllll}
		\toprule
		\multicolumn{4}{c}{hyper-parameters}&\multicolumn{4}{c}{fitted parameters}&\multicolumn{2}{c}{objective value($\log$)\eqref{eq9}} \\
%		\midrule
		\cmidrule(r){1-4}\cmidrule(r){5-8}\cmidrule(r){9-10}
		$\eta_{\min} $& $\eta_{\max}$ & $\lambda_{l} ^{(k)}$ & $\lambda_{u} ^{(k)}$ & a &b & c & $\varphi$ & by numerical solution&  by fitting function\\
		\midrule
		0& 10 &  1.89  & 2.60  &  0.26  & 0.51  &  0.00 & 1.99& 792.67& 750.83 \\
		0& 10    & 2.39  &  2.61  & 4.16  & 7.35  & 0.53& 1.80& 798.38&770.38 \\
		0& 10 &   5.29 & 5.61  &  4.84  &  7.32 &  1.28 & 1.56& 1118.79& 1105.80 \\
		0& 10 &  18.89  & 260.11  &  1124.67  &  212.28 &  1011.94 & 0.19&2933.34 &2933.62 \\
		0& 5&   28.89 & 200.11  &  755.91  &  85.01 & 719.40  & 0.12& 2607.11& 2591.51 \\
		0& 2&     50.89  &   150.11 & 0.27  & 0.05  & 0.24& 0.17 &2031.66& 2075.50 \\
		1&10 &  10.01  &  150.11 &   77.45 & 684.15  &  -268.20 & 8.40 &2373.58 & 2431.42 \\
		2& 10&   18.89 &  50.11 &   0.11  &  2.52  & -1.15   & 21.02 & 2007.68& 2044.84 \\
		3& 10&  20.01  & 180.11  &  0.07  &  3.07 & -1.47  &38.34 & 2604.98 & 2632.21 \\
		\bottomrule
	\end{tabular}
\end{table}

\begin{figure}[!ht]
	\centering
	\includegraphics[width=0.99\linewidth]{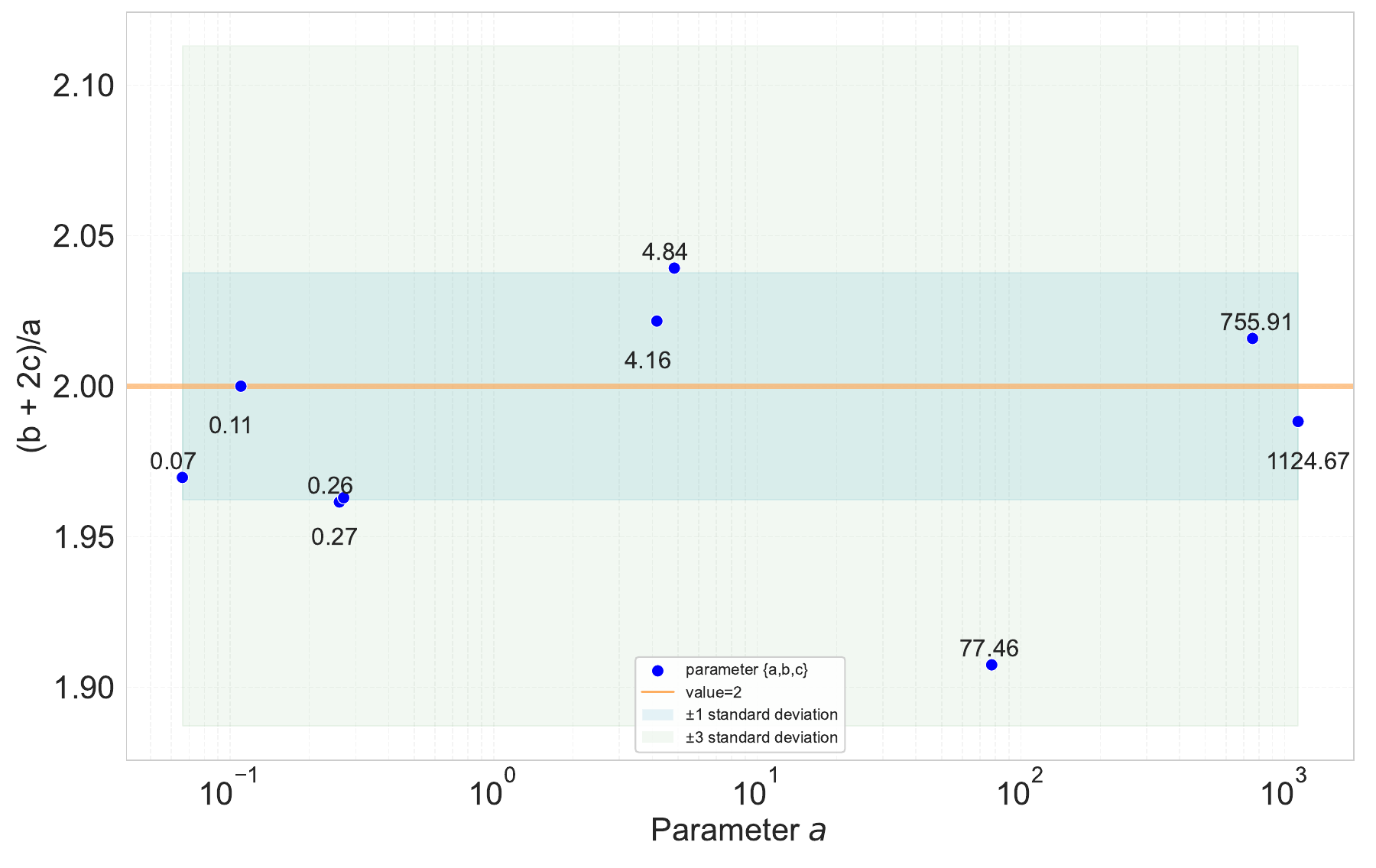} 
	\caption{Visualization of relation between parameter $a, b$ and $c$.}  
	\label{abc-relation}
\end{figure}

\newpage
\section{Adaptive simulation of existing schedules}\label{Adaptive-Simulation-of-Existing-Schedules}

Our schedule has adaptability owing to the parameter $\varphi$, enabling it to approximate the behavior of existing schedules (e.g.step decay, cosine annealing, cyclic schedule or Rex schedule) through simple parameter adjustments. More importantly, it outperforms these schedules by jointly optimizing training stability and final accuracy. 

We visualize the adaptive simulation of in Figure \ref{Adaptive-simulation-by-UBA}. 
By setting our parameter $\varphi$ to 2, the UBA schedule degenerates to a cosine schedule (figure \ref{Universality-cos-plot}). 
By setting $\varphi$ to large value such as 30,   the UBA schedule mimics exponential schedule (figure \ref{Universality-exponential-plot}).
By setting $\varphi$ to small value such as 0.8,   the UBA schedule mimics Rex schedule (figure \ref{Universality-Rex-plot}). 
By setting $\varphi$ to 0 , the proposed schedule degenerates to a constant. Therefore, we can control $\eta_{\max}$ as piece-wise value depending on iteration, and the UBA schedule degenerates to a step schedule (figure \ref{Universality-step-plot}). 
By setting $k$ to any integer larger than 1, the UBA schedule possesses cyclic characters, thus it mimics cyclic schedule (figure \ref{Universality-cyclic1-plot} \ref{Universality-cyclic2-plot}). 
Since the OneCycle learning rate schedule can be viewed as a single-period version of cyclical learning rates, the UBA schedule mimics the OneCycle approach by emulating the cyclical schedule.

\begin{figure}[!ht]
	%	\centering
	\subfigure[]{
		\includegraphics[width=0.3\linewidth]{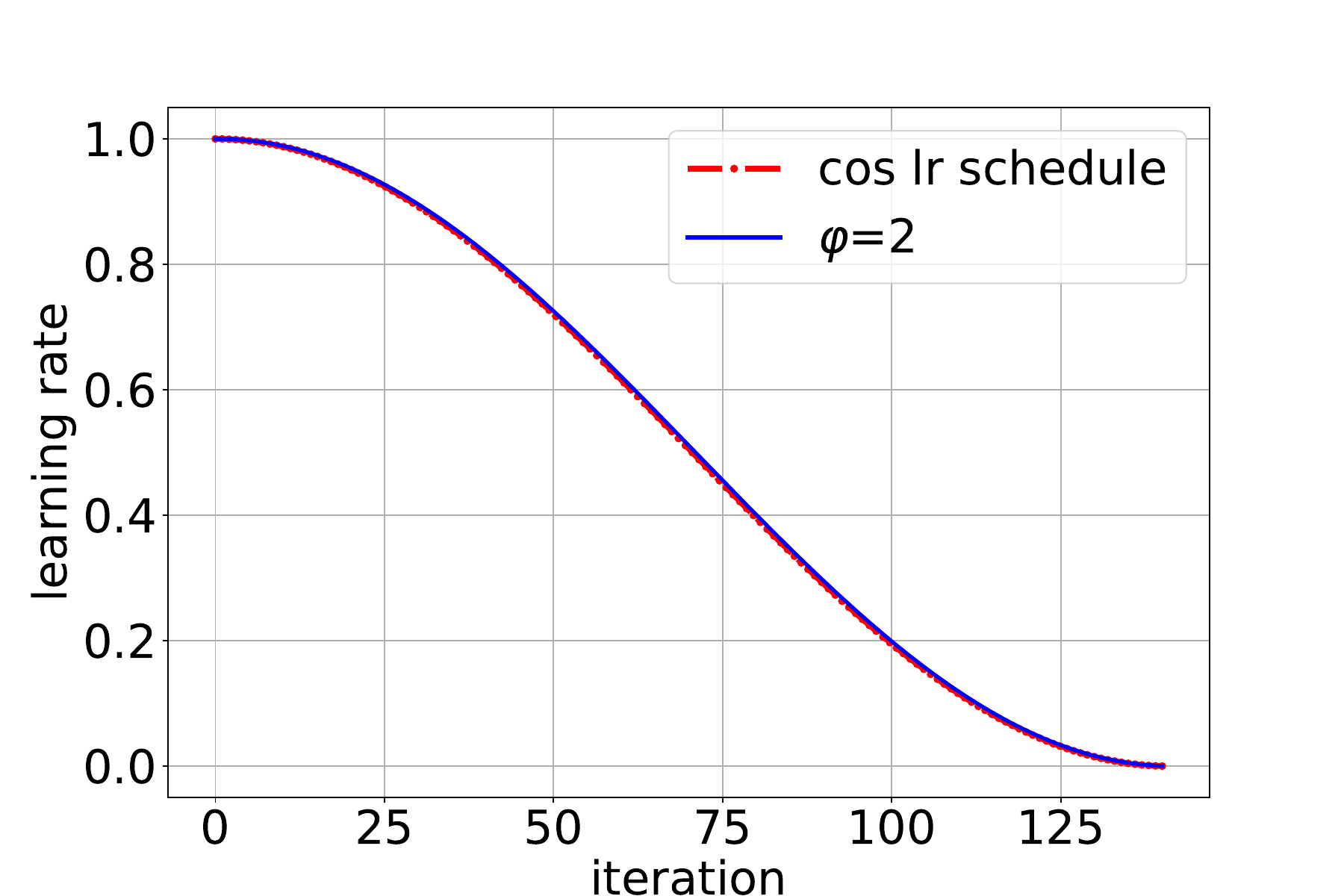}
		\label{Universality-cos-plot}
	}
	\hfill
	\subfigure[]{
		\includegraphics[width=0.3\linewidth]{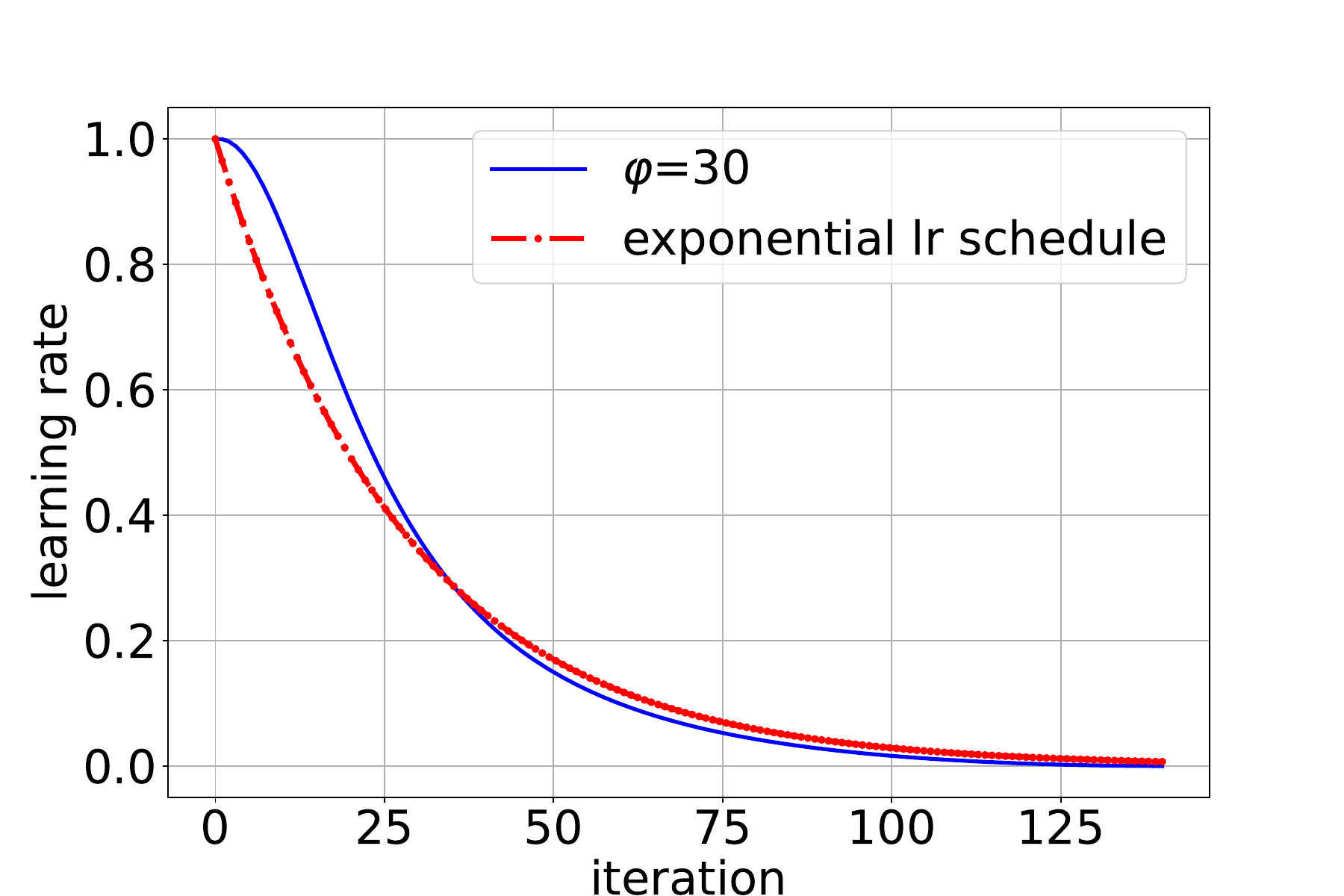}
		\label{Universality-exponential-plot}
	}
	\hfill
	\subfigure[]{
		\includegraphics[width=0.3\linewidth]{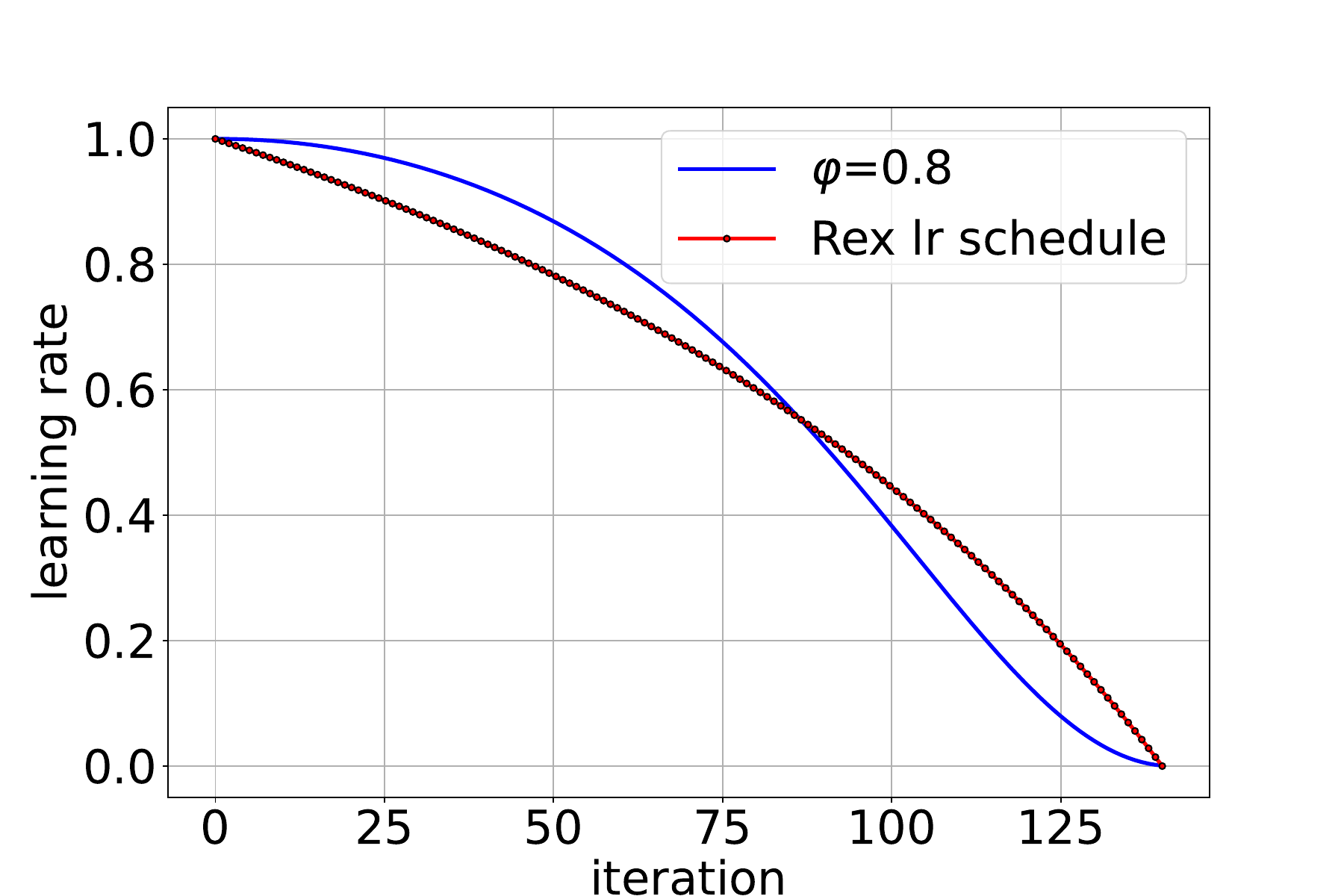}
		\label{Universality-Rex-plot}
	}

	\subfigure[]{
		\includegraphics[width=0.3\linewidth]{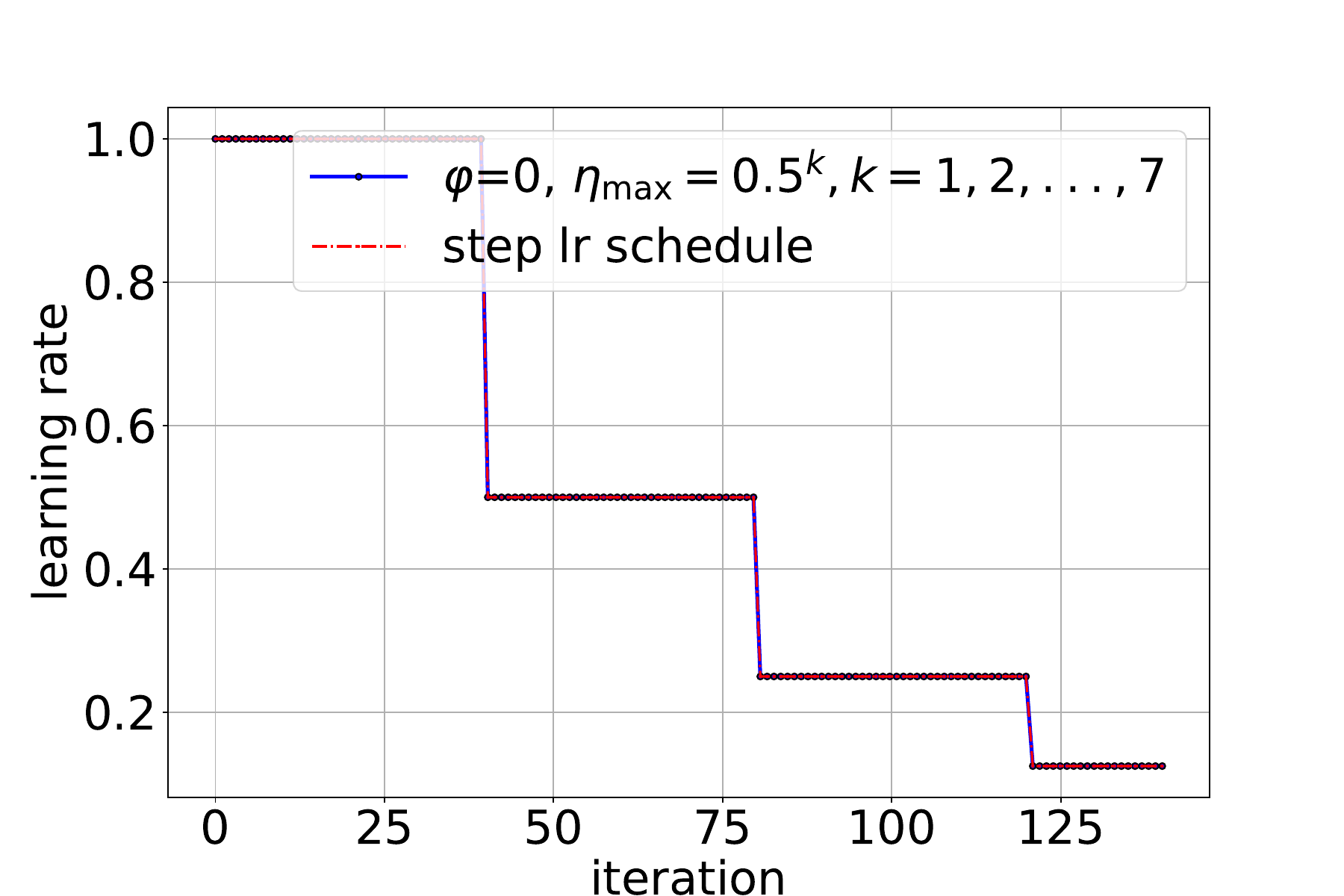}
		\label{Universality-step-plot}
	}
	\hfill
	\subfigure[]{
		\includegraphics[width=0.3\linewidth]{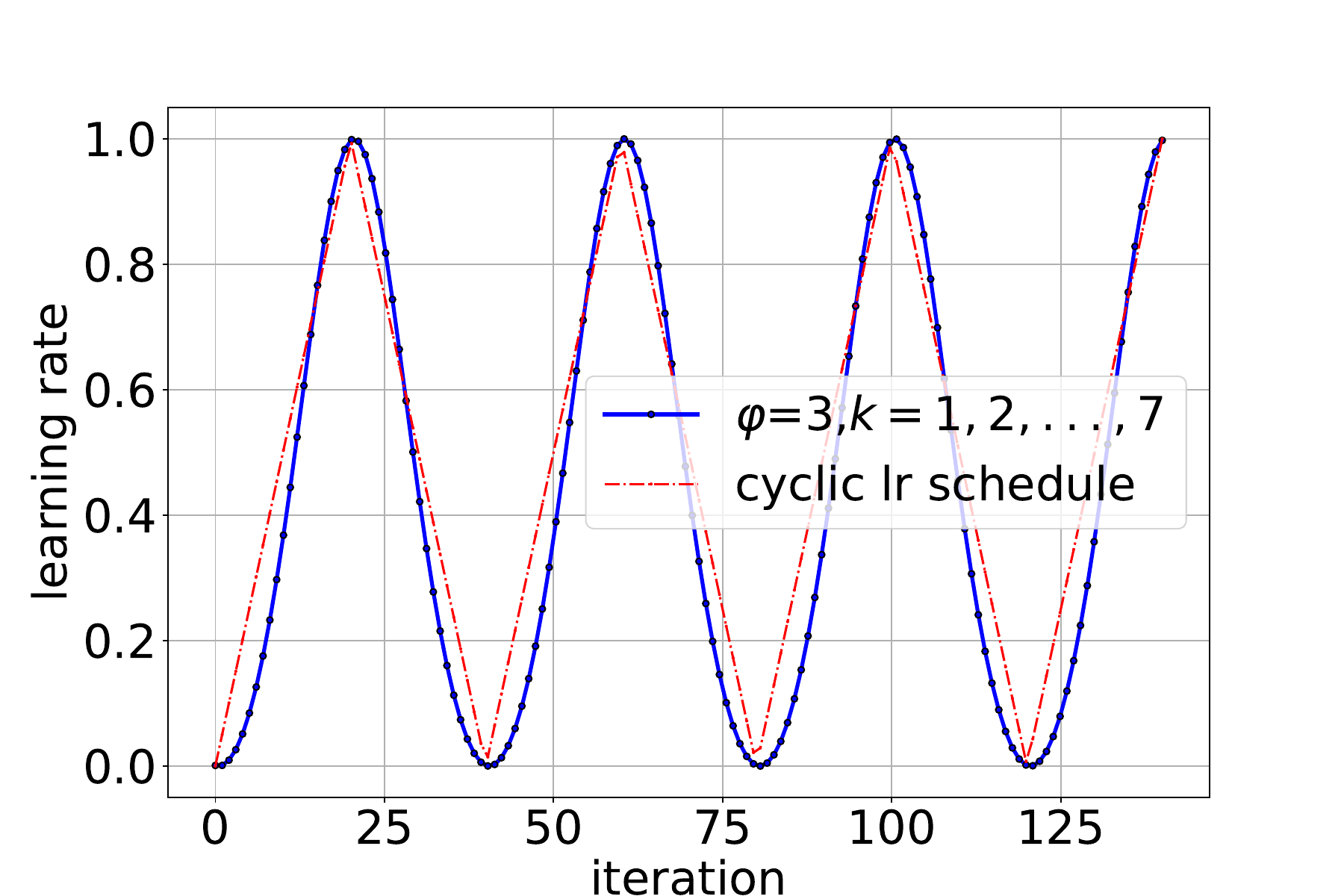}
		\label{Universality-cyclic1-plot}
	}
	\hfill
	\subfigure[]{
		\includegraphics[width=0.3\linewidth]{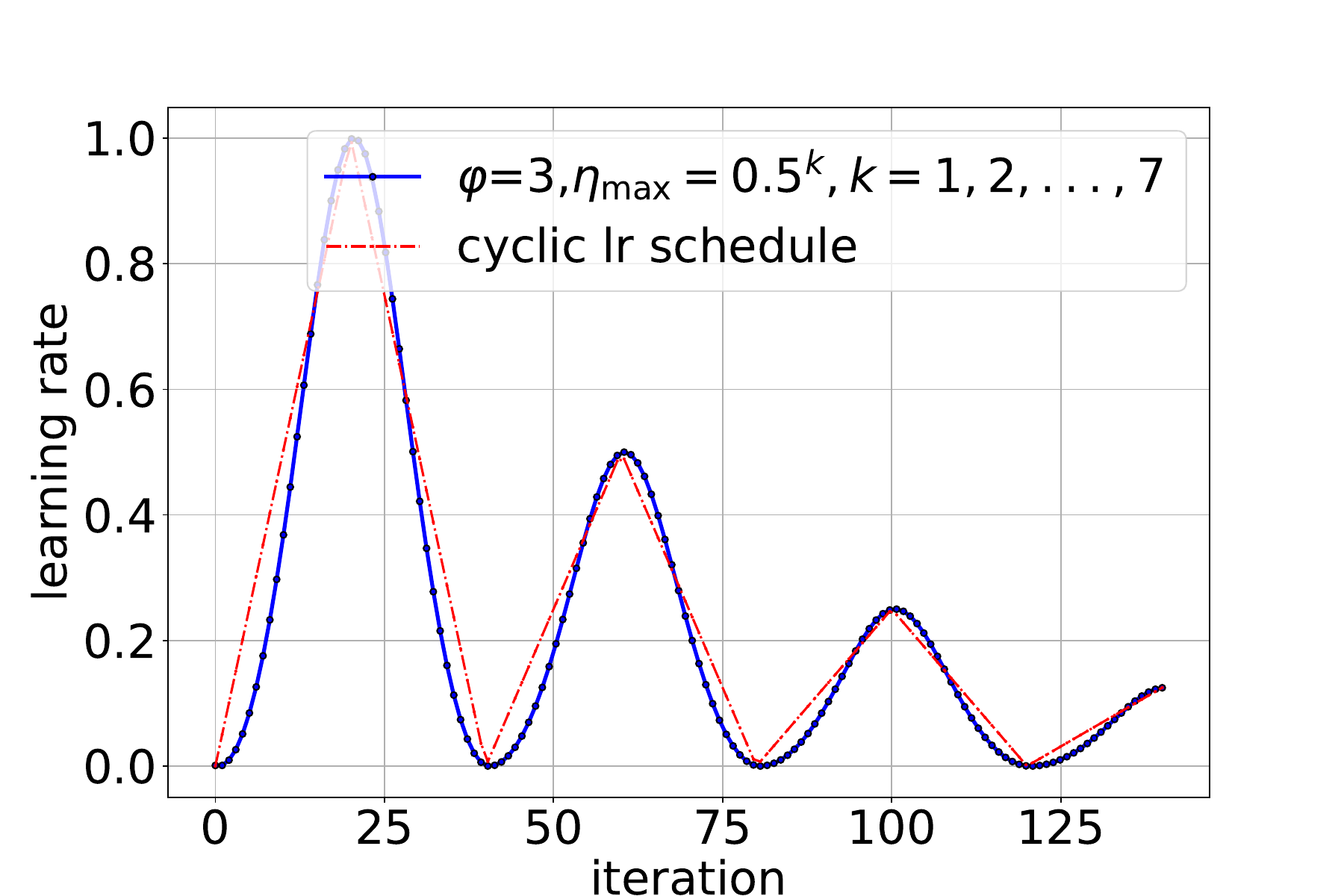}
		\label{Universality-cyclic2-plot}
	}
	\caption{Adaptive simulation by UBA schedule.}
	\label{Adaptive-simulation-by-UBA}
\end{figure}

\newpage
\section{Propositions and lemmas}\label{Propositions-lemmas}

\subsection{Proof of Proposition \ref{Proposition-model}}

\textbf{Proposition~\ref{Proposition-model}} \textit{ 
	The fitted function \eqref{fitted-function}   is the exact closed-form solution $\frac{2(1+\cos (\frac{(2(t-T_{k})-1)\pi}{2(T_{k+1}-T_{k})   }) )}{ 2\varphi+ (2-\varphi)  (1+\cos (\frac{(2(t-T_{k})-1)\pi}{2(T_{k+1}-T_{k})   })  }$ to the min-max optimization problem:
	\begin{equation}\label{Proposition-min-max-model}
		\begin{aligned}
			\min_{ \eta_{1+T_{k}}  , \eta_{2+T_{k}}   ,..., \eta_{T_{k+1}}   } &\max_{\lambda_{l}^{(k)} \leq \lambda_i^{(k)} \leq \lambda_{u} ^{(k)}} \prod\limits_{t=1+T_{k} }^{T_{k+1}} \left[(1- \left((\frac{1}{\lambda_{l} ^{(k)}}-\frac{1}{\lambda_{u} ^{(k)}} ) \eta_{t} +\frac{1}{\lambda_{u} ^{(k)}} \right)  \lambda_i^{(k)} )  \right]^2\\
		\end{aligned}
	\end{equation}
	when the hyper-parameter $\varphi$ are determined by $\lambda_{l}^{(k)}$ and $ \lambda_{u} ^{(k)}$ through the relation $\varphi=2\frac{\lambda_{u}^{(k)}}{\lambda_{l}^{(k)}}$ and $\eta_{\max}=1, \quad \eta_{\min}=0$.}

\begin{proof}
	According to the theorem in \cite{1951Numerical}, among all polynomials of degree $n$ in $\mu$ that take the value +1 at $\mu = d > 1$, the Chebyshev polynomial $S_n (\mu) = \frac{C_n (\mu)}{C_n (d)}$ is the unique solution that minimizes the maximum absolute value over the interval $(-1, +1)$. Here, $C_n (\mu) = \cos (n \arccos (\mu))$ is the $n$-th order Chebyshev polynomial.
	
	We define the polynomial $Q(\lambda_i^{(k)})$ as:
	\[
	Q(\lambda_i^{(k)}) := \prod_{t=T_k+1}^{T_{k+1}} \left[ 1 - \left( \left( \frac{1}{\lambda_l^{(k)}} - \frac{1}{\lambda_u^{(k)}} \right) \eta_t + \frac{1}{\lambda_u^{(k)}} \right) \lambda_i^{(k)} \right]
	\]
	which is a degree $n = T_{k+1} - T_k$ polynomial in $\lambda_i^{(k)}$.
	
	Next, we introduce the variable transformation:
	\[
	\gamma = \frac{-2\lambda_i^{(k)} + \lambda_l^{(k)} + \lambda_u^{(k)}}{\lambda_u^{(k)} - \lambda_l^{(k)}}
	\]
	which maps the interval $\lambda_l^{(k)} \leq \lambda_i^{(k)} \leq \lambda_u^{(k)}$ to the interval $-1 \leq \gamma \leq 1$, such that $\lambda_i^{(k)} = \lambda_u^{(k)}$ corresponds to $\gamma = -1$ and $\lambda_i^{(k)} = \lambda_l^{(k)}$ corresponds to $\gamma = 1$. 
	
	The polynomial $P(\gamma) = Q(\lambda_i^{(k)})$ now satisfies the condition in the theorem from \cite{1951Numerical}, i.e.,
	\[
	P\left( \frac{\lambda_l^{(k)} + \lambda_u^{(k)}}{\lambda_u^{(k)} - \lambda_l^{(k)}} \right) = 1 \quad \text{since} \quad Q(0) = 1.
	\]
	
	Therefore, the original min-max problem
	\[
	\min_{\eta_{T_k+1}, \eta_{T_k+2}, \dots, \eta_{T_{k+1}}} \max_{\lambda_l^{(k)} \leq \lambda_i^{(k)} \leq \lambda_u^{(k)}} \prod_{t=T_k+1}^{T_{k+1}} \left[ 1 - \left( \left( \frac{1}{\lambda_l^{(k)}} - \frac{1}{\lambda_u^{(k)}} \right) \eta_t + \frac{1}{\lambda_u^{(k)}} \right) \lambda_i^{(k)} \right]^2
	\]
	is equivalent to the problem of finding a polynomial in $\gamma$ of degree $T_{k+1} - T_k$ that minimizes the maximum absolute value in the interval $-1 \leq \gamma \leq 1$.
	
	By the theorem in \cite{1951Numerical}, the desired solution to the problem is the Chebyshev polynomial:
	\[
	S_{T_{k+1}-T_k} (\gamma) = \frac{C_{T_{k+1}-T_k} (\gamma)}{C_{T_{k+1}-T_k} \left( \frac{\lambda_l^{(k)} + \lambda_u^{(k)}}{\lambda_u^{(k)} - \lambda_l^{(k)}} \right)}
	\]
	
	To match $P(\gamma) = S_{T_{k+1}-T_k} (\gamma)$, we equate the corresponding zeros. The roots of $Q(\lambda_i^{(k)}) = 0$ are given by:
	\[
	\lambda_i^{(k)} = \frac{1}{\left( \left( \frac{1}{\lambda_l^{(k)}} - \frac{1}{\lambda_u^{(k)}} \right) \eta_t + \frac{1}{\lambda_u^{(k)}} \right)} \quad \text{for} \quad t = 1, 2, \dots, T_{k+1} - T_k.
	\]
	On the other hand, the zeros of $P(\gamma)$ correspond to the values of $\gamma$ where the polynomial $P(\gamma)$ vanishes, and these zeros are given by:
	\[
	\gamma_t = \frac{\lambda_l^{(k)} + \lambda_u^{(k)}}{\lambda_u^{(k)} - \lambda_l^{(k)}} - \frac{2}{\left( \left( \frac{1}{\lambda_l^{(k)}} - \frac{1}{\lambda_u^{(k)}} \right) \eta_t + \frac{1}{\lambda_u^{(k)}} \right) (\lambda_u^{(k)} - \lambda_l^{(k)})} \quad \text{for} \quad t = 1, 2, \dots, T_{k+1} - T_k.
	\]
	
	The zeros of the Chebyshev polynomial $S_{T_{k+1}-T_k} (\gamma)$ are given by the values:
	\[
	\cos \left( \frac{(2(t- T_k) - 1)\pi}{2(T_{k+1} - T_k)} \right) \quad \text{for} \quad t = 1, 2, \dots, T_{k+1} - T_k.
	\]
	
	Equating the zeros of $S_{T_{k+1}-T_k} (\gamma)$ and $P(\gamma)$, we have:
	\[
	\frac{\lambda_l^{(k)} + \lambda_u^{(k)}}{\lambda_u^{(k)} - \lambda_l^{(k)}} - \frac{2}{\left( \left( \frac{1}{\lambda_l^{(k)}} - \frac{1}{\lambda_u^{(k)}} \right) \eta_t + \frac{1}{\lambda_u^{(k)}} \right) (\lambda_u^{(k)} - \lambda_l^{(k)})}
	= \cos \left( \frac{(2(t - T_k) - 1)\pi}{2(T_{k+1} - T_k)} \right)
	\]
	Solving this equation for $\eta_t$, we obtain:
	\[
	\eta_t = \frac{ (1 + \cos \left( \frac{(2(t - T_k) - 1)\pi}{2(T_{k+1} - T_k)} \right) ) }{ 2\frac{\lambda_u^{(k)}}{\lambda_l^{(k)}} - \left( \frac{\lambda_u^{(k)}}{\lambda_l^{(k)}} - 1 \right) \left( 1 + \cos \left( \frac{(2(t - T_k) - 1)\pi}{2(T_{k+1} - T_k)} \right) \right) }.
	\]
	
\end{proof}

\subsection{Proof of Proposition \ref{Proposition-cos-extension}}
\textbf{Proposition~\ref{Proposition-cos-extension}} \textit{
	Consider the training process within the interval $t\in [1+T_k, T_{k+1}]$. 
	When the hyper-parameter $\varphi$ is set sufficiently close to $2$, the proposed learning rate scheduling formula reduces to the cosine learning rate schedule.
}

\begin{proof}
	\begin{equation}\label{eq-extension}
		\begin{aligned}
			&\lim\limits_{\varphi \rightarrow 2}  (\eta_{\max}-\eta_{\min})\frac{2(1+\cos (\frac{(2(t-T_{k})-1)\pi}{2(T_{k+1}-T_{k})   }) )}{ 2\varphi+ (2-\varphi)  (1+\cos (\frac{(2(t-T_{k})-1)\pi}{2(T_{k+1}-T_{k})   })  } + \eta_{\min}\\
			&=(\eta_{\max}-\eta_{\min}) (\frac{1}{2}+\frac{1}{2}\cos (\frac{(2(t-T_{k})-1)\pi}{2(T_{k+1}-T_{k})   }) ) + \eta_{\min}\\
			&=\frac{1}{2} (\eta_{\max}-\eta_{\min}) (1+\cos(\frac{(t-T_{k})\pi}{ T_{k+1}-T_{k} }) \cos(\frac{\pi}{2(T_{k+1}-T_{k})  })  +\sin(\frac{(t-T_{k})\pi}{T_{k+1}-T_{k} }) \sin(\frac{\pi}{2(T_{k+1}-T_{k}) })                )  + \eta_{\min}\\
			&\approx \frac{1}{2} (\eta_{\max}-\eta_{\min}) (1+\cos(\frac{(t-T_{k})\pi}{T_{k+1}-T_{k} })) + \eta_{\min}
		\end{aligned}
	\end{equation}
	The final approximation holds because each training phase typically contains many iterations, i.e. , making $\frac{\pi}{T_{k+1}-T_{k}} \approx 0$. Consequently, $\sin(\frac{\pi}{2(T_{k+1}-T_{k}) }) \approx 0 $ and $\cos(\frac{\pi}{2(T_{k+1}-T_{k}) }) \approx 1 $.
\end{proof}

\subsection{Proof of Theorem \ref{Proposition3}}\label{Proof-of-Proposition3}

\begin{lemma}\label{lemma1}
	Let $\eta_{j}$ be the learning rate at the iteration $j\in [1+T_k, t]$, defined by the proposed form \eqref{fitted-function}. Let $\lambda_i^{(k)}$ be the eigenvalues of the Hessian matrix $H_f^{(k)} (\xi) $ at the $k$-th strict minimum $\bar{W}^{(k)}$. Then for any $t\in [1+T_k, T_{k+1}]$, the following inequalities hold,
	
	\textbf{Case 1 ($\varphi>2$):}
	\begin{equation}\label{}
		\begin{aligned}
			& \prod\limits_{j=1+T_{k} }^{t} \left[(1-\eta_{j}\lambda_i^{(k)} )  \right]^2\\
			&\leq \exp \Bigl( -2 \lambda_i^{(k)}\eta_{\min} (t - T_{k}) \Bigr) \\
			& \quad \cdot \Bigl(  
			\frac{4(T_{k+1} - T_{k}) + (\varphi - 2)\pi}{4(T_{k+1} - T_{k}) + (\varphi - 2)\pi (t - T_{k})} 
			\Bigr) ^{\frac{4 \lambda_l^{(k)} (\eta_{\max} - \eta_{\min}) 
					\left(1 + \cos \left( \frac{(2(t - T_{k}) - 1)\pi}{2(T_{k+1} - T_{k})} \right) \right) (T_{k+1} - T_{k})}{(\varphi - 2)\pi}  }
		\end{aligned}
	\end{equation}
	\textbf{Case 2 ($\varphi<2$):}
	\begin{equation}\label{}
		\begin{aligned}
			& \prod\limits_{j=1+T_{k} }^{t} \left[(1-\eta_{j}\lambda_i^{(k)} )  \right]^2\\
			&\leq  \exp \Bigl( -2 \lambda_i^{(k)} \eta_{\min} (t - T_{k}) \Bigr) \\
			&\quad  \cdot\Bigl(   
			\frac{ \left(  2\varphi+2\pi -\varphi\pi \right)- \frac{(2-\varphi)\pi}{(T_{k+1} - T_{k})     }   (t-T_{k}-0.5)  }{ \left(  2\varphi+2\pi -\varphi\pi \right)+ \frac{(2-\varphi)\pi}{2(T_{k+1} - T_{k})     }     }     
			\Bigr) ^{\frac{4 \lambda_i^{(k)} (\eta_{\max} - \eta_{\min}) 
					\left(1 + \cos \left( \frac{(2(t - T_{k}) - 1)\pi}{2(T_{k+1} - T_{k})} \right) \right) (T_{k+1} - T_{k})}{(2-\varphi)\pi}  }\\
		\end{aligned}
	\end{equation}
\end{lemma}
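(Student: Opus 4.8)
The plan is to bound the product $\prod_{j=1+T_k}^{t}(1-\eta_j\lambda_i^{(k)})^2$ by first taking logarithms and then controlling the resulting sum. Writing $\eta_j = \eta_{\min} + (\eta_{\max}-\eta_{\min})g_j$, where $g_j$ is the normalized UBA profile, one has $(1-\eta_j\lambda_i^{(k)})^2 \le \exp(-2\eta_j\lambda_i^{(k)})$ whenever $0 < \eta_j\lambda_i^{(k)} < 2$ (this is the standard inequality $1-x \le e^{-x}$ applied to $x=\eta_j\lambda_i^{(k)}$, using $\eta$-boundedness to keep $1-x$ nonnegative so squaring preserves the bound). Thus $\prod_j(1-\eta_j\lambda_i^{(k)})^2 \le \exp\bigl(-2\lambda_i^{(k)}\sum_{j=1+T_k}^{t}\eta_j\bigr)$. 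Splitting $\sum_j\eta_j = \eta_{\min}(t-T_k) + (\eta_{\max}-\eta_{\min})\sum_j g_j$ already produces the factor $\exp(-2\lambda_i^{(k)}\eta_{\min}(t-T_k))$ appearing in both cases, so the whole problem reduces to giving a sharp lower bound on $\sum_{j=1+T_k}^{t} g_j$ and exponentiating it.

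Next I would estimate $S_t := \sum_{j=1+T_k}^{t} g_j$ where $g_j = \dfrac{2(1+c_j)}{2\varphi+(2-\varphi)(1+c_j)}$ and $c_j = \cos\!\bigl(\tfrac{(2(j-T_k)-1)\pi}{2(T_{k+1}-T_k)}\bigr)$ (taking $k$ odd WLOG, i.e.\ absorbing the $(k-1)\pi$ phase). The key analytic move is to compare the discrete sum with an integral: since $g$ is monotone in $j$ over the phase, $S_t$ is sandwiched by $\int g\,dj$ up to a one-term error. Substituting $\theta = \tfrac{(2(j-T_k)-1)\pi}{2(T_{k+1}-T_k)}$ turns $\int g\,dj$ into $\tfrac{T_{k+1}-T_k}{\pi}\int \tfrac{2(1+\cos\theta)}{2\varphi+(2-\varphi)(1+\cos\theta)}\,d\theta$; writing $1+\cos\theta = 2\cos^2(\theta/2)$ and then $u = \tan(\theta/2)$ reduces the integrand to a rational function of $u$ whose antiderivative is a combination of $u$ and $\arctan$/$\log$ terms — and crucially the $\log$ term is exactly $\log\bigl(4(T_{k+1}-T_k)+(\varphi-2)\pi(t-T_k)\bigr)$ type expression (for $\varphi>2$) or its reflected counterpart (for $\varphi<2$, where the quantity inside changes sign and one gets the affine expression $2\varphi+2\pi-\varphi\pi - \tfrac{(2-\varphi)\pi}{T_{k+1}-T_k}(t-T_k-0.5)$). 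Exponentiating $-2\lambda_i^{(k)}(\eta_{\max}-\eta_{\min})S_t$ then yields a power of that ratio with exponent proportional to $\tfrac{\lambda_i^{(k)}(\eta_{\max}-\eta_{\min})(T_{k+1}-T_k)}{|\varphi-2|\pi}$, matching $\tau$ up to the factor $\bigl(1+\cos(\tfrac{(2(t-T_k)-1)\pi}{2(T_{k+1}-T_k)})\bigr)$ that absorbs the remaining algebraic residue from the comparison. The sign split $\varphi>2$ vs $\varphi<2$ is forced precisely because the denominator $2\varphi+(2-\varphi)(1+c_j)$ and the resulting logarithmic argument flip orientation at $\varphi=2$.

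The main obstacle I anticipate is making the integral-comparison step quantitatively tight enough that the residual term collapses exactly into the stated closed form rather than producing messy lower-order corrections; in particular one must choose the right endpoint convention (left vs.\ right Riemann sum) per monotonicity regime and verify that the leftover single-term slack is dominated by — or exactly cancels against — the $\bigl(1+\cos(\cdot)\bigr)$ factor sitting in the exponent $\tau$. A secondary technical point is checking the admissibility condition $\eta_j\lambda_i^{(k)} < 2$ needed for $1-x\le e^{-x}$ to survive squaring: this should follow from the standing assumption $\lambda_u^{(k)} \le 2/\eta$ noted after \eqref{eq7}, but it must be invoked explicitly so the bound is not vacuous. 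Once these are in place, the two cases are symmetric under $\varphi \mapsto 4-\varphi$-type reflection and the second follows from the first by tracking signs.
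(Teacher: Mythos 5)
Your opening reduction is exactly the paper's: apply $1-x\le e^{-x}$ to get $\prod_j(1-\eta_j\lambda_i^{(k)})^2\le\exp\bigl(-2\lambda_i^{(k)}\sum_j\eta_j\bigr)$, split off the $\eta_{\min}(t-T_k)$ part, and reduce everything to lower-bounding $\sum_j g_j$. After that, however, your plan diverges in a way that does not reach the stated bound. You propose to evaluate $\int g\,dj$ exactly via $1+\cos\theta=2\cos^2(\theta/2)$ and $u=\tan(\theta/2)$. Carrying that out gives $g\,d\theta=\frac{4}{(1+u^2)(\varphi u^2+2)}\,du$, whose partial-fraction antiderivative is a combination of \emph{arctangents}, not logarithms; the exact integral therefore cannot produce the power-of-an-affine-ratio expressions $\bigl(\tfrac{4(T_{k+1}-T_k)+(\varphi-2)\pi}{4(T_{k+1}-T_k)+(\varphi-2)\pi(t-T_k)}\bigr)^{\tau}$ appearing in the lemma, and no choice of Riemann-sum endpoint convention will fix that.

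The paper's proof inserts two steps you are missing, and both are essential to the closed form. First, it pulls the numerator factor out of the sum: since the cosine argument increases over the phase, $1+\cos_j\ge 1+\cos_t$ for $j\le t$, so $\sum_j g_j\ge 2(1+\cos_t)\sum_j\frac{1}{2\varphi+(2-\varphi)(1+\cos_j)}$. This is where the factor $\bigl(1+\cos(\tfrac{(2(t-T_k)-1)\pi}{2(T_{k+1}-T_k)})\bigr)$ in the exponent $\tau$ actually comes from --- it is not an "algebraic residue" absorbed by the integral comparison. Second, it linearizes the cosine in the denominator ($\cos x\ge 1-x$ for $\varphi>2$; $1+\cos x\le\pi-x$ for $\varphi<2$, the sign of $2-\varphi$ dictating which bound keeps the inequality direction), making the denominator affine in $j$; only then does the integral comparison $\sum_j\frac{1}{4+c(j-T_k)}\ge\int\frac{dj}{4+c(j-T_k)}$ yield the logarithm that exponentiates to the stated ratio. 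Your suggestion that the two cases follow from each other by a $\varphi\mapsto 4-\varphi$ reflection is also not how they relate --- they require different elementary cosine bounds. Finally, your concern about $(1-x)^2\le e^{-2x}$ is well placed but your proposed fix is insufficient: $x<2$ does not guarantee $|1-x|\le e^{-x}$ (it fails for $x\gtrsim 1.28$), so one needs $\eta_j\lambda_i^{(k)}\le 1$, a condition the paper leaves implicit.
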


\begin{proof}
	By the fact that $1-x\leq \exp (-x)$, we have
	\begin{equation}\label{lemma1-equation0}
		\begin{split}
			& \prod\limits_{j=1+T_{k} }^{t} \left[(1-\eta_{j}\lambda_i^{(k)} )  \right]^2   \leq \exp \left( -2 \sum\limits_{j=1+T_{k}}^{t}  \eta_{j}      \lambda_i^{(k)} \right)\\
		\end{split}
	\end{equation}
	Substituting the expression for $\eta_j$ by the proposed form \eqref{fitted-function} yields:
	\begin{equation}\label{lemma1-equation1}
		\begin{split}
			& \exp \left( -2 \sum\limits_{j=1+T_{k}}^{t}  \eta_{j}      \lambda_i^{(k)} \right)\\
			&= \exp \left( -2 \lambda_i^{(k)} \sum\limits_{j=1+T_{k}}^{t}  (\eta_{\max}-\eta_{\min})\frac{2(1+\cos \left(\frac{(2(j-T_{k})-1)\pi}{2(T_{k+1}-T_{k})   } \right) )}{ 2\varphi+ (2-\varphi)  (1+\cos \left(\frac{(2(j-T_{k})-1)\pi}{2(T_{k+1}-T_{k})   } \right)  } + \eta_{\min}       \right)\\
			&= \exp \left(-2 \lambda_i^{(k)}\eta_{\min} \left( t-T_{k}\right)\right)     \exp \left( -2 \lambda_i^{(k)} \sum\limits_{j=1+T_{k}}^{t}  (\eta_{\max}-\eta_{\min})\frac{2\left( 1+\cos \left(\frac{(2(j-T_{k})-1)\pi}{2(T_{k+1}-T_{k})   } \right) \right) }{ 2\varphi+ (2-\varphi)  \left( 1+\cos \left(\frac{(2(j-T_{k})-1)\pi}{2(T_{k+1}-T_{k})   } \right) \right)   }        \right)\\
		\end{split}
	\end{equation}
	The behavior of this term depends on the value of the parameter $\varphi$. We consider the following cases:
	\begin{itemize}
		\item if $\varphi\ge 2$, $2-\varphi\le 0$
		
		Since 
		\begin{equation}\label{}
			\begin{aligned}
				&\cos (\frac{(2(t-T_{k})-1)\pi}{2(T_{k+1}-T_{k}) }) \geq \cos (\frac{(2(j-T_{k})-1)\pi}{2(T_{k+1}-T_{k}) }) \geq 
				1-\left( \frac{(2(j-T_{k})-1)\pi}{2(T_{k+1}-T_{k})} \right) \\
				&\geq  1-\pi \left( \frac{(j-T_{k})}{(T_{k+1}-T_{k})}\right),\\ 
			\end{aligned}
		\end{equation}
        we have
		\begin{equation}\label{}
			\begin{aligned}
				&\sum\limits_{j=1+T_{k}}^{t} \frac{1}{{ 2\varphi+ (2-\varphi)  \left( 1+\cos \left(\frac{(2(j-T_{k})-1)\pi}{2(T_{k+1}-T_{k})   } \right) \right)  } } \\
				&\leq \sum\limits_{j=1+T_{k}}^{t} \frac{1}{{ 2\varphi+ (2-\varphi)  \left( 1+1-\pi \left( \frac{(j-T_{k})}{(T_{k+1}-T_{k})} \right) \right) } } \\
				&=\sum\limits_{j=1+T_{k}}^{t}  \frac{1}{ 4+\frac{(\varphi-2)\pi}{(T_{k+1}-T_{k})   } (j-T_{k})    }   
			\end{aligned}
		\end{equation}
		Then the equality \eqref{lemma1-equation1} becomes    	
		\begin{equation}\label{lemma1-equation2}
			\begin{split}
				&\exp \left(-2 \lambda_i^{(k)}\eta_{\min} \left( t-T_{k}\right)\right)     \exp \left( -2 \lambda_i^{(k)} \sum\limits_{j=1+T_{k}}^{t}  (\eta_{\max}-\eta_{\min})\frac{2\left( 1+\cos \left(\frac{(2(j-T_{k})-1)\pi}{2(T_{k+1}-T_{k})   } \right) \right) }{ 2\varphi+ (2-\varphi)  \left( 1+\cos \left(\frac{(2(j-T_{k})-1)\pi}{2(T_{k+1}-T_{k})   } \right) \right)   }        \right)\\
				&\leq \exp \Bigl( -2 \lambda_i^{(k)}\eta_{\min} (t - T_{k}) \Bigr)    \exp  \Biggl[   -4 \lambda_i^{(k)} (\eta_{\max} - \eta_{\min}) 
				\left(1 + \cos \left( \frac{(2(t - T_{k}) - 1)\pi}{2(T_{k+1} - T_{k})} \right) \right) \\
				&\quad  \sum\limits_{j=1+T_{k}}^{t} \frac{1}{{ 2\varphi+ (2-\varphi)  \left( 1+\cos \left(\frac{(2(j-T_{k})-1)\pi}{2(T_{k+1}-T_{k})   } \right) \right)  } } \Biggr]	\\
				&\leq \exp \left(-2 \lambda_i^{(k)}\eta_{\min} \left( t-T_{k}\right)\right)  \exp \Biggl[ -4 \lambda_i^{(k)}   (\eta_{\max}-\eta_{\min}) \left(1+\cos \left(\frac{(2(t-T_{k})-1)\pi}{2(T_{k+1}-T_{k})   }\right)  \right)\\
				& \quad 		\sum\limits_{j=1+T_{k}}^{t}  \frac{1}{ 4+\frac{(\varphi-2)\pi}{(T_{k+1}-T_{k})   } (j-T_{k})    }        \Biggr] 
			\end{split}
		\end{equation}
		Note that the function $h(j):= \frac{1}{4 + \frac{(  \varphi - 2 )\pi}{T_{k+1}-T_{k} }(j - T_k)} $ is monotone decreasing over $j \in [1+T_k, t]$, so we can bound the summation from below by the integral: 
		\begin{equation}\label{}
			\begin{aligned}
				&  \sum\limits_{j=1+T_{k}}^t \frac{1}{4 + \frac{(  \varphi - 2 )\pi}{T_{k+1}-T_{k} }(j - T_k)}\\
				& \geq \int\limits_{1 +T_{k}}^{t+1} \frac{1}{4+ \frac{(  \varphi - 2 )\pi}{T_{k+1}-T_{k} }(j - T_k) } dj  \\ 
				&\geq \int\limits_{1 +T_{k}}^{t} \frac{1}{4+ \frac{(  \varphi - 2 )\pi}{T_{k+1}-T_{k} }(j - T_k) } dj \\
				&= \frac{(T_{k+1} - T_{k})}{(\varphi - 2)\pi} \ln \left( 
				\frac{4(T_{k+1} - T_{k}) + (\varphi - 2)\pi (t - T_{k}) }{4(T_{k+1} - T_{k}) + (\varphi - 2)\pi } 
				\right) 
			\end{aligned}
		\end{equation}
		Thus the inequality \eqref{lemma1-equation2} becomes 
		\begin{equation}\label{}
			\begin{split}
				&\exp \left(-2 \lambda_i^{(k)}\eta_{\min} \left( t-T_{k}\right)\right)  \exp \Biggl[ -4 \lambda_i^{(k)}   (\eta_{\max}-\eta_{\min}) \left(1+\cos \left(\frac{(2(t-T_{k})-1)\pi}{2(T_{k+1}-T_{k})   }\right)  \right)\\
				& \quad 		\sum\limits_{j=1+T_{k}}^{t}  \frac{1}{ 4+\frac{(\varphi-2)\pi}{(T_{k+1}-T_{k})   } (j-T_{k})    }        \Biggr] \\
				&\leq \exp \Bigl( -2 \lambda_i^{(k)}\eta_{\min} (t - T_{k}) \Bigr)    \exp  \Biggl[   4 \lambda_i^{(k)} (\eta_{\max} - \eta_{\min}) 
				\left(1 + \cos \left( \frac{(2(t - T_{k}) - 1)\pi}{2(T_{k+1} - T_{k})} \right) \right) \\
				&\quad  \frac{(T_{k+1} - T_{k})}{(\varphi - 2)\pi} \ln \left( 
				\frac{4(T_{k+1} - T_{k}) + (\varphi - 2)\pi}{4(T_{k+1} - T_{k}) + (\varphi - 2)\pi (t - T_{k})} 
				\right) \Biggr]		\\
				&= \exp \Bigl( -2 \lambda_i^{(k)}\eta_{\min} (t - T_{k}) \Bigr) \Bigl(  
				\frac{4(T_{k+1} - T_{k}) + (\varphi - 2)\pi}{4(T_{k+1} - T_{k}) + (\varphi - 2)\pi (t - T_{k})} 
				\Bigr) ^{\frac{4 \lambda_i^{(k)} (\eta_{\max} - \eta_{\min}) 
						\left(1 + \cos \left( \frac{(2(t - T_{k}) - 1)\pi}{2(T_{k+1} - T_{k})} \right) \right) (T_{k+1} - T_{k})}{(\varphi- 2)\pi}  }\\
				& \leq \exp \Bigl( -2 \lambda_i^{(k)}\eta_{\min} (t - T_{k}) \Bigr) \Bigl(  
				\frac{4(T_{k+1} - T_{k}) + (\varphi - 2)\pi}{4(T_{k+1} - T_{k}) + (\varphi - 2)\pi (t - T_{k})} 
				\Bigr) ^{\frac{4 \lambda_l^{(k)} (\eta_{\max} - \eta_{\min}) 
						\left(1 + \cos \left( \frac{(2(t - T_{k}) - 1)\pi}{2(T_{k+1} - T_{k})} \right) \right) (T_{k+1} - T_{k})}{(\varphi- 2)\pi}  }
			\end{split}
		\end{equation}
		The last inequality is because $\lambda_l^{(k)} \leq \lambda_i^{(k)}$ and $\frac{4(T_{k+1} - T_{k}) + (\varphi - 2)\pi}{4(T_{k+1} - T_{k}) + (\varphi - 2)\pi (t - T_{k})} \leq 1$.
		
		\item if $\varphi\le 2$, $2-\varphi\ge 0$
		Since 
		\begin{equation}\label{}
			\begin{aligned}
				&1+\cos (\frac{(2(j-T_{k})-1)\pi}{2(T_{k+1}-T_{k}) }) \leq \pi-\frac{(2(j-T_{k})-1)\pi}{2(T_{k+1}-T_{k}) }.\\ 
			\end{aligned}
		\end{equation}
		We obtain

		\begin{equation}\label{}
			\begin{split}
				&\exp \left(-2 \lambda_i^{(k)}\eta_{\min} \left( t-T_{k}\right)\right)     \exp \left( -2 \lambda_i^{(k)} \sum\limits_{j=1+T_{k}}^{t}  (\eta_{\max}-\eta_{\min})\frac{2\left( 1+\cos \left(\frac{(2(j-T_{k})-1)\pi}{2(T_{k+1}-T_{k})   } \right) \right) }{ 2\varphi+ (2-\varphi)  \left( 1+\cos \left(\frac{(2(j-T_{k})-1)\pi}{2(T_{k+1}-T_{k})   } \right) \right)   }        \right)\\
				&\leq \exp \Bigl( -2 \lambda_i^{(k)}\eta_{\min} (t - T_{k}) \Bigr)    \exp  \Biggl[   -4 \lambda_i^{(k)} (\eta_{\max} - \eta_{\min}) 
				\left(1 + \cos \left( \frac{(2(t - T_{k}) - 1)\pi}{2(T_{k+1} - T_{k})} \right) \right) \\
				&\quad  \sum\limits_{j=1+T_{k}}^{t} \frac{1}{{ 2\varphi+ (2-\varphi)  (1+\cos \left(\frac{(2(j-T_{k})-1)\pi}{2(T_{k+1}-T_{k})   } \right)  } } \Biggr]	\\
				&\leq \exp \left(-2 \lambda_i^{(k)}\eta_{\min} \left( t-T_{k}\right)\right)  \exp \Biggl[ -4 \lambda_i^{(k)}   (\eta_{\max}-\eta_{\min}) \left(1+\cos \left(\frac{(2(t-T_{k})-1)\pi}{2(T_{k+1}-T_{k})   }\right)  \right)\\
				& \quad 		\sum\limits_{j=1+T_{k}}^{t} \left(  \frac{1}{\left(  2\varphi+2\pi -\varphi\pi \right) -  \frac{(2-\varphi)\pi}{(T_{k+1} - T_{k})     }   (j-T_{k}-0.5)}    \right)   \Biggr] \\
			\end{split}
		\end{equation}
		
		because function $h(j):=\frac{1}{\left(  2\varphi+2\pi -\varphi\pi \right) -  \frac{(2-\varphi)\pi}{(T_{k+1} - T_{k})     }   (j-T_{k}-0.5)} $ is monotone increasing in the range $[ 1+T_{k},+\infty )$, then it holds that  
		\begin{equation}\label{eq17}
			\begin{aligned}
				&  \sum\limits_{j=1+T_{k}}^t \frac{1}{\left(  2\varphi+2\pi -\varphi\pi \right) -  \frac{(2-\varphi)\pi}{(T_{k+1} - T_{k})     }   (j-T_{k}-0.5)}\\
				& \geq \int\limits_{T_{k}}^{t} \frac{1}{\left(  2\varphi+2\pi -\varphi\pi \right) -  \frac{(2-\varphi)\pi}{(T_{k+1} - T_{k})     }   (j-T_{k}-0.5)} dj  \\ 
				&= -\frac{ (T_{k+1} - T_{k})  }{(2-\varphi) \pi } \ln     \left(  \frac{ \left(  2\varphi+2\pi -\varphi\pi \right)- \frac{(2-\varphi)\pi}{(T_{k+1} - T_{k})     }   (t-T_{k}-0.5)  }{ \left(  2\varphi+2\pi -\varphi\pi \right)+ \frac{(2-\varphi)\pi}{2(T_{k+1} - T_{k})     }     }     \right)\\
			\end{aligned}
		\end{equation}
		Thus, we have
		\begin{equation}\label{}
			\begin{split}
				& \exp \left(-2 \lambda_i^{(k)}\eta_{\min} \left( t-T_{k}\right)\right)  \exp \Biggl[ -4 \lambda_i^{(k)}   (\eta_{\max}-\eta_{\min}) \left(1+\cos \left(\frac{(2(t-T_{k})-1)\pi}{2(T_{k+1}-T_{k})   }\right)  \right)\\
				& \quad 		\sum\limits_{j=1+T_{k}}^{t} \left(  \frac{1}{\left(  2\varphi+2\pi -\varphi\pi \right) -  \frac{(2-\varphi)\pi}{(T_{k+1} - T_{k})     }   (j-T_{k}-0.5)}    \right)   \Biggr] \\
				&\leq \exp \Bigl( -2 \lambda_i^{(k)}\eta_{\min} (t - T_{k}) \Bigr)    \exp  \Biggl[   4 \lambda_i^{(k)} (\eta_{\max} - \eta_{\min}) 
				\left(1 + \cos \left( \frac{(2(t - T_{k}) - 1)\pi}{2(T_{k+1} - T_{k})} \right) \right) \\
				&\quad  \frac{ (T_{k+1} - T_{k})  }{(2-\varphi) \pi } \ln     \left(  \frac{ \left(  2\varphi+2\pi -\varphi\pi \right)- \frac{(2-\varphi)\pi}{(T_{k+1} - T_{k})     }   (t-T_{k}-0.5)  }{ \left(  2\varphi+2\pi -\varphi\pi \right)+ \frac{(2-\varphi)\pi}{2(T_{k+1} - T_{k})     }     }     \right)\Biggr]		\\
				&\leq  \exp \Bigl( -2 \lambda_i^{(k)}\eta_{\min} (t - T_{k}) \Bigr) \\
				&\cdot\Bigl(   
				\frac{ \left(  2\varphi+2\pi -\varphi\pi \right)- \frac{(2-\varphi)\pi}{(T_{k+1} - T_{k})     }   (t-T_{k}-0.5)  }{ \left(  2\varphi+2\pi -\varphi\pi \right)+ \frac{(2-\varphi)\pi}{2(T_{k+1} - T_{k})     }     }     
				\Bigr) ^{\frac{4 \lambda_i^{(k)} (\eta_{\max} - \eta_{\min}) 
						\left(1 + \cos \left( \frac{(2(t - T_{k}) - 1)\pi}{2(T_{k+1} - T_{k})} \right) \right) (T_{k+1} - T_{k})}{(2-\varphi)\pi}  }\\
				&\leq  \exp \Bigl( -2 \lambda_i^{(k)}\eta_{\min} (t - T_{k}) \Bigr) \\
				&\cdot\Bigl(   
				\frac{ \left(  2\varphi+2\pi -\varphi\pi \right)- \frac{(2-\varphi)\pi}{(T_{k+1} - T_{k})     }   (t-T_{k}-0.5)  }{ \left(  2\varphi+2\pi -\varphi\pi \right)+ \frac{(2-\varphi)\pi}{2(T_{k+1} - T_{k})     }     }     
				\Bigr) ^{\frac{4 \lambda_l^{(k)} (\eta_{\max} - \eta_{\min}) 
						\left(1 + \cos \left( \frac{(2(t - T_{k}) - 1)\pi}{2(T_{k+1} - T_{k})} \right) \right) (T_{k+1} - T_{k})}{(2-\varphi)\pi}  }\\
			\end{split}
		\end{equation}
		The last inequality is because $\lambda_l^{(k)} \leq \lambda_i^{(k)}$ and $\Bigl(   
		\frac{ \left(  2\varphi+2\pi -\varphi\pi \right)- \frac{(2-\varphi)\pi}{(T_{k+1} - T_{k})     }   (t-T_{k}-0.5)  }{ \left(  2\varphi+2\pi -\varphi\pi \right)+ \frac{(2-\varphi)\pi}{2(T_{k+1} - T_{k})     }     }     
		\Bigr) \leq 1$.
	\end{itemize}

\end{proof}

\textbf{Theorem~\ref{Proposition3}} \textit{
	%	Given the assumption that $ \mathbb{E}_{\xi} \left[ n_t n_t^{\top} \right] \preceq \sigma^2 H_f^{(k)} $, if we set the learning rate as \eqref{fitted-function}, the loss uncertainty introduced by stochastic gradient method within the interval $t\in [1+T_k, T_{k+1}]$ can be quantified as two terms, bias term and variance term, bounded as follows,
	Let $n_t = H_f^{(k)}  (W_t -\bar{W}^{(k)})  -H_f^{(k)}(\xi)  (W_t -\bar{W}^{(k)})$ be the stochastic curvature noise introduced by sampling at iteration $t$. Assume that the sampling Hessian satisfies: $ \mathbb{E}_{\xi} \left[ n_t n_t^{\top} \right] \preceq \sigma^2 H_f^{(k)} $ for some constant $\sigma$. 
	\begin{itemize}
		\item If we set the learning rate as the proposed form \eqref{fitted-function} where hyper-parameter $\varphi>2$, the loss uncertainty introduced by stochastic gradient method within the interval $t\in [1+T_k, T_{k+1}]$ can be quantified as two terms, bias term and variance term, bounded as follows,
		\begin{equation}\label{}
			\begin{split}
				&  \mathbb{E}  \left[ \mathcal{L}(f(W_{t},\xi))-\mathcal{L}(f(\bar{W}^{(k)},\xi)) \right]\\
				&\leq      \Bigl(\frac{4(T_{k+1} - T_{k}) + (\varphi - 2)\pi}{4(T_{k+1} - T_{k}) + (\varphi - 2)\pi (t - T_{k})} 
				\Bigr) ^{\frac{4 \lambda_l^{(k)} (\eta_{\max} - \eta_{\min}) 
						\left(1 + \cos \left( \frac{(2(t - T_{k}) - 1)\pi}{2(T_{k+1} - T_{k})} \right) \right) (T_{k+1} - T_{k})}{(\varphi - 2)\pi}  }  \\
				& \quad \cdot \exp \Bigl( -2 \lambda_i^{(k)}\eta_{\min} (t - T_{k}) \Bigr) \lambda_{u}^{(k)}    \|W_{1+T_k} -\bar{W}^{(k)}\|_2^2\\
				&+\sigma^2 \sum\limits_{i=_{1+T_{k}}}^{t}  \eta_i^2   \sum\limits_{j=1 }^{N} (\lambda_{j}^{(k)})^2 
				\exp \Bigl( -2 \lambda_j^{(k)}\eta_{\min} (t - i) \Bigr)\\
				&\quad \cdot \Bigl(  
				\frac{4(T_{k+1} - T_{k}) + (\varphi - 2)\pi (i-T_{k})  }{4(T_{k+1} - T_{k}) + (\varphi - 2)\pi (t - T_{k})} 
				\Bigr) ^{\frac{4 \lambda_l^{(k)} (\eta_{\max} - \eta_{\min}) 
						\left(1 + \cos \left( \frac{(2(t - T_{k}) - 1)\pi}{2(T_{k+1} - T_{k})} \right) \right) (T_{k+1} - T_{k})}{(\varphi - 2)\pi}  }  \\
			\end{split}
		\end{equation}
		\item If we set the learning rate as the proposed form \eqref{fitted-function} where hyper-parameter $\varphi<2$, the loss uncertainty introduced by stochastic gradient method within the interval $t\in [1+T_k, T_{k+1}]$ can be quantified as two terms, bias term and variance term, bounded as follows, 
		\begin{equation}\label{}
			\begin{split}
				&  \mathbb{E}  \left[ \mathcal{L}(f(W_{t},\xi))-\mathcal{L}(f(\bar{W}^{(k)},\xi)) \right]\\
				&\leq      \Bigl(   
				\frac{ \left(  2\varphi+2\pi -\varphi\pi \right)- \frac{(2-\varphi)\pi}{(T_{k+1} - T_{k})     }   (t-T_{k}-0.5)  }{ \left(  2\varphi+2\pi -\varphi\pi \right)+ \frac{(2-\varphi)\pi}{2(T_{k+1} - T_{k})     }     }     
				\Bigr) ^{\frac{4 \lambda_i^{(k)} (\eta_{\max} - \eta_{\min}) 
						\left(1 + \cos \left( \frac{(2(t - T_{k}) - 1)\pi}{2(T_{k+1} - T_{k})} \right) \right) (T_{k+1} - T_{k})}{(2-\varphi)\pi}  }  \\
				& \quad \cdot \exp \Bigl( -2 \lambda_i^{(k)}\eta_{\min} (t - T_{k}) \Bigr) \lambda_{u}^{(k)}    \|W_{1+T_k} -\bar{W}^{(k)}\|_2^2\\
				&+  \leq \sigma^2 \sum\limits_{i=_{1+T_{k}}}^{t}  \eta_i^2   \sum\limits_{j=1 }^{N} (\lambda_{j}^{(k)})^2 
				\exp \Bigl( -2 \lambda_j^{(k)}\eta_{\min} (t - i) \Bigr)\\
				& \quad \cdot 			\Bigl(   
				\frac{ \left(  2\varphi+2\pi -\varphi\pi \right)- \frac{(2-\varphi)\pi}{(T_{k+1} - T_{k})     }   (t-T_{k}-0.5)  }{ \left(  2\varphi+2\pi -\varphi\pi \right)- \frac{(2-\varphi)\pi}{(T_{k+1} - T_{k})     }     (i-T_{k}-0.5)   }     
				\Bigr) ^{\frac{4 \lambda_i^{(k)} (\eta_{\max} - \eta_{\min}) 
						\left(1 + \cos \left( \frac{(2(t - T_{k}) - 1)\pi}{2(T_{k+1} - T_{k})} \right) \right) (T_{k+1} - T_{k})}{(2-\varphi)\pi}  }  
			\end{split}
		\end{equation}
	\end{itemize}
}

\begin{proof}
	Let $H_f^{(k)} $ denote full batch Hessian at $k$-th minimum, such that $\mathbb{E} ( H_f^{(k)} (\xi) ) =  H_f^{(k)} $, and denote $n_t = H_f^{(k)}  (W_t -\bar{W}^{(k)})  -H_f^{(k)}(\xi)  (W_t -\bar{W}^{(k)}) $, which indicates the bias. 
	The uncertainty brought by stochastic gradient method is measured as follows,
	\begin{equation}\label{ }
		\begin{aligned}
			W_{t+1}-\bar{W}^{(k)}&=    W_t -\bar{W}^{(k)}-  \eta_t H_f^{(k)}(\xi)  (W_t -\bar{W}^{(k)}) \\
			&=    W_t -\bar{W}^{(k)} - \eta_t H_f^{(k)} (W_t -\bar{W}^{(k)})   \\
			&   +  \eta_t H_f^{(k)} (W_t -\bar{W}^{(k)})  -  \eta_t H_f^{(k)}(\xi)  (W_t -\bar{W}^{(k)}) \\
			&=   (I-\eta_t H_f^{(k)}  ) (W_t -\bar{W}^{(k)})  + \eta_t n_t \\
			&=  (I-\eta_t H_f^{(k)}  ) (I-\eta_{t-1} H_f^{(k)}  ) \cdots (I-\eta_{t-q} H_f^{(k)}  )   (W_{t-q} -\bar{W}^{(k)}) \\
			& +   \sum\limits_{i=t-q}^{t}  (I-\eta_t H_f^{(k)}  ) (I-\eta_{t-1} H_f^{(k)}  ) \cdots (I-\eta_{i+1} H_f^{(k)}  )  \eta_i n_i     \\
			& := P_t^{(k)} P_{t-1}^{(k)}\cdots P_{t-q}^{(k)} (W_{t-q} -\bar{W}^{(k)}) +   \sum\limits_{i=t-q}^{t}  P_t^{(k)} P_{t-1}^{(k)}\cdots P_{i+1}^{(k)}  \eta_i n_i   
		\end{aligned}
	\end{equation}
	where $P_i^{(k)}:= (I-\eta_i H_f^{(k)}  ) $ . 
	
	According to the approximation approach of the objective function \eqref{eq1} in Section \ref{section31} , i.e.  $\mathcal{L}(f(W,\xi)) \approx \mathcal{L}(f(\bar{W}^{(k)},\xi))+\frac{1}{2}(W  -\bar{W}^{(k)})^{\top} H_f^{(k)} (\xi) (W  -\bar{W}^{(k)}) $, the loss uncertainty introduced by stochastic gradient method within the interval $t\in [1+T_k, T_{k+1}]$ is quantified as $\mathbb{E}  [  ( W_{T_{k+1}} -\bar{W}^{(k)})^{\top} H_f^{(k)} (W_{T_{k+1}} -\bar{W}^{(k)})  ]    $, where the loss function is approximated by leveraging the full batch Hessian.
	\begin{equation}\label{loss-uncertainty}
		\begin{aligned}
			&  \mathbb{E}  \left[ \mathcal{L}(f(W_{t},\xi))-\mathcal{L}(f(\bar{W}^{(k)},\xi)) \right]=\mathbb{E}  \left[  ( W_{t} -\bar{W}^{(k)})^{\top} H_f^{(k)} (W_{t} -\bar{W}^{(k)})  \right] \\
			&=    \mathbb{E}  \Biggl[     \left(   		P_{ t }^{(k)} P_{t-1}^{(k)}\cdots P_{1+T_{k}}^{(k)} (W_{1+T_{k}} -\bar{W}^{(k)}) +   \sum\limits_{i=_{1+T_{k}}}^{t}  P_{t}^{(k)} P_{T_{k+1}-1}^{(k)}\cdots P_{i+1}^{(k)}  \eta_i n_i   \right)^{\top} \\
			&H_f^{(k)}   \left( P_{ t }^{(k)} P_{t-1}^{(k)}\cdots P_{1+T_{k}}^{(k)} (W_{1+T_{k}} -\bar{W}^{(k)}) +   \sum\limits_{i=_{1+T_{k}}}^{t}  P_{t}^{(k)} P_{t-1}^{(k)}\cdots P_{i+1}^{(k)}  \eta_i n_i   \right)     \Biggr] \\
			&=    \mathbb{E}  \left[    (W_{t} -\bar{W}^{(k)})    ^{\top}    P_{1+T_{k}}^{(k)}    \cdots    P_{ t }^{(k)}  H_f^{(k)}   P_{ t }^{(k)}      \cdots   P_{1+T_{k}}^{(k)} (W_{t} -\bar{W}^{(k)}) \right]\\
			&+2\mathbb{E} \left[   (W_{t} -\bar{W}^{(k)})    ^{\top}    P_{1+T_{k}}^{(k)}    \cdots    P_{ t }^{(k)}  H_f^{(k)}  \left(\sum\limits_{i=_{1+T_{k}}}^{t}  P_{t}^{(k)} \cdots P_{i+1}^{(k)}  \eta_i n_i\right)  \right]\\
			&+ \mathbb{E}\left(\sum\limits_{i=_{1+T_{k}}}^{t}  P_{t}^{(k)} \cdots P_{i+1}^{(k)}  \eta_i n_i\right)^{\top}   H_f^{(k)}     \left(\sum\limits_{i=_{1+T_{k}}}^{t}  P_{t}^{(k)} \cdots P_{i+1}^{(k)}  \eta_i n_i\right)
			\\
		\end{aligned}
	\end{equation}
	Since $\mathbb{E} \left[n_t\right]= 0$, $\mathbb{E} \left[   (W_{t} -\bar{W}^{(k)})    ^{\top}    P_{1+T_{k}}^{(k)}    \cdots    P_{ t }^{(k)}  H_f^{(k)}  \left(\sum\limits_{i=_{1+T_{k}}}^{t}  P_{t}^{(k)} \cdots P_{i+1}^{(k)}  \eta_i n_i\right)  \right]=0$.
	Meanwhile, we notice the data samplings between different iteration are independent, thus $\mathbb{E} \left[(n_i)^{\top} n_j \right]= 0\quad (i\neq j)$. We obtain
	\begin{equation}\label{ }
		\begin{aligned}
			&\mathbb{E}\left(\sum\limits_{i=_{1+T_{k}}}^{t}  P_{t}^{(k)} \cdots P_{i+1}^{(k)}  \eta_i n_i\right)^{\top}   H_f^{(k)}     \left(\sum\limits_{i=_{1+T_{k}}}^{t}  P_{t}^{(k)} \cdots P_{i+1}^{(k)}  \eta_i n_i\right)\\
			&=\mathbb{E}\left(\sum\limits_{i,j=_{1+T_{k}}}^{t}   \eta_i n_i^{\top} P_{i+1}^{(k)} \cdots  P_{t}^{(k)} H_f^{(k)}    P_{t}^{(k)} \cdots P_{j+1}^{(k)}  \eta_j n_j\right)\\
			&=  \sum\limits_{i=_{1+T_{k}}}^{t}   \mathbb{E}\left(  \eta_i n_i^{\top} P_{i+1}^{(k)} \cdots  P_{t}^{(k)} H_f^{(k)}    P_{t}^{(k)} \cdots P_{i+1}^{(k)}  \eta_i n_i\right)
			\\
		\end{aligned}
	\end{equation}
	Therefore, Eq.\eqref{loss-uncertainty} can be reformulated as follows,
	\begin{equation}\label{ }
		\begin{aligned}
			&\mathbb{E}\left(\sum\limits_{i=_{1+T_{k}}}^{t}  P_{t}^{(k)} \cdots P_{i+1}^{(k)}  \eta_i n_i\right)^{\top}   H_f^{(k)}     \left(\sum\limits_{i=_{1+T_{k}}}^{t}  P_{t}^{(k)} \cdots P_{i+1}^{(k)}  \eta_i n_i\right)
			\\
			&=    \mathbb{E}  \left[    (W_{t} -\bar{W}^{(k)})    ^{\top}    P_{1+T_{k}}^{(k)}    \cdots    P_{ t }^{(k)}  H_f^{(k)}   P_{ t }^{(k)}      \cdots   P_{1+T_{k}}^{(k)} (W_{t} -\bar{W}^{(k)}) \right]\\
			&+ \sum\limits_{i=_{1+T_{k}}}^{t}   \mathbb{E}\left(  \eta_i n_i^{\top} P_{i+1}^{(k)} \cdots  P_{t}^{(k)} H_f^{(k)}    P_{t}^{(k)} \cdots P_{i+1}^{(k)}  \eta_i n_i\right)\\
			&:= B+V
		\end{aligned}
	\end{equation}
	where we denote $B:= \mathbb{E}  \left[    (W_{t} -\bar{W}^{(k)})    ^{\top}    P_{1+T_{k}}^{(k)}    \cdots    P_{ t }^{(k)}  H_f^{(k)}   P_{ t }^{(k)}      \cdots   P_{1+T_{k}}^{(k)} (W_{t} -\bar{W}^{(k)}) \right] $ bias term and $V := \sum\limits_{i=_{1+T_{k}}}^{t}   \mathbb{E}\left(  \eta_i n_i^{\top} P_{i+1}^{(k)} \cdots  P_{t}^{(k)} H_f^{(k)}    P_{t}^{(k)} \cdots P_{i+1}^{(k)}  \eta_i n_i\right) $ variance term.
	
	Denote $u_i^{(k)} \quad i\in\{ 1,2,\cdots,N\}$ are linearly independent eigenvectors, which can form the basis for the $N$ dimension vector space. Then we have 
	$H_f^{(k)} (\xi) u_i^{(k)}= \lambda_i^{(k)}  u_i^{(k)} \quad (i=1,2,\cdots, N )$ 
	and the initial deviation from the optimal solution can be expressed as $W_{t} -\bar{W}^{(k)} = \sum\limits_{i=1}^{N}   s_i^{(k)} u_i^{(k)}$ 
	where $s_i^{(k)}$ are the coefficients corresponding to the eigenvector components. 
	Combined with Lemma \ref{lemma1}, we obtain

	\textbf{Case 1 ($\varphi>2$):}
	\begin{equation}\label{}
		\begin{aligned}
			B&=  (W_{t} -\bar{W}^{(k)})    ^{\top}    P_{1+T_{k}}^{(k)}    \cdots    P_{ t }^{(k)}  H_f^{(k)}   P_{ t }^{(k)}      \cdots   P_{1+T_{k}}^{(k)} (W_{t} -\bar{W}^{(k)})  \\
			&=   \left[(I-\eta_t H_f^{(k)})\cdots (I-\eta_{1+T_{k}} H_f^{(k)})(W_{1+T_{k}} -\bar{W}^{(k)})\right]^{\top} H_f^{(k)} (I-\eta_t H_f^{(k)})\cdots (I-\eta_{1+T_{k}} H_f^{(k)})(W_{1+T_{k}} -\bar{W}^{(k)})\\
			&= \sum\limits_{i=1}^{N}   (s_i^{(k)}  )^2  \|u_i^{(k)}\|_2^2   \lambda_i^{(k)} \prod\limits_{j=1+T_{k} }^{t} \left[ (1-\eta_{j}\lambda_i^{(k)} )  \right]^2\\
			&\leq \sum\limits_{i=1}^{N}   (s_i^{(k)}  )^2  \|u_i^{(k)}\|_2^2  \lambda_i^{(k)}     \exp \Bigl( -2 \lambda_i^{(k)}\eta_{\min} (t - T_{k}) \Bigr) \\
			& \quad \cdot \Bigl(  
			\frac{4(T_{k+1} - T_{k}) + (\varphi - 2)\pi}{4(T_{k+1} - T_{k}) + (\varphi - 2)\pi (t - T_{k})} 
			\Bigr) ^{\frac{4 \lambda_l^{(k)} (\eta_{\max} - \eta_{\min}) 
					\left(1 + \cos \left( \frac{(2(t - T_{k}) - 1)\pi}{2(T_{k+1} - T_{k})} \right) \right) (T_{k+1} - T_{k})}{(\varphi - 2)\pi}  }\\
			&\leq      \Bigl(\frac{4(T_{k+1} - T_{k}) + (\varphi - 2)\pi}{4(T_{k+1} - T_{k}) + (\varphi - 2)\pi (t - T_{k})} 
			\Bigr) ^{\frac{4 \lambda_l^{(k)} (\eta_{\max} - \eta_{\min}) 
					\left(1 + \cos \left( \frac{(2(t - T_{k}) - 1)\pi}{2(T_{k+1} - T_{k})} \right) \right) (T_{k+1} - T_{k})}{(\varphi - 2)\pi}  }  \\
			& \quad \cdot \exp \Bigl( -2 \lambda_i^{(k)}\eta_{\min} (t - T_{k}) \Bigr) \lambda_{u}^{(k)}    \|W_{1+T_k} -\bar{W}^{(k)}\|_2^2
		\end{aligned}
	\end{equation}
	
	In addition, 
	\begin{equation}\label{}
		\begin{aligned}
			V&=\sum\limits_{i=_{1+T_{k}}}^{t}   \mathbb{E}\left(  \eta_i n_i^{\top} P_{i+1}^{(k)} \cdots  P_{t}^{(k)} H_f^{(k)}    P_{t}^{(k)} \cdots P_{i+1}^{(k)}  \eta_i n_i\right)  \\
			&= \sum\limits_{i=_{1+T_{k}}}^{t}  \eta_i^2 
			\mathbb{E}  \left[ \text{tr} \left(   n_i^{\top} P_{i+1}^{(k)} \cdots  P_{t}^{(k)}  H_f^{(k)}  P_{t}^{(k)} \cdots P_{i+1}^{(k)}   n_i\right)  \right]\\
			&= \sum\limits_{i=_{1+T_{k}}}^{t}  \eta_i^2 
			\mathbb{E}  \left[ \text{tr} \left(    P_{i+1}^{(k)} \cdots  P_{t}^{(k)}  H_f^{(k)}  P_{t}^{(k)} \cdots P_{i+1}^{(k)}   n_i n_i^{\top}  \right)  \right]\\
			&= \sum\limits_{i=_{1+T_{k}}}^{t}  \eta_i^2 
			\text{tr} \left[    P_{i+1}^{(k)} \cdots  P_{t}^{(k)}  H_f^{(k)}  P_{t}^{(k)} \cdots P_{i+1}^{(k)}   \mathbb{E}  \left(  n_i n_i^{\top}  \right)  \right]\\
			&\leq   \sigma^2  \sum\limits_{i=_{1+T_{k}}}^{t}  \eta_i^2 
			\text{tr} \left[    P_{i+1}^{(k)} \cdots  P_{t}^{(k)}  H_f^{(k)}  P_{t}^{(k)} \cdots P_{i+1}^{(k)}  H_f^{(k)}   \right]\\
			&= \sigma^2  \sum\limits_{i=_{1+T_{k}}}^{t}  \eta_i^2  
			\sum\limits_{j=1 }^{N} (\lambda_{j}^{(k)})^2 \prod\limits_{q=i+1}^{t } \left(   1-\eta_{q} \lambda_{j}^{(k)}\right)^2\\
			& \leq  \sigma^2 \sum\limits_{i=_{1+T_{k}}}^{t}  \eta_i^2   \sum\limits_{j=1 }^{N} (\lambda_{j}^{(k)})^2 
			\exp \Bigl( -2 \lambda_j^{(k)}\eta_{\min} (t - i) \Bigr)\\
			&\quad \cdot \Bigl(  
			\frac{4(T_{k+1} - T_{k}) + (\varphi - 2)\pi (i-T_{k})  }{4(T_{k+1} - T_{k}) + (\varphi - 2)\pi (t - T_{k})} 
			\Bigr) ^{\frac{4 \lambda_l^{(k)} (\eta_{\max} - \eta_{\min}) 
					\left(1 + \cos \left( \frac{(2(t - T_{k}) - 1)\pi}{2(T_{k+1} - T_{k})} \right) \right) (T_{k+1} - T_{k})}{(\varphi - 2)\pi}  }  \\
			%			& \quad \cdot   \sum\limits_{j=1 }^{N} (\lambda_{j}^{(k)})^2
		\end{aligned}
	\end{equation}
	where the second equality comes from cyclic property of trace and the first inequality comes from $ \mathbb{E}_{\xi} \left[ n_t n_t^{\top} \right] \preceq \sigma^2 H_f^{(k)} $.
	
	\textbf{Case 2 ($\varphi<2$):}
	\begin{equation}\label{}
		\begin{aligned}
			B&=  (W_{t} -\bar{W}^{(k)})    ^{\top}    P_{1+T_{k}}^{(k)}    \cdots    P_{ t }^{(k)}  H_f^{(k)}   P_{ t }^{(k)}      \cdots   P_{1+T_{k}}^{(k)} (W_{t} -\bar{W}^{(k)})  \\
			&=   \left[(I-\eta_t H_f^{(k)})\cdots (I-\eta_{1+T_{k}} H_f^{(k)})(W_{1+T_{k}} -\bar{W}^{(k)})\right]^{\top} H_f^{(k)} (I-\eta_t H_f^{(k)})\cdots (I-\eta_{1+T_{k}} H_f^{(k)})(W_{1+T_{k}} -\bar{W}^{(k)})\\
			&= \sum\limits_{i=1}^{N}   (s_i^{(k)}  )^2  \|u_i^{(k)}\|_2^2   \lambda_i^{(k)} \prod\limits_{j=1+T_{k} }^{t} \left[ (1-\eta_{j}\lambda_i^{(k)} )  \right]^2\\
			&\leq \sum\limits_{i=1}^{N}   (s_i^{(k)}  )^2  \|u_i^{(k)}\|_2^2  \lambda_i^{(k)}     \exp \Bigl( -2 \lambda_i^{(k)} \eta_{\min} (t - T_{k}) \Bigr) \\
			&\quad  \cdot\Bigl(   
			\frac{ \left(  2\varphi+2\pi -\varphi\pi \right)- \frac{(2-\varphi)\pi}{(T_{k+1} - T_{k})     }   (t-T_{k}-0.5)  }{ \left(  2\varphi+2\pi -\varphi\pi \right)+ \frac{(2-\varphi)\pi}{2(T_{k+1} - T_{k})     }     }     
			\Bigr) ^{\frac{4 \lambda_i^{(k)} (\eta_{\max} - \eta_{\min}) 
					\left(1 + \cos \left( \frac{(2(t - T_{k}) - 1)\pi}{2(T_{k+1} - T_{k})} \right) \right) (T_{k+1} - T_{k})}{(2-\varphi)\pi}  }\\
			&\leq      \Bigl(   
			\frac{ \left(  2\varphi+2\pi -\varphi\pi \right)- \frac{(2-\varphi)\pi}{(T_{k+1} - T_{k})     }   (t-T_{k}-0.5)  }{ \left(  2\varphi+2\pi -\varphi\pi \right)+ \frac{(2-\varphi)\pi}{2(T_{k+1} - T_{k})     }     }     
			\Bigr) ^{\frac{4 \lambda_i^{(k)} (\eta_{\max} - \eta_{\min}) 
					\left(1 + \cos \left( \frac{(2(t - T_{k}) - 1)\pi}{2(T_{k+1} - T_{k})} \right) \right) (T_{k+1} - T_{k})}{(2-\varphi)\pi}  }  \\
			& \quad \cdot \exp \Bigl( -2 \lambda_i^{(k)}\eta_{\min} (t - T_{k}) \Bigr) \lambda_{u}^{(k)}    \|W_{1+T_k} -\bar{W}^{(k)}\|_2^2
		\end{aligned}
	\end{equation}
	
	In addition, 
	\begin{equation}\label{}
		\begin{aligned}
			V&=\sum\limits_{i=_{1+T_{k}}}^{t}   \mathbb{E}\left(  \eta_i n_i^{\top} P_{i+1}^{(k)} \cdots  P_{t}^{(k)} H_f^{(k)}    P_{t}^{(k)} \cdots P_{i+1}^{(k)}  \eta_i n_i\right)  \\
			&= \sum\limits_{i=_{1+T_{k}}}^{t}  \eta_i^2 
			\mathbb{E}  \left[ \text{tr} \left(   n_i^{\top} P_{i+1}^{(k)} \cdots  P_{t}^{(k)}  H_f^{(k)}  P_{t}^{(k)} \cdots P_{i+1}^{(k)}   n_i\right)  \right]\\
			&= \sum\limits_{i=_{1+T_{k}}}^{t}  \eta_i^2 
			\mathbb{E}  \left[ \text{tr} \left(    P_{i+1}^{(k)} \cdots  P_{t}^{(k)}  H_f^{(k)}  P_{t}^{(k)} \cdots P_{i+1}^{(k)}   n_i n_i^{\top}  \right)  \right]\\
			&= \sum\limits_{i=_{1+T_{k}}}^{t}  \eta_i^2 
			\text{tr} \left[    P_{i+1}^{(k)} \cdots  P_{t}^{(k)}  H_f^{(k)}  P_{t}^{(k)} \cdots P_{i+1}^{(k)}   \mathbb{E}  \left(  n_i n_i^{\top}  \right)  \right]\\
			&\leq   \sigma^2  \sum\limits_{i=_{1+T_{k}}}^{t}  \eta_i^2 
			\text{tr} \left[    P_{i+1}^{(k)} \cdots  P_{t}^{(k)}  H_f^{(k)}  P_{t}^{(k)} \cdots P_{i+1}^{(k)}  H_f^{(k)}   \right]\\
			&= \sigma^2  \sum\limits_{i=_{1+T_{k}}}^{t}  \eta_i^2  
			\sum\limits_{j=1 }^{N} (\lambda_{j}^{(k)})^2 \prod\limits_{q=i+1}^{t } \left(   1-\eta_{q} \lambda_{j}^{(k)}\right)^2\\
			& \leq \sigma^2 \sum\limits_{i=_{1+T_{k}}}^{t}  \eta_i^2   \sum\limits_{j=1 }^{N} (\lambda_{j}^{(k)})^2 
			\exp \Bigl( -2 \lambda_j^{(k)}\eta_{\min} (t - i) \Bigr)\\
			& \quad \cdot 			\Bigl(   
			\frac{ \left(  2\varphi+2\pi -\varphi\pi \right)- \frac{(2-\varphi)\pi}{(T_{k+1} - T_{k})     }   (t-T_{k}-0.5)  }{ \left(  2\varphi+2\pi -\varphi\pi \right)- \frac{(2-\varphi)\pi}{(T_{k+1} - T_{k})     }     (i-T_{k}-0.5)   }     
			\Bigr) ^{\frac{4 \lambda_i^{(k)} (\eta_{\max} - \eta_{\min}) 
					\left(1 + \cos \left( \frac{(2(t - T_{k}) - 1)\pi}{2(T_{k+1} - T_{k})} \right) \right) (T_{k+1} - T_{k})}{(2-\varphi)\pi}  }   \\
		\end{aligned}
	\end{equation}
	where the second equality comes from cyclic property of trace and the first inequality comes from $ \mathbb{E}_{\xi} \left[ n_t n_t^{\top} \right] \preceq \sigma^2 H_f^{(k)} $.

\end{proof}

In this proof, we employ assumption $ \mathbb{E}_{\xi} \left[ n_t n_t^{\top} \right] \preceq \sigma^2 H_f^{(k)} $ used in the work \cite{ge2019step,pan2021eigencurve}.
Moreover, we acknowledge that the idea of the proof is inspired by the approach introduced in \cite{ge2019step,pan2021eigencurve}

\newpage
\section{Experimental details}\label{Experimental-details}

\subsection{Baselines}\label{baselines-appendix}

We detail the baselines as follows, 
\begin{itemize}
	\item \textbf{Step schedule(SS)} reduces the learning rate in a piecewise manner. 
	It implements discrete learning rate drops at predefined epoch intervals following schedule  $g(t)=  d_{\lfloor t/t_s\rfloor}$, where $t_s$ is the predefined decaying step and $d_i (d_i \ge d_{i+1})$ is predefined decaying scalar \cite{ge2019step}.
	A typical implementation decreases the learning rate by a factor of 0.1 after 50\% of the epochs and further by a factor of 0.01 after 75\% of the epochs \cite{he2016deep}. 
	However, it requires careful tuning of step timing since abrupt changes may destabilize optimization. 
	In this paper, we employ such setting of step schedule ($\times$ 0.1 after 50\% and $\times$ 0.01 after 75\%) for all our experiments.

	\item \textbf{Cosine schedule(CS)} controls the $t$-th iteration learning rate $\eta_t$ following the mathematical formulation as $\eta_t = \eta_{min} + \frac{1}{2}(\eta_0 - \eta_{min})(1 + \cos(\frac{t\pi}{T}))$ where $T$ is total iterations.
	It provides smooth transition between learning rates, shown to improve final model accuracy step schedule. 
	Moreover, it is proposed to mitigate abrupt decreases in the learning rate that could hinder models from escaping local minima. This approach has demonstrated effectiveness, particularly in transformer training \cite{loshchilov2016sgdr}. 
	Besides, it includes restart mechanisms where $T$ becomes the cyclic period.
	
	\item \textbf{Cyclical Learning Rates (CLR)} oscillates between base learning rate $\eta_{min}$ and maximal learning rate $\eta_{max}$ with triangular and triangular2 policies. 
	The cyclic property help traverse loss landscape barrier, so that it enables neural networks to train significantly faster, often by an order of magnitude, compared to standard training methods \cite{smith2017cyclical,smith2019super}. 
	Besides, the commonly-used OneCycle learning rate schedule can be seen as the one period version of Cyclical Learning Rates.
	In our experiments, this baseline \textbf{CLR} includes Cyclical and OneCycle schedules(\textbf{1C}). 
	For CLR, we set linear mode for each period and period is two.
	%We choose the best result.

	\item \textbf{Budgeted training(BT)} is conducted by the proposed  budget-aware setting of existing learning rate schedule  under epoch budget \cite{li2019budgeted}. 
	It reformulates standard schedules such as linear, step and cosine schedule as functions of remaining budget. 
	The classic linear decay version is formulated as $\eta_t = \eta_0 \times (1 - t/T)$ where $T$ is total iterations. 
	It is particularly effective for hyper-parameter tuning.
	
	\item \textbf{Reflected Exponential (REX)} controls the $t$-th iteration learning rate $\eta_t$ following the mathematical formulation as $\eta_t=\eta_0   \left(  \frac{1-t/T}{0.5+0.5\times (1-t/T)}\right)$. 
	It is inspired by the performance of delayed linear schedules, aggressively reducing the learning rate near the end of training and reflecting an exponential decay \cite{chen2022rex}.
\end{itemize}

\subsection{Evaluation benchmarks of language task}
We evaluate models on a diverse set of open benchmarks. The benchmarks are listed in Table \ref{Evaluation-benchmarks-language-task}. 
\begin{table}[htbp]
	\centering
	%	\footnotesize
	\caption{Evaluation benchmarks of language task}
	\label{Evaluation-benchmarks-language-task}
	\begin{tabular}{ll}
		\toprule
		Datasets name & abbreviation \\
		\midrule
		ARC-Easy \cite{clark2018think} &    ARC-E \\
		ARC-Challenge \cite{clark2018think} &   ARC-C  \\
		HellaSwag \cite{zellers2019hellaswag}&   HSWAG  \\
		PIQA \cite{bisk2020piqa}&   PIQA  \\
		WinoGrande \cite{sakaguchi2021winogrande}&   WG  \\
		OpenbookQA \cite{mihaylov2018can}&   OBQA  \\
		BoolQ \cite{clark2019boolq}&  BoolQ   \\
		SciQ \cite{welbl2017crowdsourcing}&   SciQ  \\
		COPA \cite{gordon2012semeval}&    COPA \\
		CommonsenseQA \cite{talmor2018commonsenseqa}&   CSQA  \\
		SocialIQA \cite{sap2019socialiqa}&   SIQA  \\
		MMLU stem \cite{hendrycks2020measuring}&   STEM  \\
		MMLU humanities \cite{hendrycks2020measuring}&   HUM  \\
		MMLU social sci \cite{hendrycks2020measuring}&   SOC  \\
		MMLU other \cite{hendrycks2020measuring}&   OTH  \\
		\bottomrule
	\end{tabular}
\end{table}
For language models evaluation, we adopt the \cite{gao2023framework}  for standardized performance comparisons.

\subsection{Implementation Details}\label{appendix-implement-details}

Consistent with CIFAR, ImageNet and OLMo practices, we use step-wise scheduling for all training procedures.

\paragraph{Vision tasks on CIFAR} CIFAR10/100 are simple benchmark datasets that are widely used for quick and efficient evaluation of deep learning tasks. CIFAR10 consists of a training split of 50000 examples and a test split of 10000 examples. Each example is a 32$\times$32 color image in 10 classes (airplane, automobile, bird, cat, deer, dog, frog, horse, ship, truck).
CIFAR100 comprises 60000 32$\times$32 pixels color images which are divided into 100 classes. Each class contains
600 images and the classes are grouped into 20 super-classes. Each image comes with a ‘fine’ label (the class to which it belongs) and a ‘coarse’ label (the super-class to which it belongs). 
We use the \texttt{torch.optim.SGD} and \texttt{torch.optim.AdamW} API to configure the optimizer. 
For vision tasks in Section \ref{Experiment-results-Experiments-Vision}, we all adopt the SGD optimizer. 
For ablation studies in Section \ref{Ablation-Study123}, we adopt the AdamW optimizer. 
We initialize the learning rate to 1e-1, set the batch size to 128. Each epoch consists of 50000/128 + 1 = 391 steps. 
We leave the weight decay value as 5e-4, and complete the training task on PyTorch 2.4.1+cu118 with RTX 5880 and RTX 3090.

\paragraph{Vision tasks on ImageNet} 
To evaluate the scalability of our schedule on larger-scale dataset, we conduct experiments
on the ImageNet dataset, a dataset that more closely reflects real-world visual recognition challenges. ImageNet consists of approximately 1.28 million training images and 50000 validation images across 1000 classes, which also come with an official dataset split. 
This benchmark, particularly with the ResNet-50 architecture, has been extensively studied and is widely regarded as a standard evaluation setup. 
We adopt the common ResNet50 architecture, and separately train it with different learning rate schedulers. To provide a comprehensive evaluation, we conduct experiments using both the classical SGD optimizer and the popularized AdamW optimizer. For the experiments with SGD, we set the peak learning rate to 0.5 and apply the weight decay factor of 1e-4. For the experiments using AdamW, the peak learning rate is set to 3e-3, with a decoupled weight decay of 0.02. Following default configuration which requires learning rate warmup, we integrate 10\% warmup phases of the total training tokens into all schedules. 
For schedules that inherently include a warmup-like behavior, such as the 1C schedule, where the learning rate initially rises from zero to a target value, we retain their original settings during the ascending phase.  We set the batch size to 2048 and complete these training tasks with 8 NVIDIA A100 GPUs.

\paragraph{Language tasks} For language tasks, we use the OLMo model.
OLMo \cite{groeneveld2024olmo} is a decoder-only transformer-based language model that builds upon the vanilla Transformer architecture \cite{vaswani2017attention}. It incorporates several key improvements inspired by recent large language models (LLMs) such as PaLM \cite{chowdhery2023palm},  the LLaMA family \cite{touvron2023llama}, OpenLM and Falcon \cite{almazrouei2023falcon}. 
Notably, OLMo stands out as a truly open language model, offering full transparency through its open training data, released training/evaluation code, intermediate model checkpoints, and comprehensive training logs, making it a valuable resource for reproducibility and further research.

For the training hyper-parameters on OLMo, we primarily adopt the configuration outlined in OLMo \cite{groeneveld2024olmo}.  
We conduct experiment on H100 and A100.
Following default configuration which requires learning rate warmup, we integrate 10\% warmup phases of the total training tokens into all schedules. 
For schedules that inherently include a warmup-like behavior, such as the 1C schedule, where the learning rate initially rises from zero to a target value, we retain their original settings during the ascending phase. 
Besides, large language models commonly provide multiple scales to accommodate different compute constraints. We adjust both model size and training steps to explore budgets. 
To ensure robust conclusions, we maintain a data-to-parameters ratio (D/N) of 500 across scales. The specific configurations are as follows, 36M-parameter model is trained on 50 billion tokens, 73M-parameter model is trained on 50 billion tokens, 151M-parameter model is trained on 75 billion tokens,  300M-parameter model is trained on 150 billion tokens. 
This scaling strategy allows us to systematically study the relationship between model size, training data, and performance while controlling for the D/N ratio.
The OLMo model is trained using AdamW optimizer. 
The detailed configuration are shown in Table \ref{Configurations-of-OLMo}.

\begin{table}[htbp]
	\centering
	%	\footnotesize
	\caption{Configurations of OLMo}
	\label{Configurations-of-OLMo}
	\begin{tabular}{lllll}
		\toprule
		Model scale & 36M& 73M & 151M &300M \\
		\midrule
		token number    & 50B & 50B  & 75B & 150B \\
		Layers ($L$) & 12 & 16 & 20 & 24 \\
		Hidden dim ($D$) & 512 & 640 & 768 & 1024 \\
		Attention heads & 8 & 10 & 8 & 8 \\
		Inner dim ($H$) & 1280 & 1536 & 2240 & 2688 \\
		Vocab size ($V$) & \multicolumn{4}{c}{100,352} \\
		\bottomrule
	\end{tabular}
\end{table}

Our implementation strictly follows the PyTorch official API, i.e. it is inherited from the base class
\begin{verbatim}
	torch.optim.lr_scheduler.LRScheduler
\end{verbatim}. It maintains full compatibility with all \begin{verbatim}
torch.optim
\end{verbatim}
optimizers. The complete production-ready code is publicly available.

The code of UBA schedule is available at \url{https://github.com/Ttt-answer/UBA.git}.

\subsection{Overall  experiment results for Vision Classification Tasks}\label{Experiments-Vision-Tasks-appendix}

Overall results for the vision classification tasks are presented in Table \ref{Overall-results-for-the-vision-classification}, depicting validation accuracy on vision benchmarks (e.g., CIFAR10/100 and ImageNet) using different architectures ( ResNet18, ResNet34, ResNet50). 
We conduct several independent runs for error bars. Due to computational constraints, we use two independent experimental results for ImageNet-ResNet50. 
It is obvious that our proposed UBA schedule outperform baselines, achieving the highest validation accuracy over both small-scale and large-scale benchmarks across all training budgets.

\begin{table}%[!ht]
	\centering
	%	\vspace{8pt}
	%	\scriptsize
	\footnotesize
	% \linespread{1.2}
	%\renewcommand\arraystretch{1.38}
	%	\centering
	\caption{Generalization accuracy.}\label{Overall-results-for-the-vision-classification}
	%\resizebox{120mm}{16mm}{
		\begin{tabular}{llllll}
			\toprule
			\multirow{2}{*}{Dataset}&\multirow{2}{*}{Network}&\multirow{2}{*}{Method}  & \multicolumn{3}{c}{training budget (epoch(\%)) }\\
			\cmidrule(r){4-6}
			%			\cline{4-8}
			& &  &75 (25\%)  & 150(50\%) & 300(100\%) \\
			\midrule

			\multirow{6}{*}{CIFAR10}&\multirow{6}{*}{ResNet18}&SS & 92.71$\pm$0.4243 &93.45$\pm$0.4822 &93.61$\pm$0.3035\\
			& &CS & 94.21$\pm$0.5116 &  94.24$\pm$0.2419 &95.52$\pm$0.2433\\
			& & CLR  & 92.13$\pm$0.2610 &  92.99$\pm$0.5459 &94.92$\pm$0.3811\\
			& & 1C &  94.23$\pm$0.1838   & 95.03$\pm$0.1353  &95.35$\pm$0.2984\\
			& &BT   &94.18$\pm$0.3546 &  94.79$\pm$0.4605 & 95.68$\pm$0.1168\\
			& & REX & 94.49$\pm$0.3225 &95.10$\pm$0.1929 & 95.45$\pm$0.1350\\%93.87
			\cmidrule(r){3-6}
			& &  UBA(ours)  & \textbf{94.57$\pm$0.3281} &  \textbf{95.26$\pm$0.2150} & \textbf{95.65$\pm$0.1589} \\
			\midrule
			& &  &75 (25\%)  & 150(50\%) & 300(100\%) \\
			\midrule
			\multirow{6}{*}{CIFAR100}&\multirow{6}{*}{ResNet34}&SS &71.09$\pm$0.2676 & 75.34$\pm$0.1802 &77.24$\pm$0.1808\\
			& &CS & 64.12$\pm$0.1808 & 69.84$\pm$0.7580  &74.29$\pm$0.5958\\
			& & CLR  &73.33$\pm$0.5635 & 73.41$\pm$0.2835 &74.74$\pm$0.2266\\
			& & 1C  & 73.38$\pm$0.3227   & 75.04$\pm$0.6382  &  77.86$\pm$0.2239   \\
			& &BT   & 72.75$\pm$0.2611 &75.35$\pm$0.2423 &78.58$\pm$0.2705\\ 
			& & REX& 73.64 $\pm$0.5635 & 75.14$\pm$0.3110 &      78.04$\pm$0.1430     \\
			\cmidrule(r){3-6}
			& &  UBA(ours)   & \textbf{74.60$\pm$0.1508} & \textbf{76.63$\pm$0.2678} & \textbf{78.95$\pm$0.2818} \\
			\midrule

			& & &75 (25\%)  & 150(50\%) & 300(100\%) \\
			\midrule
			
			\multirow{6}{*}{ImageNet}&\multirow{6}{*}{ResNet50}&SS & 74.74$\pm$0.1386 &   77.24$\pm$0.3861 &   78.56$\pm$0.6435 \\
			& &CS & 75.84$\pm$0.2220 & 77.95$\pm$0.2164 & 79.07$\pm$0.0438   \\
			& & CLR & 73.90$\pm$1.4609 & 76.04$\pm$1.6419 &    77.84$\pm$1.3053   \\
			& & 1C    &  75.83$\pm$0.7792   &  77.74$\pm$0.5218  &  78.90$\pm$0.2956 \\
			& &BT  & 75.90 $\pm$0.2786 & 77.81$\pm$0.4624  &   78.86$\pm$ 0.2814 \\
			& & REX & 75.85$\pm$0.8047 & 77.66 $\pm$0.8853 &   78.66 $\pm$0.2828  \\
			\cmidrule(r){3-6}
			& &  UBA(ours)  & \textbf{76.29$\pm$0.4031} & \textbf{78.26$\pm$ 0.1640}  & \textbf{79.39$\pm$0.0170
			} \\
			\bottomrule
		\end{tabular}
	\end{table}

Our analysis reveals the following key findings. (i) On CIFAR10-ResNet18 and CIFAR100-ResNet34 task, UBA shows the strongest performance across all training budgets.
On the large-scale ImageNet benchmark with ResNet50, UBA maintains consistent superiority. 
The consistent improvements on both small-scale and large-scale benchmarks highlight UBA's \textbf{generalizability across scales and data regimes}.
(ii) UBA outperforms baselines not only at 100\% training budget but also at 25\% and 50\% iteration budgets, demonstrating its \textbf{budget efficiency}, i.e.,effectiveness in computation-constrained scenarios.
(iii)Notably, UBA shows robust performance even while the second-best schedule varies depending on both the datasets, architectures and training budgets.
This instability among competing schedules suggests their performance is highly sensitive to datasets-architectures characteristics and training dynamics at different learning phases. Therefore, for practitioners seeking reliable schedules without extensive method selection, UBA provides a default-strong choice. UBA eliminates the need for case-by-case baseline comparison and delivers \textbf{stable superiority and reliability} regardless of datasets, architectures, training budgets and scales.

\subsection{Overall  experiment results for language models}\label{Experiments-language-models}	

For most tasks, we conduct a comprehensive evaluation across six representative baselines to ensure broad coverage. For large model with 300M parameters, we focus our comparison on the top three performing baselines due to the computational tractability.

%\paragraph{Overall result} 
The overall performance of the OLMo models is presented in Table \ref{OLMo-model-appendix}. 
We also plot training curves in Figure \ref{overall-performance-for-language-tasks36},\ref{overall-performance-for-language-tasks73},  \ref{overall-performance-for-language-tasks150} and \ref{overall-performance-for-language-tasks300}, showing training and validation losses along with downstream evaluation results across model sizes ranging from 36M to 300M parameters and training data scales from 50B to 150B tokens.
As shown, UBA schedule consistently achieves lower training and validation losses while delivering superior downstream performance.

\begin{figure}[!ht]
	\centering
	{
		\includegraphics[width=0.9\linewidth]{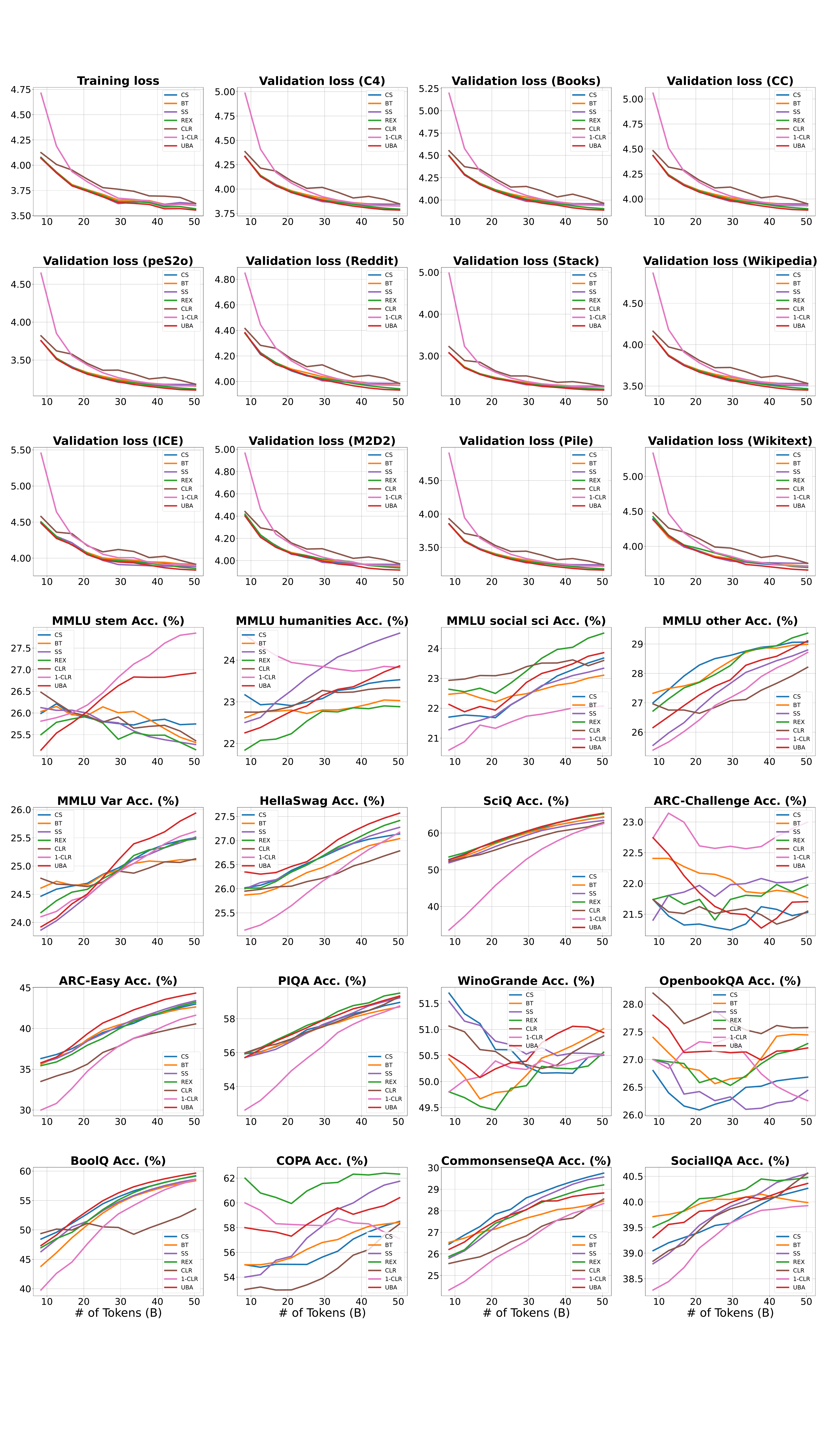}
		\label{}
	}
	%	\hfill
	\caption{Overall performance for language tasks on OLMo-36M.}
	\label{overall-performance-for-language-tasks36} 
\end{figure}

\begin{figure}[!ht]
	\centering
	\includegraphics[width=0.9\linewidth]{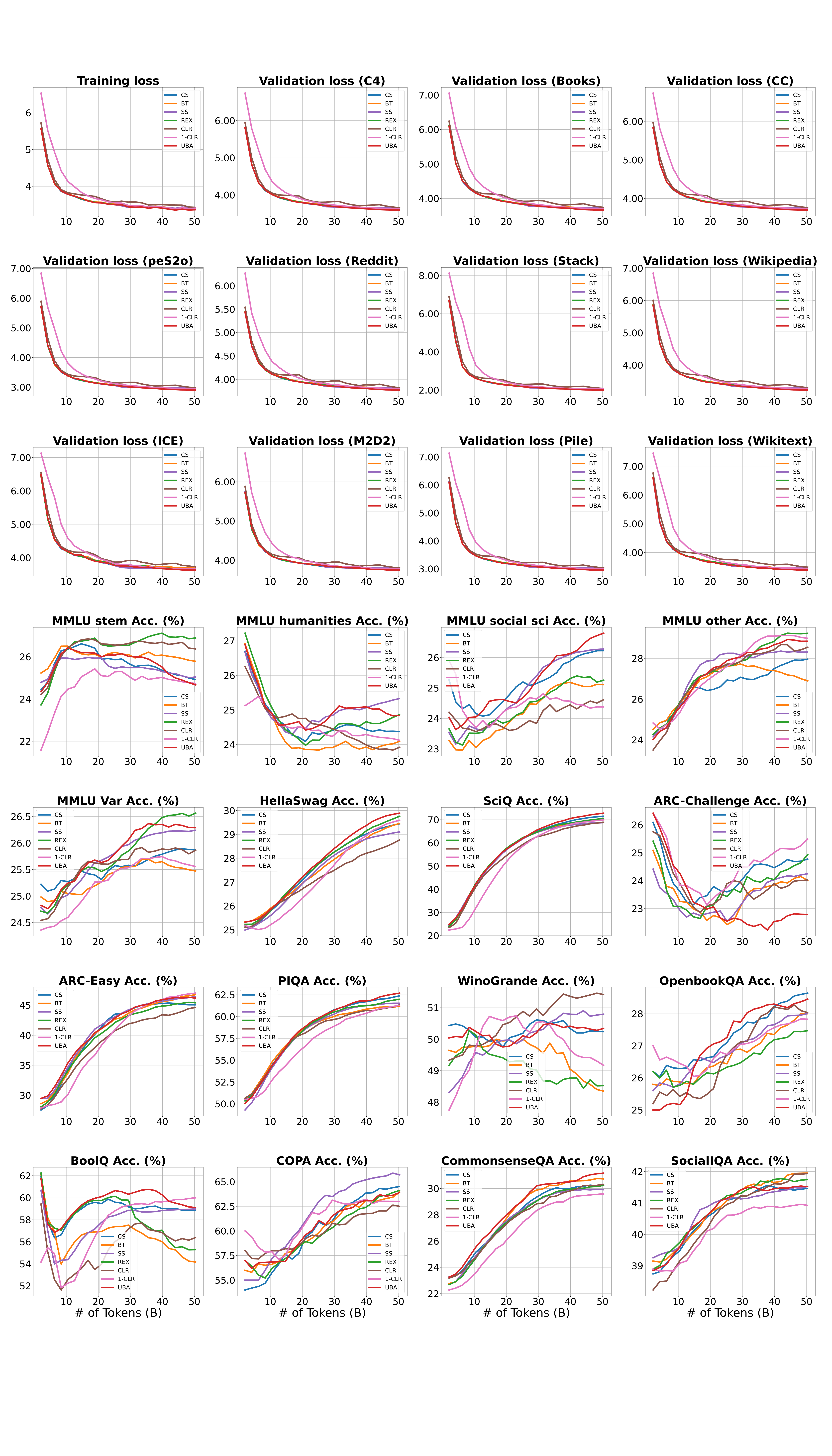}
	\caption{Overall performance for language tasks on OLMo-73M.}
	\label{overall-performance-for-language-tasks73} 
\end{figure}

\begin{figure}[!ht]
	\centering
	{
		\includegraphics[width=0.9\linewidth]{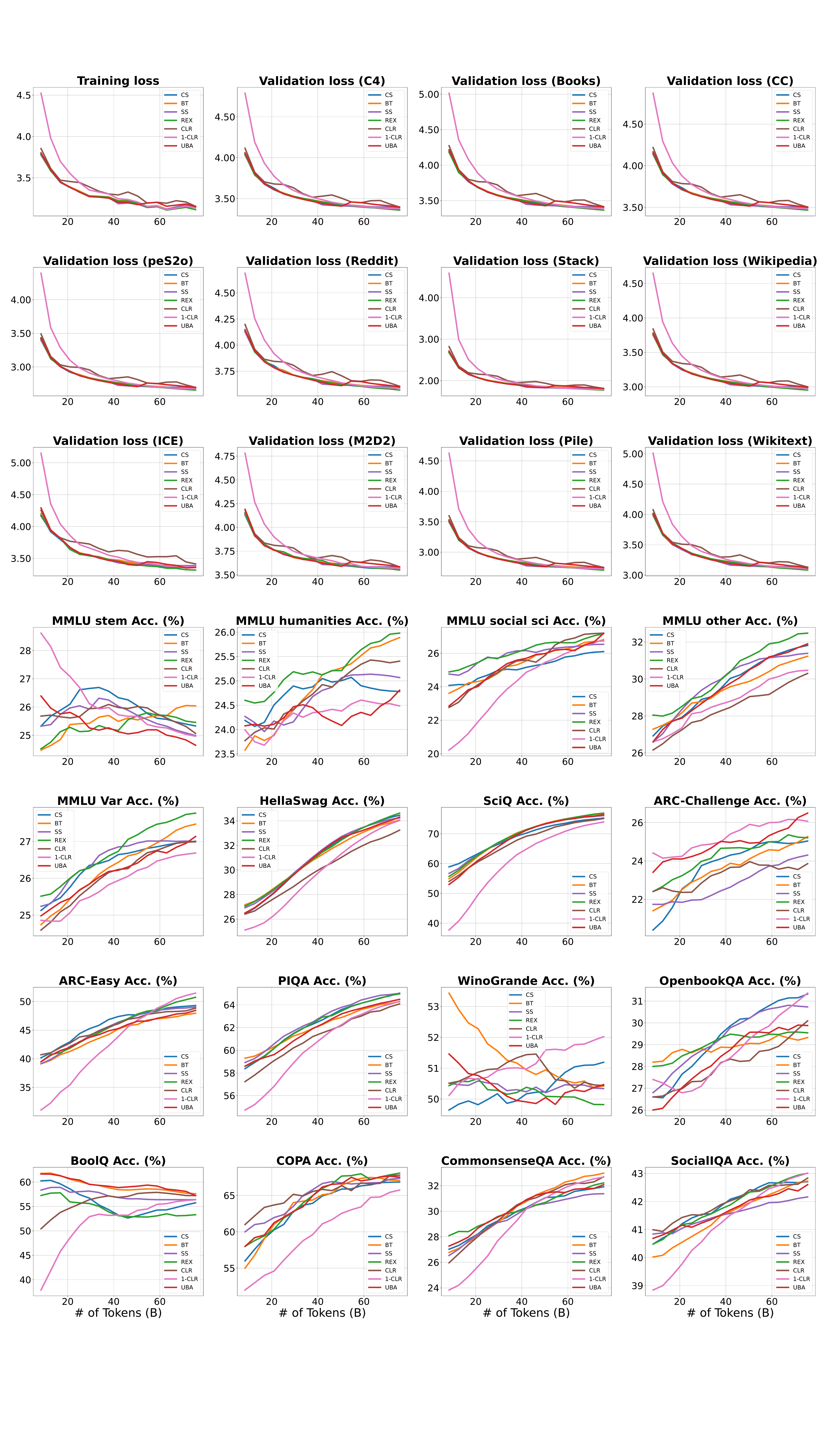}
		\label{}
	}
	%	\hfill
	\caption{Overall performance for language tasks on OLMo-150M.}
	\label{overall-performance-for-language-tasks150} 
\end{figure}

\begin{figure}[!ht]
	\centering
	{
		\includegraphics[width=0.9\linewidth]{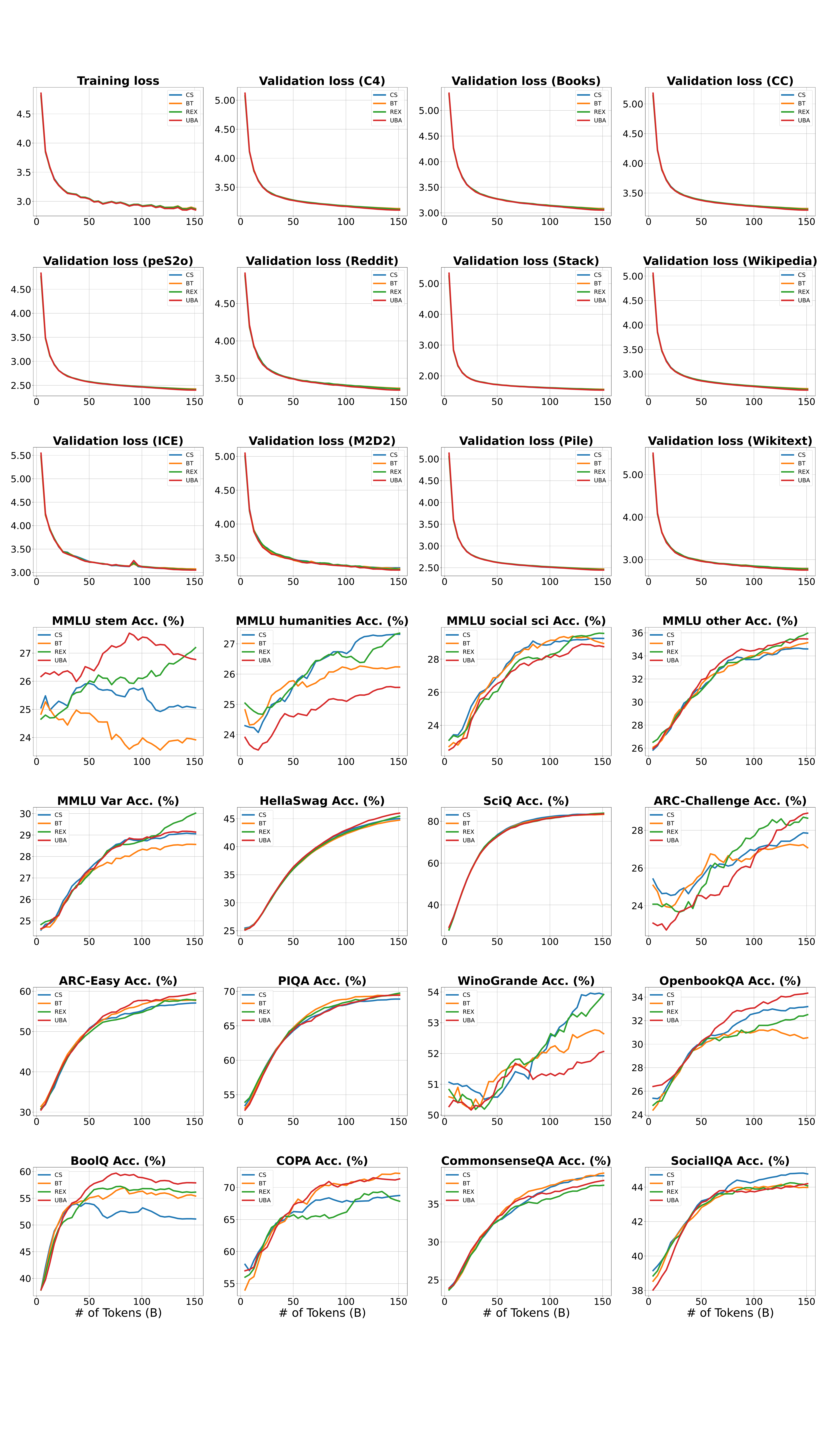}
		\label{}
	}
	%	\hfill
	\caption{Overall performance for language tasks on OLMo-300M.}
	\label{overall-performance-for-language-tasks300} 
\end{figure}

From Table \ref{OLMo-model-appendix}, UBA achieves state-of-the-art performance on approximately 50\% of the benchmarks across all scales while the second-best schedule varies. 
Furthermore, it achieves \textit{consistent superior average performance} among all baselines. 
It highlights the stable superiority and reliability of UBA, providing a \textit{default-strong choice}. 
Moreover, it demonstrates significant improvements in SciQ-73M($+1.7$) and ARC-E-300M($+2.63$), highlighting its ability to enhance generalization across diverse benchmarks. 
Besides, in Figure \ref{Training-dynamics-OLMo}, UBA consistently achieves lower training loss and validation loss throughout the training, indicating the \textit{efficient training ability} and downstream performance enhancement. 
Notably, while task-specific tuning of $\varphi$ can further improve model performance, we intentionally fix its value across all tasks and architectures to ensure fair evaluation. Remarkably, even with this universal $\varphi$ setting, UBA consistently outperforms baselines on diverse benchmarks, demonstrating  \textit{inherent robustness} to varying benchmarks and model scales.

	\begin{table}[htbp]
	\centering
	\footnotesize
	\caption{Performance comparison on OLMo model}
	\label{OLMo-model-appendix}
	\begin{tabular}{p{0.3cm}p{0.9cm}p{0.5cm}p{0.8cm}p{0.6cm}p{0.5cm}p{0.95cm}p{1cm}p{0.55cm}p{0.55cm}p{0.55cm}p{0.55cm}p{0.55cm}}
		\toprule
		 &  &    \multicolumn{11}{c}{Benchmark(accuracy \%) }     \\
		\cmidrule(r){3-13}
		Size&Sched.  & PIQA & HSWAG & OBQA & SciQ & ARC-E&ARC-C & COPA& SIQA & SOC&  OTH&  Avg. \\
		\midrule
		\multirow{7}{*}{\rotatebox{90}{36M}} &CS	& 59.74&	27.36&	26.80&	66.50&	44.74&	21.74&	60.00&	40.58&	24.33&	29.09&	40.09\\
		&BT	&59.41&	27.33&	27.40&	67.20&	43.68&	21.40&	59.00&	39.82&	23.48&	29.13&	39.79\\
		&SS	&60.12&	27.63&	27.20&	65.30&	45.26&	22.41&	63.00&	40.89&	23.87&	29.62&	40.53\\
		&REX	&60.17&	27.82&	\textbf{27.80}&	68.10&	45.09&	22.41&	62.00&	40.63&	\textbf{25.18}&	30.02&	40.92\\
		&CLR	&\textbf{61.15}&	27.21&	27.60&	67.50&	42.11&	22.07&	62.00&	\textbf{41.45}&	24.29&	29.38&	40.48\\
		&1C	&60.17&	27.91&	25.80&	67.40&	43.68&	\textbf{23.41}&	55.00&	40.02&	22.26&	29.86&	39.55\\
		\midrule
		&UBA	&60.39&	\textbf{27.98}&	27.40&	\textbf{68.20}&	\textbf{45.79}&	21.74&	\textbf{63.00}&	40.69&	24.33&	\textbf{30.14}&	\textbf{40.97}\\
		\midrule
		\multirow{7}{*}{\rotatebox{90}{73M}} &CS	&63.06&	29.68&	28.80&	72.60&	45.09&	25.08&	65.00&	41.56&	26.24&	28.17&	42.53\\
		&BT	&61.70&	29.79&	28.00&	71.40&	47.54&	23.41&	\textbf{66.00}&	\textbf{41.97}&	25.05&	26.48&	42.13\\
		&SS	&61.53&	29.30&	28.20&	69.40&	47.19&	24.41&	65.00&	41.76&	26.32&	28.33&	42.14\\
		&REX	&62.46&	\textbf{30.22}&	27.60&	72.20&	45.09&	26.09&	65.00&	41.81&	25.52&	\textbf{29.34}&	42.53\\
		&CLR&	62.30&	29.38&	27.80&	70.40&	45.44&	24.08&	62.00&	42.02&	25.05&	29.05&	41.75\\
		&1C	&61.81&	29.90&	27.80&	71.10&	\textbf{47.54}&	\textbf{26.42}&	63.00&	40.79&	24.37&	28.81&	42.15\\
		\midrule
		&UBA	&\textbf{63.17}&	30.09&	\textbf{28.80}&	\textbf{74.30}&	45.79&	22.74&	65.00&	41.45&	\textbf{27.13}&	28.85&	\textbf{42.73}\\
		\midrule
		\multirow{7}{*}{\rotatebox{90}{150M}} &CS	&\textbf{66.00}&	35.19&	32.00&	76.60&	49.82&	25.42&	67.00&	42.84&	26.24&	32.27&	45.34\\
		&BT	&64.85&	34.95&	29.80&	78.20&	48.95&	26.42&	67.00&	42.99&	26.83&	31.87&	45.19\\
		&SS	&65.45&	34.76&	30.60&	77.10&	49.65&	24.75&	68.00&	42.37&	26.58&	31.59&	45.09\\
		&REX	&65.56&	\textbf{35.72}&	29.40&	78.30&	52.46&	25.08&	\textbf{69.00}&	43.30&	27.68&	32.64&	45.91\\
		&CLR	&65.07&	34.76&	31.60&	77.20&	50.70&	25.08&	68.00&	\textbf{43.71}&	27.30&	31.23&	45.47\\
		&1C&	65.29&	35.16&	\textbf{32.80}&	76.30&	\textbf{53.16}&	25.75&	67.00&	43.45&	27.64&	30.62&	45.72\\
		\midrule
		&UBA	&65.23&	35.50&	29.80&	\textbf{78.30}&	50.35&	\textbf{27.42}&	67.00&	43.50&	\textbf{29.29}&	\textbf{32.72}&	\textbf{45.91}\\
		\midrule
		\multirow{3}{*}{\rotatebox{90}{300M}} &CS&	68.88&	45.32&	33.40&	83.20&	57.19&	27.76&	69.00&	\textbf{44.58}&	29.25&	34.57&	49.32\\
		&BT	&69.64&	45.21&	30.80&	83.70&	57.37&	26.42&	72.00&	43.96&	28.66&	35.49&	49.33\\
		&REX	&\textbf{70.18}&	46.30&	33.00&	\textbf{84.40}&	57.72&	28.43&	67.00&	43.71&	\textbf{29.50}&	\textbf{36.74}&	49.70\\
		\midrule
		&UBA	&69.48&	\textbf{46.44}&	\textbf{34.60}&	83.90&	\textbf{60.35}&	\textbf{29.10}&	\textbf{72.00}&	44.42&	28.53&	35.41&	\textbf{50.42}\\
		\bottomrule
	\end{tabular}
\end{table}

\subsection{Performance across different optimizers}\label{cross-optimizer-appendix}

Our scheduling strategy originates from the solution to a optimization problem under gradient descent dynamics. 
While modern optimizer (e.g., AdamW \cite{loshchilov2017decoupled}) introduce additional momentum and adaptive mechanisms, they maintain the fundamental property of performing gradient-based updates. 
To verify the performance of schedule, we conduct cross-optimizer ablation studies. 
We evaluate the model on CIFAR100 and ImageNet datasets using AdamW optimizer. 
%We vary values of $\varphi$ ranging from 0.25 to 10 while keeping other hyper-parameters fixed. 
For AdamW, we use a learning rate of $0.003$ and weight decay of $0.0001$. 
We report the validation accuracy for each configuration. 
The results are summarized in Table \ref{Cross-optimizer-table}. 

\begin{table}%[!ht]
	\centering
	%	\vspace{8pt}
	%	\scriptsize
	\footnotesize
	% \linespread{1.2}
	%\renewcommand\arraystretch{1.38}
	%	\centering
	\caption{Cross-optimizer ablation studies. We list validation accuracy for vision classification tasks across SGD and AdamW. }\label{Cross-optimizer-table}
	%\resizebox{120mm}{16mm}{
		\begin{tabular}{llllll}
			\toprule
			Dataset&\multirow{2}{*}{Optimizer}&\multirow{2}{*}{Schedule}  & \multicolumn{3}{c}{training budget (epoch(\%)) }\\
			\cmidrule(r){4-6}
			-network& &   &30 (25\%)  & 60(50\%) & 120(100\%)\\
			\midrule
			\multirow{10}{*}{CIFAR100}& \multirow{6}{*}{SGD}&SS &70.85 & \textit{75.58} &77.28\\
			& &CS &63.88 & 68.84  &70.43\\
			& & CLR  &72.63 & 73.81 &74.80\\
			& & 1C   & \textit{73.72}   & 75.53  &  77.90    \\
			& &BT   & 72.53 &75.40 &78.49\\ 
			& & REX& 73.12 & 75.48 &      \textit{77.99}     \\
			\cmidrule(r){3-6}
			\multirow{4}{*}{-ResNet34}& &  UBA(ours)$\varphi=5$   & \textbf{74.57} & \textbf{76.68} & \textbf{78.97} \\
			\cmidrule(r){2-6}
			& \multirow{6}{*}{AdamW}&SS&    64.01  & 70.93 &73.65 \\
			& &CS&   62.56  &  68.91 & 72.98\\
			& & CLR &   62.87 &  70.13  & 73.22 \\
			& & 1C &   64.27   &  71.60  & 74.06  \\
			& &BT    &  64.01  & 71.44  & \textit{74.34} \\ 
			& & REX&      \textbf{65.19} & \textit{71.72}  &       74.08    \\
			\cmidrule(r){3-6}
			& &  UBA(ours)$\varphi=0.5$      & \textit{64.44}  & \textbf{72.10}  &  \textbf{74.48} \\
			\midrule
			
			& &  &75 (25\%)  & 150(50\%) & 300(100\%) \\
			\midrule
			\multirow{10}{*}{ImageNet}&\multirow{6}{*}{SGD} &SS &  74.84  &  76.96 &  78.10\\
			& &CS & \textit{75.99} & \textit{77.79 }   & \textit{79.10} \\
			& & CLR  & 72.86 &  74.88  & 76.91 \\
			& & 1C  &  75.28 &  77.37  & 78.79  \\
			& &BT  &   75.71 &  77.48 &  78.66 \\ 
			& & REX &  75.28 &  77.03 &     78.46      \\
			\cmidrule(r){3-6}
			\multirow{4}{*}{-ResNet50}& &  UBA(ours)$\varphi=5$ &  \textbf{76.00}   &  \textbf{77.99} &  \textbf{79.32} \\
			\cmidrule(r){2-6}
			&\multirow{6}{*}{AdamW}&SS & 74.64 &   77.51 &  79.01  \\
			& &CS & 75.68 & 78.10 & 79.04      \\
			& & CLR  & 74.93 & 77.20 &    78.76    \\
			& & 1C   & 76.38 &  78.11  & 79.21  \\
			& &BT  & 76.10 & 78.14  &    \textit{79.05}   \\
			& & REX & \textit{76.42} & \textit{78.28} &    78.86   \\
			\cmidrule(r){3-6}
			& &  UBA(ours) $\varphi=0.5$ & \textbf{76.57} & \textbf{78.37}  & \textbf{79.38} \\
			\bottomrule
		\end{tabular}
	\end{table}

As shown in Table \ref{Cross-optimizer-table}, our UBA schedule achieves state-of-the-art validation accuracy on both SGD and AdamW optimizers, consistently outperforming baselines across CIFAR100-ResNet34 and ImageNet-ResNet50 benchmarks. 
This demonstrates that although our schedule is theoretically derived from standard gradient descent dynamics, it generalizes effectively to modern optimizers like AdamW — despite their additional momentum and adaptive mechanisms — highlighting its broad applicability.

\subsection{Parameter analysis of $\varphi$}\label{cross-d-appendix}

\begin{table}[!ht]
	\centering
	% \vspace{8pt}
	%	\scriptsize
	\footnotesize
	% \linespread{1.2}
	%\renewcommand\arraystretch{1.38}
	%	\centering
	\caption{Cross-$\varphi$ ablation studies.}\label{Cross-d-table-appendix}
	%\resizebox{120mm}{16mm}{
		\begin{tabular}{llllllllll}
			\toprule
			\multirow{2}{*}{Dataset}&\multirow{2}{*}{Network}& \multirow{2}{*}{Epoch}  &\multirow{2}{*}{Optimizer}&   \multicolumn{6}{c}{UBA }\\
			\cmidrule(r){5-10}
			& && &$\varphi$=0.25&  $\varphi$=0.5 & $\varphi$=1  & $\varphi$=2.5 & $\varphi$=5 & $\varphi$=10\\
			\midrule
			\multirow{2}{*}{CIFAR100}&  \multirow{2}{*}{ResNet34} & \multirow{2}{*}{ 60} &SGD &75.10 & 75.89 &  76.31 & 76.40  &  \textbf{76.68} & 75.64\\
			\cmidrule(r){5-10}
			&   &   &AdamW & \textbf{72.10} &  71.34  & 71.54  &  71.21   & 70.40  & 69.01 \\
			\midrule
			\multirow{2}{*}{ImageNet}&  \multirow{2}{*}{ResNet50} & \multirow{2}{*}{300} &SGD&   78.35 & 78.75 &   78.95 & 79.24 & 79.32 &\textbf{79.40} \\
			\cmidrule(r){5-10}
			&   &   &AdamW &   79.26 &  \textbf{79.38} & 79.12 & 79.10 &  78.72 & 78.47\\
			\bottomrule
		\end{tabular}
	\end{table}

%	Building upon our cross-optimizer ablation study \ref{cross-optimizer-appendix}
	In this subsection, we perform a sensitivity analysis of the key parameter $\varphi$ in equation \eqref{fitted-function} , which controls the variation speed of learning rate. 
		Our experiments systematically evaluate $\varphi \in \{ 0.25, 0.5, 1.0, 2.5,5,10 \}$ across different optimization scenarios. 
	The results are plotted in Figure \ref{Cross-varphi-figure} and listed in Table \ref{Cross-d-table-appendix}).

\begin{figure}[!ht]
	\centering
	\subfigure[]{
		\includegraphics[width=0.8\linewidth]{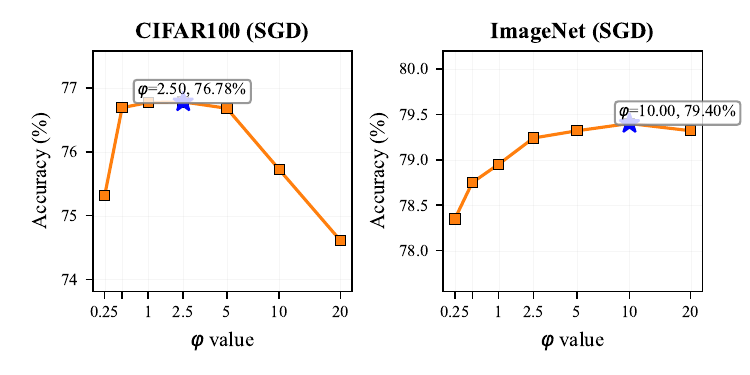}
		\label{}
	}
	\hfill
	\subfigure[]{
		\includegraphics[width=0.8\linewidth]{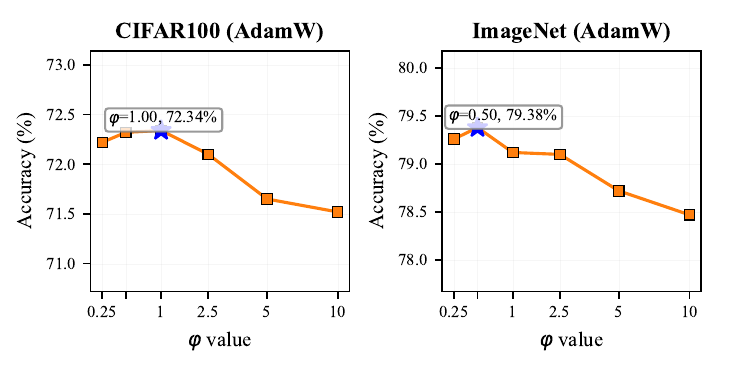}
		\label{}
	}
	\caption{ Visualization of cross-$\varphi$ performance.}
	\label{Cross-varphi-figure}
\end{figure}	

\paragraph{Observations}
%We find two phenomena.
%Phenomenon(i): the UBA with SGD and AdamW both show consistent superiority. It outperforms all baselines across all training budgets.
%Besides, with optimizer changed, UBA obtain comparable accuracy on the same dataset-architecture, which show the robust performance of UBA that can exhaustively exploit the model's capacity regardless of optimizer variants.
From the experimental results, we find an interesting dichotomy of $\varphi$. 
Phenomenon(a): By choose proper $\varphi$, UBA demonstrates consistent performance gains over all baselines across the full spectrum of training budgets (25\%, 50\% and 100\%), irrespective of optimizer choice (SGD/AdamW). This suggests the effectiveness of UBA is orthogonal to specific optimization strategies. 
Besides, with optimizer changed, UBA maintains comparable accuracy on the same dataset-architecture, which shows that UBA can exhaustively exploit the model's capacity regardless of optimizer variants.
Phenomenon(b): As shown in Figure \ref{Cross-varphi-figure} and Table \ref{Cross-d-table-appendix}, on CIFAR100 dataset, the best performance for AdamW is achieved with $\varphi =1$, whereas SGD performs optimally with $\varphi =2.5$. 
On ImageNet dataset, the best performance for AdamW is achieved with $\varphi =0.5$, whereas SGD performs optimally with $\varphi =10$. 
Our experiments reveal a notable trend: when using AdamW, a smaller $\varphi$ yields better performance, whereas a larger $\varphi$ is preferred for SGD.

\paragraph{Analysis}
Furthermore, we intend to explain the dual behavior of $\varphi$ through theoretical analyses (Proposition \ref{Proposition-model} and Theorem \ref{Proposition3}). 
We attribute Phenomenon (a) to UBA's optimal scheduling dynamics, governed by the $\varphi$ value. 
Moreover, Phenomenon (b) reveals a fundamental dichotomy in $\varphi$ value selection between optimizers.
We attribute this phenomenon(b) to the preconditioning effect of AdamW. Unlike SGD, AdamW adapts the learning rate by scaling gradients with the square root of the second moment estimate $\sqrt{v_t}$. In regions where gradients $g_t$ are large, the surrounding landscapes are steep. The denominator $\sqrt{v_t}$ is also large, effectively reducing the learning rate. 
This adaptive behavior mimics preconditioning, implicitly lowering the condition number of the optimization landscape. 

Our theoretical framework (Proposition \ref{Proposition-model}) establishes a connection between parameter $\varphi$ and the condition number of the landscape around local minima: smaller $\varphi$ is favored for well-conditioned problems (low condition number), while larger $\varphi$ is beneficial for large condition number scenarios. 
Since preconditioning effect of AdamW naturally mitigates ill-conditioning, the schedule with smaller $\varphi$ is preferred. 
In contrast, SGD lacks such adaptive mechanisms, thus a larger
$\varphi$ is more suitable for this situation. 
This alignment between theory and experiment underscores the importance of tailoring  $\varphi$ to the optimizer's characteristics. 
\textit{Thus, we recommend selecting higher values of $\varphi$ when facing challenging optimization difficulty, while employing lower $\varphi$ values in scenarios with smoother optimization difficulty.}

\subsection{Performance across different initial learning rates}\label{cross-lr-appendix}

Since each schedule will have a different optimal learning rate in practice, to thoroughly address this concern, we conducted additional experiments to evaluate the performance of all schedules with different learning rates. 

We conducted auxiliary experiments on CIFAR100-ResNet34 for 3 runs. For computational efficiency, these experiments were run for slightly fewer epochs than the main experiments, with results reported over three independent runs. 
Through current observation and search, we identified 0.1 as the proper initial learning rate for most schedules. We use 0.01 and 0.001 to confirm this choice maximizes validation accuracy across methods. Additional smaller or larger values such as 0.3 or 0.0001 were explored in preliminary studies, showing consistent trends but lower peak accuracy. Thus we do not choose them for comparison.

The experimental results are listed in the following table \ref{Cross-lr-table}. We report results of different base learning rate and compare the performance with optimal learning rate of each schedule.

\begin{table}%[!ht]
	\centering
	%	\vspace{8pt}
	%	\scriptsize
	\footnotesize
	% \linespread{1.2}
	%\renewcommand\arraystretch{1.38}
	%	\centering
	\caption{Performance across different initial learning rates. }\label{Cross-lr-table}
	%\resizebox{120mm}{16mm}{
		\begin{tabular}{llllll}
			\toprule
			\multirow{2}{*}{Dataset}&\multirow{2}{*}{Network}&\multirow{2}{*}{Schedule}  & \multicolumn{3}{c}{Validation Accuracy}\\
			\cmidrule(r){4-6}
			& &   &lr=0.1  & lr=0.01& lr=0.001\\
			\midrule
			\multirow{6}{*}{CIFAR100}& \multirow{6}{*}{ResNet34}&SS &77.28$\pm$0.2401 &	73.48$\pm$0.4267 &	67.98$\pm$0.1473\\
			& &CS &74.11$\pm$0.7690	&74.66$\pm$0.5888&	69.18$\pm$0.3736\\
			& & CLR  &74.77$\pm$0.1861&	74.26$\pm$0.3523&	68.84$\pm$0.4613\\
			& & 1C   & 77.73$\pm$0.1868&	75.80$\pm$0.1701&	69.20$\pm$0.2152    \\
			& &BT   & 78.53$\pm$0.0751&	77.11$\pm$0.1026&	68.84$\pm$0.4613\\ 
			& & REX& 78.11$\pm$0.1124&	77.40$\pm$0.4477&	69.88$\pm$0.1253    \\
			\cmidrule(r){3-6}
			& &  UBA(ours)   & 79.02$\pm$0.2838	&77.59$\pm$0.3522&	70.50$\pm$0.0862 \\
			
			\bottomrule
		\end{tabular}
	\end{table}

\subsection{Performance across different periods}\label{cross-K-appendix}

Our schedule has a  periodic phase-based learning rate adjustment setting, where the learning rate at the $k$-th phase is dynamically determined by the $k$-th local minimum of the loss landscape. 
In the original conception, this design captures the optimization dynamics around successive local minima, with the hyper-parameter  $\varphi$ related to the difficulty of each phase. Specifically, $\varphi$ controls the trade-off between exploration (large steps) and exploitation (small steps) within each phase. 

To validate our scheduling strategy, we conduct experiments by varying $K$ (see detailed results in Appendix \ref{cross-K-appendix}). 
We observe that when roughly setting $\varphi$ as a constant in each phase, performance degrades as training progresses. This suggests that a fixed $\varphi$ strategy cannot adapt to the changing landscape of multi-phase scheduling. 
Since the optimization difficulty should decrease during training process, we drop $\varphi$ as phase increases. 
By finely evaluate $\varphi$ in each phase, it achieves performance improvement, confirming the need for dynamic control.
It makes sense since multi-phase captures the dynamic features of the loss surface more finely, it needs careful selection for $\varphi$, where $\varphi$ reflects optimization difficulty. 
These results highlight a fundamental trade-off: when $\varphi$ values per phase for multi-phase scheduling can be finely evaluated, multi-phase scheduling better captures the loss landscape's non-stationary behavior than single-phase scheduling.  However, selecting optimal $\varphi$ values per phase for multi-phase remains non-trivial, which motivates future work on automated landscape-aware $\varphi$ tuning.

\begin{table}[!ht]
	\centering
	% \vspace{8pt}
	%	\scriptsize
	\footnotesize
	% \linespread{1.2}
	%\renewcommand\arraystretch{1.38}
	%	\centering
	\caption{Performance on multi-phase version UBA.}\label{Cross-K-table}
	%\resizebox{120mm}{16mm}{
		\begin{tabular}{lllllll}
			\toprule
			\multirow{2}{*}{Dataset}&\multirow{2}{*}{Network}  &\multirow{2}{*}{$\varphi$}& \multirow{2}{*}{Original}  &\multicolumn{3}{c}{Multi-phase }\\
			\cmidrule(r){5-7}
			& & &   & K=4  & K=6 & K=8\\
			\midrule
			\multirow{2}{*}{CIFAR100}&  \multirow{2}{*}{ResNet34} & 5 & \multirow{2}{*}{76.68} & 76.54 &  75.37 & 75.05  \\
			\cmidrule(r){5-7}
			&   & $5\times 0.8^{k}$  & & 76.31 &  76.60  & 76.45  \\
			\bottomrule
		\end{tabular}
	\end{table}

\newpage
\section{Related work}\label{related-work-appendix}

\paragraph{Budgeted training}
Researchers face significant challenges in achieving optimal model performance under fixed hardware and limited time.  
To address this, budgeted training has emerged as a broad research domain focusing on optimizing performance while minimizing resource usage. 
This research spans multiple aspects, including computation efficiency, model compression, training stability, convergence improvement, and optimization \cite{shen2023efficient}. By strategically integrating efficient training approaches, budgeted training advances cost-effective model development, offering pathways to achieve high performance under realistic constraints.

%Researchers face significant challenges in achieving optimal model performance under fixed hardware and limited time.  
%To address these challenges, the concept of budgeted training has gained increasing attention,  exploring techniques including computation efficiency, model compression, training stability and convergence improvement \cite{shen2023efficient}. 
%It focuses on: (i) emphasizing the allocation of resources, such as the balance between the model size and the amount of data \cite{arazo2021important,killamsetty2021grad,chen2021data,hoffmann2022training}.
%(ii) finding optimal configurations or improving performance within the given compute or time budget, such as memory efficiency and computation reduction \cite{li2020train,izsak2021train,pan2022budgeted,geiping2023cramming}, optimization learning rate schedules \cite{smith2017cyclical,smith2019super,li2019budgeted,chen2022rex}, batch size \cite{geiping2023cramming} and other weight averaging method \cite{izsak2021train,kaddour2022stop}. 

Budgeted training can be divided from three main perspectives: data, model, and optimization. 
From the data perspective, budgeted training emphasizes the allocation of resources between the dataset and the algorithms that process it, such as the balance between the model size and the amount of data \cite{arazo2021important,killamsetty2021grad,chen2021data,hoffmann2022training}. 
From the model perspective, budgeted training searches the optimal number of model parameters within the given compute budget \cite{li2020train,izsak2021train,pan2022budgeted,geiping2023cramming}. 
Beyond data and model perspective, optimization based methods guide budgeted training based on learning rate schedules \cite{smith2017cyclical,smith2019super,li2019budgeted,chen2022rex}, batch size \cite{geiping2023cramming} and other weight averaging method \cite{izsak2021train,kaddour2022stop}.

Among these three aspects, optimization-based methods are particularly aligned with the objectives of budgeted-iteration training. 
In this domain, learning rate scheduling based methods are particularly aligned with the objectives of budgeted-iteration training. 
Smith et al. \cite{smith2017cyclical}  propose a new learning rate schedule, named cyclical learning rates (CLR). It improves accuracy in fewer iterations without tuning and is relevant to super-convergence phenomenon \cite{smith2019super}. 
Li et al. \cite{li2019budgeted} introduce an alternative setting of existing learning rate schedules for budgeted training. 
Chen et al. \cite{chen2022rex} propose the reflected exponential schedule (REX) via a profile and sampling fashion. 
Learning rate based approaches achieve robust and high-performing results under various lengths of training iterations, which corresponds to our purpose in this work, i.e. budgeted-iteration training. Besides, this approach is plug-and-play, requiring no substantial alterations to the underlying model structure, making it readily adaptable to various deep network frameworks. 
Therefore, we explore the proper learning rate schedule to achieve budgeted-iteration training.

\paragraph{Learning rate schedule} 
The learning rate plays a pivotal role in controlling the non-convex optimization process during network training. 
Generally, the learning rate is gradually adjusted progressively alongside the iterations, which can be represented as  $\eta= g(t)\eta_0$, where $g(t)$ is the schedule function which control the variation of learning rate, $t$ is the current iteration and $\eta_0$ is the initial learning rate.

Learning rate schedules have been extensively studied to improve training efficiency and model performance.
%Most works concentrate on designing the schedule function $g(t)$. 
The common scheme is the step decay schedule. Its schedule function can be represented as $g(t)=  d_{\lfloor t/t_s\rfloor}$, where $t_s$ is the predefined decaying step and $d_i (d_i \ge d_{i+1})$ is predefined decaying scalar.
A typical instance decreases the learning rate by a decaying scalar 0.1 after $50\%$ epochs and by a decaying scalar 0.01 after $75\%$ epochs \cite{he2016deep}.
Then Loshchilov et al. \cite{loshchilov2016sgdr} observe that sharp decreases may prevent models from escaping local minima. 
Thus they propose cosine schedule function as $\frac{1}{2}(1+\cos(\frac{t\pi}{t_s}))$, where $t_s$ is the predefined stage to restart the learning rate schedule, which has proven effective in transformer training. In addition, cosine schedule is a commonly-used schedule nowadays. 

Pan et al. \cite{pan2021eigencurve} propose Eigencurve, and demonstrate that minimax-optimal convergence can be achieved for quadratic objectives when the Hessian's eigenvalue distribution exhibits substantial skewness. Pan et al. \cite{pan2022extremebert} further introduce a learning rate schedule called elastic step decay as $\eta_t=\eta_0 /2^{k}$, if $t \in [(1-r^k)T, (1-r^{k+1})T) $ to accelerate the pre-training for Berts, where $r$ and $k$ are hyper-parameters that control learning rate interval. Defazio et al. \cite{defazio2023optimal} developed the theoretically-grounded framework for learning rate scheduling, establishing optimal linear decay schedules and refinements through rigorous mathematical derivation. Their work additionally provides the most extensive empirical evaluation of scheduling approaches to date. 
Hu et al. \cite{hu2024minicpm} propose a Warmup-Stable-Decay(WSD) for large model pretraining and promoting continuous training.  WSD consists of three consecutive phases: warmup, stable constant, and decay. 
Luo et al. \cite{luo2025multi} discover a new schedule that outperforms the cosine schedule, resembling WSD but with lower final loss.

\paragraph{Adaptive learning rate} 
Compared to schedule methods, adaptive learning rate based methods focus on the learning rate adaptation based on local loss landscape statistics. 
Typical methods includes AdaGrad \cite{duchi2011adaptive}, Adadelta \cite{zeiler2012adadelta}, RMSprop \cite{hinton2012neural} and Adam \cite{kingma2014adam}. These methods have been shown stabilizing the training process while effectively optimizing learning rate. 
Then Loshchilov et al. \cite{loshchilov2017decoupled} propose AdamW optimizer which improves the performance in the transformer training.
Recently, Chen et al. \cite{chen2024symbolic} discovers Lion optimizer for more memory-efficiency. 
Xie et al. \cite{xie2024adan} propose the Adan optimizer which implements 50\% faster training than Adam and AdamW on commonly-used networks. 
Despite these improvements, some studies suggest that adaptive methods may under-perform momentum SGD in terms of generalization \cite{wilson2017marginal,li2019budgeted}, a critical factor for budgeted-iteration training.

Learning rate design is not only significant in general training, but also critical in budgeted-iteration training which still remains a topic of debate. 
Some analyses advocate for small, constant learning rates to ensure stability and convergence \cite{du2018gradient}. 
On the contrast, one prevailing hypothesis suggests that large learning rates may facilitate crossing over sharp local minima in the optimization landscape \cite{zhang2024exploring}. 
Despite the lack of comprehensive theoretical explanations, a range of learning rate schedules inspired by the above analyses as heuristic guidelines has been widely adopted in practice, using variable learning rates to budgeted-iteration training \cite{li2019budgeted,chen2022rex}. 
In this work, we explore learning rate schedule from optimization problem tailored to budgeted-iteration training, aiming to balance iteration budget constraints and generalization.

%%%%%%%%%%%%%%%%%%%%%%%%%%%%%%%%%%%%%%%%%%%%%%%%%%%%%%%%%%%%

\end{document}